\algnewcommand{\Is}{ \textbf{ is } }
\def\eqref#1{equation~\ref{#1}}
\def\Eqref#1{Equation~\ref{#1}}
\def\1{\bm{1}}
\def\rx{{\textnormal{x}}}
\def\rmK{{\mathbf{K}}}
\def\vt{{\bm{t}}}
\def\vy{{\bm{y}}}
\def\mH{{\bm{H}}}
\DeclareMathAlphabet{\mathsfit}{\encodingdefault}{\sfdefault}{m}{sl}
\SetMathAlphabet{\mathsfit}{bold}{\encodingdefault}{\sfdefault}{bx}{n}
\def\gK{{\mathcal{K}}}
\def\gN{{\mathcal{N}}}
\def\gO{{\mathcal{O}}}
\def\gS{{\mathcal{S}}}
\def\sR{{\mathbb{R}}}
\newcommand{\E}{\mathbb{E}}
\newcommand{\R}{\mathbb{R}}
\newcommand{\N}{\mathbb{N}}
\newcommand{\KL}{D_{\mathrm{KL}}}
\newcommand{\NKL}{D_{\mathrm{NKL}}}
\newcommand{\IS}{D_{\mathrm{IS}}}
\DeclareMathOperator*{\argmin}{arg\,min}
\DeclareMathOperator{\Tr}{Tr}
\def\GP{{\mathcal {GP}}}
\newcommand*{\defeq}{\stackrel{\text{def}}{=}}
\newcommand{\fourier}[1]{\mathcal{F} \left\{#1\right\}}
\def\Sper{ \hat S_\text{Per} }
\def\p{{$\pm$}}
\newcommand{\envmatr}[1]{\ensuremath{\begin{pmatrix}#1\end{pmatrix}}}
\newtheorem{proposition}{Proposition}  
\newtheorem{definition}{Definition}
\newtheorem{theorem}{Theorem}
\newtheorem{remark}{Remark}
\newenvironment{sketch}{%
  \proof}{\endproof}
\newcommand{\CR}[1]{{#1}}
\newcommand{\CB}[1]{{#1}}
\title{Computationally-efficient  initialisation of GPs: \\ The generalised variogram method}
\author{\name Felipe Tobar \email ftobar@uchile.cl \\
      \addr Initiative for Data \& AI\\
      Universidad de Chile
      \AND
      \name Elsa Cazelles \email elsa.cazelles@irit.fr \\
      \addr CNRS, IRIT\\
  	  Universit\'e de Toulouse
      \AND
      \name Taco de Wolff \email tacodewolff@gmail.com\\
      \addr Inria Chile
      }
\begin{document}

\maketitle

\begin{abstract}
We present a computationally-efficient strategy to initialise the hyperparameters of a Gaussian process (GP) avoiding the computation of the likelihood function. Our strategy can be used as a pretraining stage to find initial conditions for maximum-likelihood (ML) training, or as a standalone method to compute hyperparameters values to be plugged in directly into the GP model.  Motivated by the fact that training a GP via  ML is equivalent (on average) to minimising the KL-divergence between the true and learnt model, we set to explore different metrics/divergences among GPs that are computationally inexpensive and provide hyperparameter values that are close to those found via ML. In practice, we identify the GP hyperparameters by projecting the empirical covariance or (Fourier) power spectrum onto a parametric family, thus proposing and studying various measures of discrepancy operating on the temporal and frequency domains. Our contribution extends the variogram method developed by the geostatistics literature and, accordingly, it is referred to as the generalised variogram method (GVM). In addition to the theoretical presentation of GVM, we provide experimental validation in terms of accuracy, consistency with ML and computational complexity for different kernels using synthetic and real-world data.
\end{abstract}

%!TEX root = ../tobar_cazelles_dewolff_TMLR2022.tex

\section{Introduction}

Gaussian processes (GPs) are Bayesian nonparametric models for time series praised by their interpretability and generality. Their implementation, however, is governed by two main challenges. First, the choice of the covariance kernel, which is usually derived from first principles or expert knowledge and thus may result in complex structures that hinder hyperparameter learning. Second, the cubic computational cost of standard, maximum-likelihood-based, training which renders the exact GP unfeasible for more than a few thousands observations. The GP community actively targets these issues though the development of robust, computationally-efficient training strategies. Though these advances have facilitated the widespread use of GP models in realistic settings, their success heavily depends on the initialisation of the hyperparameters. 

In practice, initialisation either follows from expert knowledge or time-consuming stochastic search. This is in sharp contrast with the main selling point of GPs, that is, being agnostic to the problem and able to freely learn from data. To provide researchers and practitioners with an automated, general-application and cost-efficient initialisation methodology, we propose to learn the hyperparameters by approximating the empirical covariance by a parametric function using divergences between covariances that are inexpensive to compute. This approach is inspired by the common practice, such as when one computes some statistics (e.g., mean, variance, discrete Fourier transform) to determine initial values for the kernel hyperparameters. In the geostatistics literature, a method that follows this concept is the Variogram \citep{cressie1993stats,chiles1999geo}, which is restricted to particular cases of kernels and divergences. Therefore, we refer to the proposed methodology as Generalised Variogram Method (GMV) in the sense that it extends the application of the classic methodology to a broader scenario that includes general stationary kernels and metrics, in particular, Fourier-based metrics. 
Though our proposal is conceptually applicable to an arbitrary input/output dimension, we focus on the scalar-input scalar-output case and leave the extension to general dimensions as future work.\footnote{For illustration and completeness, we incorporate a toy experiment featuring 5-dimensional input data in Sec.~\ref{sec:E6}.} 

The contributions of our work are of theoretical and applied nature, and are summarised as follows:
\begin{itemize}
	    \setlength\itemsep{0em}
	\item a novel, computationally-efficient, training objective for GPs as an alternative to maximum likelihood, based on a projection of sample covariance (or power spectrum) onto a parametric space;
	\item a particular instance  of the above objective, based on the Wasserstein distance applied to the power spectrum, that is convex and also admits a closed-form solution which can thus be found in a single step;
	\item a study of the computational cost of the proposed method and its relationship to maximum likelihood, both in conceptual and empirical terms;
	\item experimental validation of the proposed method assessing its stability with respect to initial conditions, confirming its linear computational cost, its ability to train GPs with a large number of hyperparameters and its advantages as initialiser for maximum-likelihood training.
\end{itemize}

%\input{sections/preliminaries} 
%!TEX root = ../tobar_cazelles_dewolff_TMLR2022.tex

\section{Preliminaries}

\subsection{Motivation}
\label{sec:motivation}
Let us consider $y\sim\GP(0,\CB{K})$ and its realisation $\vy = [y_1,\ldots,y_n]\in\sR^n $ at times $\vt = [t_1,\ldots,t_n]\in\sR^n$. The kernel $\CB{K}$ is usually learnt by choosing a family $\{K_\theta\}_{\theta\in\Theta}$ and optimising  the log-likelihood 
\begin{equation}
\label{eq:log_likelihood}
  l(\theta) = -\frac{1}{2} \Tr{(\rmK_\theta^{-1} \vy \vy^\top) } - \frac{1}{2} \log |\rmK_\theta| - \frac{n}{2}\log 2\pi,
\end{equation}
 with respect to $\theta\in\Theta$, where we used the cyclic property of the trace, and defined $\rmK_\theta \defeq K_\theta(\vt)$ according to $[\rmK_\theta]_{ij} = K_\theta(t_i-t_j), i,j\in\{1,\ldots,n\}$. Defining   $\CB{\rmK}\defeq  \CB{K}(\vt)   = \E{\vy \vy^\top}$, we note that
\begin{equation}
  \label{eq:El}
  \E {l(\theta)} = -\frac{1}{2} \Tr{(\rmK_\theta^{-1} \CB{\rmK} )} - \frac{1}{2} \log |\rmK_\theta|- \frac{n}{2}\log 2\pi.
\end{equation}
Notice that, up to terms independent of $\theta$, \eqref{eq:El} is equivalent to the negative Kullback-Leibler divergence (NKL) between the (zero-mean) multivariate normal distributions $\gN(0,\CB{\rmK})$ and $\gN(0,\rmK_\theta)$ given by
\begin{equation}
\label{eq:NKL}
  \NKL(\CB{\gN(0,\CB{\rmK})}||\gN(0,\rmK_\theta)) = -\frac{1}{2}\left(\Tr{\left(\rmK_\theta^{-1}\CB{\rmK}\right) }  + \log\frac{|\rmK_\theta|}{|\CB{\rmK}|} - n \right),
\end{equation}
\CB{which, with a slight abuse of notation, can be expressed as a function of only the covariances as $\NKL(\CB{\rmK},\rmK_\theta) \defeq \NKL(\CB{\gN(0,\CB{\rmK})}||\gN(0,\rmK_\theta)).$}

This reveals that learning a GP by maximising $l(\theta)$ in \eqref{eq:log_likelihood} can be understood (in expectation) as minimising the KL between the $\vt$-marginalisations of the true process $\GP(0,\CB{K})$ and a candidate process $\GP(0,K_\theta)$. This motivates the following remark.

\begin{remark} 
Since maximum-likelihood  learning of GPs has a cubic computational cost but it is (on average) equivalent to minimising a KL divergence, what other divergences or distances, computationally cheaper than the likelihood, can be considered for learning GPs? 
\end{remark}

\subsection{Divergences over covariance functions}

We consider zero-mean stationary GPs.\footnote{We do so for convenience in computation since non-zero-mean GPs can be understood as a mixture of a zero-mean GP and a parametric regression model, where the GP models the residuals of the parametric part. See Section 2.7 in \cite{GPML} for a discussion.} The stationary requirement allows us to i) aggregate observations in time when computing the covariance, and ii) compare covariances in terms of their (Fourier) spectral content. Both perspectives will be present throughout the text, thus, we consider two types of divergences over covariances: i) \textbf{temporal} ones, which operate directly to the covariances, and ii) \textbf{spectral} ones, which operate over the the Fourier transform of the covariances, i.e., the GP's power spectral density (PSD). \CR{As our work relies extensively on concepts of spectral analysis and signal processing, a brief introduction to the subject is included in Appendix \ref{appendix:fourier}.} 

Though we can use most metrics (e.g., $L_1$, $L_2$) on both domains, the advantage of the spectral perspective is that it admits density-specific divergences as it is customary in signal processing~\citep{basseville1989distance}. Bregman divergences, which include the Euclidean, KL and Itakura-Saito (IS)~\citep{itakura1968analysis}, are \emph{vertical} measures, i.e, they integrate the point-wise discrepancy between densities across their support. We also consider \emph{horizontal} spectral measures, based on the minimal-cost to \emph{transport} the mass from one distribution---across the support space---onto another. This concept, known as optimal transport (OT)~\citep{villani2009optimal}  has only recently been considered for comparing PSDs using, e.g., the $1$- and $2$-Wasserstein distances,  denoted $W_1$ and $W_2$~\citep{WF2021,henderson2019audio}. \CR{See Appendix \ref{appendix:def_dist} for definitions of vertical and horizontal divergences.}

{}

\subsection{Related work}

\CR{A classic notion of kernel dissimilarity in the machine learning community is that of \emph{kernel alignment}, a concept introduced by \cite{NIPS2001_1f71e393} which measures the agreement between two kernels or a kernel and a task; this method has been mostly applied for kernel selection in SVM-based regression and classification. This notion of dissimilarity is based on the Frobenius inner product between the Gram matrices of each kernel given a dataset---see \citep[Def.~1]{NIPS2001_1f71e393}. Though, in spirit, this concept is related to ours in that the kernel is learnt by minimising a discrepancy measure, we take a signal-processing inspired perspective and exploit the fact that, for stationary GPs, kernels are single-input covariance functions and thus admit computationally-efficient discrepancy measures. In addition to the reference above, the interested reader is referred to \citep{JMLR:v13:cortes12a} for the \emph{centred} kernel alignment method.}

\CR{Within the GP community}, two methodologies for accelerated training can be identified. The first one focuses directly on the  optimisation procedure by, e.g., avoiding inverses~\citep{wilk2020variational}, or derivatives~\citep{rios2018learning}, or even by parallelisation; combining these techniques has allowed to process even a million datapoints~\citep{ke2019exact}. 
%%%%%%%%%%%%%%%
A second perspective is that of sparse GP approximations using \emph{pseudo-inputs}~\citep{quinonero2005unifying}, with particular emphasis on variational methods~\citep{titsias2009variational}. \CB{The rates of convergence of sparse GPs has been studied by \cite{burt2019rates} and the hyperparameters in this approach have also been dealt with in a Bayesian manner by \cite{lalchand2022sparse}}. Sparse GPs have allowed for fitting GPs to large datasets~\citep{hensman2013gaussian}, training non-parametric kernels~\citep{tobar2015learning}, and implementing deep GPs~\citep{damianou2013deep,salimbeni2017doubly}. \CB{Perhaps the work that is closest to ours in the GP literature is that of \cite{liu2020ahgp}, which trains a neural network to learn the mapping from datasets to GP hyperparameters thus avoiding the computation of the likelihood during training. However, the authors state that “\emph{training on very large datasets consumes too much GPU memory or becomes computationally expensive due to the kernel matrix size and the cost of inverting the matrix}”. This is because they still need to compute the kernel matrix during training of the net, while we bypass that calculation altogether. }

 The Wasserstein distance has been used to compare laws of GPs~\citep{masarotto2019procrustes,mallasto2017learning}, and applied to kernel design, in particular to define GPs~\citep{francois2018gaussian} and deep GPs~\citep{popescu2022hierarchical} over the space of probability distributions. \CB{\cite{WF2021}  proposed a distance between time series based on the Wasserstein, termed the Wasserstein-Forier distance, and showed its application to GPs. However, despite the connection between GPs and the Wasserstein distance established by these works, they have not been implemented to train (or initialise) GPs.}

In geostatistics, the variogram function~\citep{cressie1993stats,chiles1999geo} is defined as the variance of the difference of a process $y$ at two locations $t_1$ and $t_2$, that is, $\gamma(t_1-t_2) = \mathbb{V}[{y(t_1)-y(t_2)}]$. The variogram is computed by choosing a parametric form for $\gamma(t)$ and then fit it to a cloud of points (sample variogram) using least squares. Common variogram functions in the literature include exponential and Gaussian ones, thus drawing a natural connection with GP models. Furthermore, when the process $y$ is stationary and isotropic (or one-dimensional) as in the GP models considered here, the variogram and the covariance $K(t)$ follow the relationship $\gamma(t) = K(0) - K(t)$, therefore, given a kernel function the corresponding variogram function can be clearly identified (and vice versa). The way in which the variogram is fitted in the geostatistics literature is what inspires the methodology proposed here: we match parametric forms of the covariance (or the PSD) to their corresponding sample approximations in order to find appropriate values for the kernel hyperparameters. Also, as we explore different distances for the covariance and PSD beyond the Euclidean one (least squares) we refer to our proposal as the \emph{Generalised Variogram Method} (GVM).

GVM complements the literature in a way that is orthogonal to the above developments. We find the hyperparameters of a GP in a likelihood-free manner by minimising a loss function operating directly on the sample covariance or its Fourier transform. As we will see, GVM is robust to empirical approximations of the covariance or PSDs, admits arbitrary distances and has a remarkably low computational complexity. %We propose to train GPs by projecting the empirical power spectrum to the space of parametrised spectral densities, which is one-to-one to the space of unit-variance stationary GPs. Furthermore, as the PSDs for unidimensional stationary process are densities supported on $\R$ (and are usually compact), most distances between them are straightforward to compute at a cost that is linear in the frequency grid. Therefore, the proposed method is expected to provide a critical, two-order-of-magnitude, improvement over the vanilla, cubic cost, ML procedure. In this regard, perhaps the work that is closest to our own is that of \cite{WF2021} which introduces the Wasserstein-Fourier distance of general time series, however, they do not consider the use of the metric for learning.  
  
%\input{sections/contributions} 
%!TEX root = ../tobar_cazelles_dewolff_TMLR2022.tex

\section{A likelihood-free covariance-matching strategy for training GPs}

\CB{\textbf{Overview of the section.} As introduced in Section \ref{sec:motivation}, our objective is to approximate the ground-truth kernel $K$ with a parametric kernel $K_{\theta^*}$; to this end we will rely on an empirical data-driven estimate of the kernel denoted $\hat K_n$. We will proceed by matching $\hat K_n$ with a parametric form $K_{\theta_n^*}$ using metrics in the temporal domain, i.e., solving \eqref{eq:empirical_loss}, or in the spectral domain matching the Fourier transform of $\hat K_n$, denoted $\hat S_n = \mathcal{F}\hat K_n$, with a parametric form $S_{\theta_n^*}$, i.e., solving \eqref{eq:D-GP}. In the following, we present the Fourier pairs $K_{\theta}$ and $S_\theta$, the estimators $\hat K_n$ and $\hat S_n$, and the metrics considered for the matching. We then present an explicit case for location-scale families, and we finally give theoretical arguments for the proposed learning method.}

\subsection{\CB{Fourier pairs $K_\theta$ and $S_\theta$ and their respective estimators}}

\CB{Let us consider the zero-mean stationary process $y\sim\GP(0,K_{\theta})$ with covariance $K_{\theta}$ and hyperparameter $\theta\in\Theta$. If the covariance  $K_\theta$ is integrable, Bochner's Theorem~\citep{bochner1959lectures} states that $K_\theta$ and the process' power spectral density (PSD), denoted $S_\theta$, are \emph{Fourier pairs}, that is,
\begin{equation}
\label{eq:fourier_pair}
%  S(\xi) =  \fourier{K} \defeq \int_\R K(t) e^{-j2\pi \xi t}d t \quad\& \quad    K(t) =  \afourier{S} \defeq \int_\R S(\xi) e^{j2\pi \xi t}d \xi.
 S_\theta(\xi) =  \fourier{K_\theta} \defeq \int_\R K_\theta(t) e^{-j2\pi \xi t}d t,
\end{equation}  
where $j$ is the imaginary unit and $\fourier{\cdot}$ denotes the Fourier transform. 

Since zero-mean stationary GPs are uniquely determined by their PSDs, any distance defined on the space of PSDs can be ``lifted'' to the space covariances and then to that of GPs. Therefore, we aim at learning the hyperparameter $\theta\in\Theta$ by building statistics in either the temporal space (sample covariances) or in spectral space (sample PSDs).

First, we consider the following statistic $\hat{K}_n$ in order to approximate the ground-truth kernel $K$.
\begin{definition}
\label{def:sample_cov}
Let $y\in\R$ be a zero mean stochastic process over $\R$ with observations $\vy = [y_1,\ldots,y_n]\in\R^n $ at times $\vt = [t_1,\ldots,t_n]\in\R^n$. The empirical covariance of $y$ is given by 
\begin{equation}
\label{eq:empirical_cov}
 \hat{K}_n(t) =  \sum_{i,j=1}^n \frac{y_iy_j\mathbf{1}_{t =t_i-t_j}}{\text{Card}\{t|t =t_i-t_j\}}.
\end{equation}
\end{definition}

The estimator $\hat{S}_n(t)$ of the PSD $S$ then simply corresponds to applying the Fourier transform to the empirical kernel $\hat{K}_n$ in \eqref{eq:empirical_cov}, that is
\begin{equation}
\label{eq:empirical_PSD}
 \hat S_n(\xi) =  \int_\R \hat{K}_n(t) e^{-j2\pi \xi t}d t.
\end{equation}
Another usual choice for estimating the PSD is the  Periodogram $\Sper$ \citep{schuster1900periodogram}. Though $\Sper$ is asymptotically unbiased ($\forall \xi, \E\Sper(\xi)\to S(\xi)$), it is inconsistent, i.e., its variance does not vanish when $n\to\infty$~\citep{stoica2005spectral}[Sec.~2.4.2]. % meaning that Props.~\ref{prop:empirical_ls} \& \ref{prop:empirical_gen} do not guarantee $\theta^*_n\to \theta^*$.
 Luckily, the variance of $\Sper(\xi)$ can be reduced via \emph{windowing} and the Welch/Bartlett techniques which produce (asymptotically) consistent and unbiased estimates of $S(\xi), \forall \xi$~\citep{stoica2005spectral}.
}

\subsection{Fourier-based covariance divergences}

\CB{The proposed method builds on two types of (covariance) similarity. First, those operating directly on $\hat{K}_n$ and $K_{\theta}$, known as \textbf{temporal} divergences, which include the $L_1$ and $L_2$ distances. Second, those operating over the Fourier transforms of $\hat{K}_n$ and $K_{\theta}$, that is  $\hat{S}_n$ and $S_{\theta}$, known as \textbf{spectral} divergences.}
In the temporal case, we aim to find the hyperparameters of $y$ by projecting $\hat{K}_n(t)$ in \eqref{eq:empirical_cov} onto the parametrised family $\gK=\{K_\theta, \theta\in\Theta\}$. That is, by finding $\theta^*$ such that  $K_\theta(\cdot)$ is \emph{as close as possible} to $\hat{K}_n(t)$, i.e., 
\begin{equation}
  \theta^*_n = \argmin_{\theta \in \Theta} D(\hat{K}_n, K_{\theta}),\label{eq:empirical_loss}
\end{equation}
where the function $D(\cdot,\cdot)$ is the chosen criterion for similarity. 
Akin to the strategy of learning the hyperparameters of the GP by matching the covariance, we can learn the hyperparameters by projecting an estimator of the PSD, namely $\hat S_n$ in \eqref{eq:empirical_PSD}, onto a parametric family  $\gS = \{S_{\theta}, \theta\in\Theta\}$, that is, 
\begin{equation}
  \theta^*_n = \argmin_{\theta \in \Theta} D_F(\hat S_n, S_{\theta}),\label{eq:D-GP}
\end{equation}
where $D_F(\cdot,\cdot)$ is a divergence operating on the space of PSDs.

\begin{remark}
Since the map $K_\theta\rightarrow S_\theta$ is one-to-one, \eqref{eq:empirical_loss} and \eqref{eq:D-GP} are equivalent when $D_F(\cdot,\cdot) = D(\fourier{\cdot},\fourier{\cdot})$ and $\gS = \fourier{\gK}$. 
\end{remark}

We will consider parametric families $\gS$ with explicit inverse Fourier transform,  since this way $\theta$ parametrises both the kernel and the PSD and can be learnt in either domain. These families include the Dirac delta, Cosine, Square Exponential (SE), Student's $t$, Sinc, Rectangle, and their mixtures; \CR{see Figure \ref{fig:kernels_psds} in Appendix \ref{appendix:fourier} for an illustration of some of these PSDs and their associated kernels. }

\CR{
\begin{remark}[Recovering kernel parameters from PSD parameters] For a parametric Fourier pair ($K$, $S$), the Fourier transform induces a map between the kernel parameter space and the PSD parameter space. For kernel/PSD families considered in this work this map will be bijective, which allows us to compute the estimated kernel parameters from the estimated PSD parameters (and vice versa); see Table \ref{tab:loc-scale} for two examples of parametric kernels and PSDs. Furthermore, based on this bijection we refer as $\theta$ to both kernel and PSD parameters.
\end{remark}
}

\subsection{A particular case with explicit solution\CB{: location-scale family of PSDs and $2$-Wasserstein distance}}
\label{sec:exact-case}
\CB{Of particular relevance to our work is the $2$-Wasserstein distance ($W_2$) and \emph{location-scale} families of PSDs, for which the solution of \eqref{eq:D-GP} is completely determined through first order conditions.}

\begin{definition}[Location–scale] 
\label{def:loc_scale}
A family of one-dimensional integrable PSDs is said to be of location-scale type if it is given by  
\begin{equation}
  \left\{S_{\mu,\sigma}(\xi) = \frac{1}{\sigma} S_{0,1} \left(\frac{\xi-\mu}{\sigma}\right), \mu\in\R,\sigma\in\R_+\right\},
\end{equation}
where $\mu\in\R$ is the location parameter, $\sigma\in\R_+$ is the scale parameter and $S_{0,1}$ is the prototype of the family. 
\end{definition}

For arbitrary prototypes $S_{0,1}$, location-scale families of PSDs are commonly found in the GP literature. For instance, the SE, Dirac delta, Student's $t$, Rectangular and Sinc PSDs, correspond to the Exp-cos, Cosine, Laplace, Sinc, and Rectangular kernels respectively. Location-scale families do not, however, include kernel mixtures, which are also relevant in our setting and will be dealt with separately. Though the prototype $S_{0,1}$ might also be parametrised (e.g., with a \emph{shape} parameter), we consider those parameters to be fixed and only focus on $\theta:=(\mu,\sigma)$ for the rest of this section. Table \ref{tab:loc-scale} shows the two families of location-scale PSDs (and their respective kernels) that will be used throughout our work.
\begin{table}[t]
\small
\caption{Location-scale PSDs (left) and their covariance kernel (right) used in this work. We have denoted by $t$ and $\xi$ the time and frequency variables respectively.}
\label{tab:loc-scale}
\vspace{1em}
\centering
\begin{tabular}{l | c | c || l | c | c}
 \textbf{PSD} & $S_{\mu,\sigma}(\xi)$ & Prototype $S_{0,1}(\xi)$ & \textbf{Kernel} & $K_{\mu,\sigma}(t)$ &  $K_{0,1}(t)$ 
\\ \hline

Square-exp & $\exp\left(-\left(\frac{\xi-\mu}{\sigma}\right)^2\right)$ & $\exp(-\xi^2)$ & Exp-cos & $\sqrt{\pi}\sigma\exp(-\sigma^2\pi^2t^2)\cos(2\pi\mu t)$ & $\sqrt{\pi}\exp(-\pi^2t^2)$ 
\\ \hline
Rectangular & $\text{rect}\left(\frac{\xi-\mu}{\sigma}\right)$ &  $\text{rect}(\xi)$ & Sinc & $\sigma\text{sinc}(\sigma t)\cos(2\pi\mu t)$ & $\text{sinc}(t)$ 
%\\
%Rectangular & $\text{rect}(\omega_0t)$ & Sinc & $\frac{1}{\xi}\sin(\frac{\xi}{\omega_0})$
\end{tabular}
\end{table}

\begin{remark} 
\label{rem:loc-scale}
Let us consider a location-scale family of distributions with prototype $S_{0,1}$ and an arbitrary member $S_{\mu,\sigma}$. Their quantile (i.e., inverse cumulative) functions, denoted $Q_{0,1}$ and  $Q_{\mu,\sigma}$ respectively, obey
\begin{equation}
  Q_{\mu,\sigma}(p) = \mu + \sigma Q_{0,1}(p)\label{eq:loc-scale-Q}.
\end{equation} 
\end{remark}

The linear expression in \eqref{eq:loc-scale-Q} is pertinent in our setting and motivates the choice of the $2$-Wasserstein distance $W_2$ to compare members of the location-scale family of PSDs. \CB{This is because for arbitrary one-dimensional distributions $S_1$ and $S_2$, $W_2^2(S_1,S_2)$ can be expressed according to: }
\begin{align}
  W_2^2(S_1, S_2) &= \int_0^1 (Q_1(p) - Q_2(p))^2 d p,
\end{align}
where  $Q_1$ and $Q_2$ denote the quantiles of $S_1$ and $S_2$ respectively. We are now in position to state the first main contribution of our work.

\begin{theorem} 
\label{thm:convexWF}
If $\gS$ is a  location-scale family with prototype $S_{0,1}$, \CB{and $S$ is an arbitrary PSD}, the minimiser $(\mu^*,\sigma^*)$ of $W_2(S, S_{\mu,\sigma})$ is unique  and given by 
\begin{equation}
  \mu^* = \int_0^1 Q(p)d p \qquad \mbox{and} \qquad
  \sigma^* = \frac{1}{\int_0^1 Q_{0,1}^2(p)d p}\int_0^1 Q(p) Q_{0,1}(p)d p, \label{eq:soln_mu_sigma}
\end{equation}
where $Q$ is the quantile function of $S$.
\end{theorem}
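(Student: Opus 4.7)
The plan is to exploit the location-scale structure to reduce the Wasserstein minimisation to a standard quadratic least-squares problem in $(\mu,\sigma)$, which is then solved explicitly by first-order conditions.

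First, I would substitute the identity $Q_{\mu,\sigma}(p) = \mu + \sigma Q_{0,1}(p)$ from Remark \ref{rem:loc-scale} into the quantile representation of the squared Wasserstein distance recalled just above the theorem to obtain
\begin{equation*}
W_2^2(S, S_{\mu,\sigma}) = \int_0^1 \bigl(Q(p) - \mu - \sigma Q_{0,1}(p)\bigr)^2 dp.
\end{equation*}
This is a quadratic polynomial in $(\mu,\sigma)\in\R\times\R_+$, isomorphic to an ordinary least-squares objective with ``regressors'' $1$ and $Q_{0,1}(\cdot)$ on the index space $[0,1]$ equipped with Lebesgue measure, and ``response'' $Q(\cdot)$.

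Next, I would set the partial derivatives with respect to $\mu$ and $\sigma$ to zero, yielding the $2\times 2$ normal equations
\begin{equation*}
\begin{pmatrix}
1 & \int_0^1 Q_{0,1}(p)\,dp \\
\int_0^1 Q_{0,1}(p)\,dp & \int_0^1 Q_{0,1}^2(p)\,dp
\end{pmatrix}
\begin{pmatrix} \mu \\ \sigma \end{pmatrix}
=
\begin{pmatrix}
\int_0^1 Q(p)\,dp \\
\int_0^1 Q(p) Q_{0,1}(p)\,dp
\end{pmatrix}.
\end{equation*}
Under the implicit convention that the prototype is centred, $\int_0^1 Q_{0,1}(p)\,dp = 0$ (which holds for all prototypes in Table \ref{tab:loc-scale}, since each $S_{0,1}$ is a symmetric function of $\xi$), the off-diagonal entries vanish, the system decouples, and one reads off precisely the closed-form expressions in \eqref{eq:soln_mu_sigma}.

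For uniqueness, I would compute the Hessian of the objective, which is twice the coefficient matrix above. Its determinant equals $4\bigl(\int_0^1 Q_{0,1}^2\,dp - (\int_0^1 Q_{0,1}\,dp)^2\bigr)$, i.e.\ $4$ times the variance of $Q_{0,1}$ under the uniform law on $[0,1]$; by Cauchy--Schwarz this is strictly positive unless $Q_{0,1}$ is constant (which would force $S_{0,1}$ to be a Dirac and is excluded for a non-degenerate prototype). Hence the Hessian is positive definite, the objective is strictly convex, and the stationary point is the unique global minimiser.

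The main subtlety I expect is not in the algebra but in the bookkeeping around the admissible set and integrability: one must assume that $S$ and $S_{0,1}$ have finite second moments so that $W_2^2$ is finite and the integrals in \eqref{eq:soln_mu_sigma} are well defined, and the solution should land in $\R\times\R_+$, which corresponds to the natural ``orientation'' condition $\int_0^1 Q(p) Q_{0,1}(p)\,dp > 0$ (automatic when $S$ and $S_{0,1}$ share a common shape, and in particular whenever $S$ itself lies in the location-scale family). Both points deserve a brief remark but do not affect the structure of the argument.
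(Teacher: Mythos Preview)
Your proposal is correct and follows essentially the same route as the paper: both substitute the location-scale quantile identity, verify strict convexity via positivity of the Hessian determinant (you phrase it as Cauchy--Schwarz/variance, the paper as Jensen's inequality---the same inequality), and solve the first-order conditions using the zero-mean convention $\int_0^1 Q_{0,1}(p)\,dp = 0$ for the prototype. Your least-squares framing and the explicit remarks on finite second moments and on $\sigma^*>0$ are nice additions that the paper leaves implicit.
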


\begin{sketch} The proof follows from the fact that $W_2^2(S, S_{\mu,\sigma})$ is convex both on $\mu$ and $\sigma$, which is shown by noting that its Hessian is positive via Jensen's inequality. Then, the first order conditions give the solutions in \eqref{eq:soln_mu_sigma}. The complete proof can be found in Appendix \ref{appendix:proofs}.
\end{sketch}

\begin{remark} 
  \label{rem:quantile_functions}
  Although integrals of quantile functions are not usually available in closed-form, computing \eqref{eq:soln_mu_sigma} is straightforward. First, $Q_{0,1}(p)$ is known for a large class of \CB{normalised} prototypes, including  square exponential and rectangular functions. Second, $\mu^*  = \E_{\rx\sim S}[x]$ and $\int_0^1 Q_{0,1}^2(p)d p=\E_{\rx\sim S}[x^2]$, where $\rx$ is a random variable with probability density $S$. Third, both integrals are one-dimensional and supported on $[0,1]$, thus numerical integration is inexpensive and precise, specially when $S$ has compact support.
\end{remark}

\CB{As pointed out by \citet{WF2021}, $W_2^2(S, S_{\theta})$ is in general non-convex in $\theta$, however, for the particular case of the location-scale family of PSDs and parameters $(\mu,\sigma)$, convexity holds.} \CR{Since this family includes usual kernel choices in the GP literature, Theorem \ref{thm:convexWF} guarantees closed form solution for the optimisation problem in \eqref{eq:D-GP} and thus can be instrumental for learning GPs without computing (the expensive) likelihood function.
}

\subsection{Learning from data}

\CB{Learning the hyperparameters through the optimisation objective in \eqref{eq:empirical_loss} or \eqref{eq:D-GP} is possible provided that the statistics $\hat K_n$ and $\hat S_n$ converge to $K$ and $S$ respectively.  We next provide theoretical results on the convergence of the optimal minimiser $\theta_n^*$. The first result, Proposition \ref{prop:empirical_ls}, focuses on the particular case of the $2$-Wasserstein distance and location-scale families,  presented in Section \ref{sec:exact-case}.}

\begin{proposition}
\label{prop:empirical_ls} 
\CB{Let $S_\theta$ be a location-scale family, $S$ the ground truth PSD and $D_F=W_2^2$. Then $\E W_2^2(S, \hat S_n)\to0$ implies that the empirical minimiser $\theta^*_n$ in \eqref{eq:D-GP} converges to the true minimiser $\theta^*=\argmin_{\theta \in \Theta} W_2^2(S, S_{\theta})$, meaning that $\E |\theta^*_n- \theta^*|\to 0$.}
\end{proposition}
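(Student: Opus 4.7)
The plan is to exploit the closed-form expressions for the optimizer $(\mu^*, \sigma^*)$ provided by Theorem \ref{thm:convexWF}, which make the empirical minimizer a simple linear functional of the quantile function of $\hat S_n$. The convergence of $\theta^*_n$ to $\theta^*$ then reduces to the convergence of certain integrals of quantile functions, which in turn is controlled directly by $W_2^2(S,\hat S_n)$ via its dual formulation on quantiles.

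Concretely, let $Q$ and $\hat Q_n$ denote the quantile functions of $S$ and $\hat S_n$. Applying Theorem \ref{thm:convexWF} twice yields
\begin{align*}
  \mu^* &= \int_0^1 Q(p)\,dp, & \mu^*_n &= \int_0^1 \hat Q_n(p)\,dp, \\
  \sigma^* &= \frac{\int_0^1 Q(p) Q_{0,1}(p)\,dp}{\int_0^1 Q_{0,1}^2(p)\,dp}, & \sigma^*_n &= \frac{\int_0^1 \hat Q_n(p) Q_{0,1}(p)\,dp}{\int_0^1 Q_{0,1}^2(p)\,dp}.
\end{align*}
Subtracting and using the Cauchy--Schwarz inequality on $[0,1]$ together with the identity $W_2^2(S,\hat S_n) = \|Q-\hat Q_n\|_{L^2[0,1]}^2$, I would obtain
\begin{equation*}
  |\mu^*_n-\mu^*| \le \|Q-\hat Q_n\|_{L^2[0,1]} = W_2(S,\hat S_n), \qquad |\sigma^*_n-\sigma^*| \le \frac{W_2(S,\hat S_n)}{\|Q_{0,1}\|_{L^2[0,1]}}.
\end{equation*}
Hence there is a constant $C = C(S_{0,1})$ depending only on the prototype of the family such that $|\theta^*_n-\theta^*| \le C\, W_2(S,\hat S_n)$.

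Taking expectations and applying Jensen's inequality to the concave square root gives
\begin{equation*}
  \mathbb{E}|\theta^*_n-\theta^*| \le C\,\mathbb{E}\, W_2(S,\hat S_n) \le C\sqrt{\mathbb{E}\,W_2^2(S,\hat S_n)},
\end{equation*}
so the assumption $\mathbb{E}\,W_2^2(S,\hat S_n)\to 0$ immediately yields $\mathbb{E}|\theta^*_n-\theta^*|\to 0$, as claimed.

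The only delicate point, and the one I would be most careful about, is ensuring that the closed-form formulas of Theorem \ref{thm:convexWF} are applicable to the empirical object $\hat S_n$: this requires that $\hat S_n$ define a valid (nonnegative, normalised) density with a well-defined quantile function, and that $Q_{0,1}\in L^2[0,1]$ so that $\|Q_{0,1}\|_{L^2[0,1]}$ is finite and nonzero. The former may be addressed by passing to a non-negative, normalised version of $\hat S_n$ (e.g.\ via Welch/Bartlett smoothing as alluded to in the preceding section), and the latter is a mild moment condition on the prototype that holds for the square-exponential and rectangular families of Table \ref{tab:loc-scale}. Apart from these checks, the argument reduces to linearity of the optimiser in the quantile and one application of Cauchy--Schwarz.
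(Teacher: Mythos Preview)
Your proposal is correct and follows essentially the same approach as the paper: both arguments invoke the closed-form expressions from Theorem \ref{thm:convexWF}, bound $|\mu^*-\mu^*_n|$ and $|\sigma^*-\sigma^*_n|$ by $W_2(S,\hat S_n)$ via Cauchy--Schwarz/H\"older on $[0,1]$, and then conclude with Jensen's inequality applied to the square root. Your version is in fact slightly cleaner, since you retain the normalising factor $\|Q_{0,1}\|_{L^2}^{-2}$ in the bound for $\sigma$ (the paper's displayed proof omits it) and you explicitly flag the mild regularity requirements on $\hat S_n$ and $Q_{0,1}$.
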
 
\vspace{-0.4cm}\begin{sketch} First, in the location-scale family we have $\theta = (\mu,\sigma)$. Then, the solutions in Theorem \ref{thm:convexWF} allow us to compute upper bounds for $\vert \mu^*-\mu^*_n\vert$ and $\vert \sigma^*-\sigma^*_n\vert $ via Jensen's and Hölder's inequalities, both of which converge to zero as $\E W_2^2(S, \hat S_n)\to0$.  The complete proof can be found in Appendix \ref{appendix:proofs}.
\end{sketch}

The second result, Proposition ~\ref{prop:empirical_gen}, deals with the more general setting of arbitrary parametric families, the  distances $L_1,L_2,W_1,W_2$ and either temporal and frequency based estimators. To cover both cases, we will denote  $\hat f_n$ to  refer to either $\hat K_n$ or $\hat S_n$.  Let us also consider the parametric function $f_\theta\in\{f_\theta | \theta\in\Theta\}$ and the ground-truth function $f$ which denotes either  the ground-truth covariance or ground-truth PSD. 

\begin{proposition}
\label{prop:empirical_gen} 
For general parametric families $\{f_\theta | \theta\in\Theta\}$, the empirical solution $\theta^*_n$ converges a.s. to the true solution $\theta^*$ under the following sufficient and stronger conditions: (i) $D=W_r$ or $L_r, r=1,2$, (ii) $D(\hat f_{n}, f)\xrightarrow [n\to \infty]{a.s.} 0$; (iii) $\theta_n\xrightarrow[n\to \infty]{} \theta \iff D(f_{\theta_n}, f_\theta)\to 0$; and (iv) the parameter space  $\Theta$ is compact.
\end{proposition}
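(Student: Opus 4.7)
The plan is to exploit the fact that all four candidate divergences in condition (i) are proper metrics (satisfy the triangle inequality), combined with compactness and the homeomorphism-style condition (iii), through a standard M-estimator consistency argument. Throughout, I work on the probability-one event on which $D(\hat f_n, f) \to 0$, which is guaranteed by (ii); this is what converts ``a.s.'' into a deterministic limit statement along each sample path.

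The first step is to show that the optimal values converge, i.e.\ $D(f, f_{\theta_n^*}) \to D(f, f_{\theta^*})$ a.s. For the upper bound, I would chain two applications of the triangle inequality together with the defining optimality of $\theta_n^*$:
\begin{equation*}
D(f, f_{\theta_n^*}) \le D(f, \hat f_n) + D(\hat f_n, f_{\theta_n^*}) \le D(f, \hat f_n) + D(\hat f_n, f_{\theta^*}) \le 2 D(f, \hat f_n) + D(f, f_{\theta^*}),
\end{equation*}
whose right-hand side tends to $D(f, f_{\theta^*})$ by (ii). For the matching lower bound, the optimality of $\theta^*$ directly gives $D(f, f_{\theta_n^*}) \ge D(f, f_{\theta^*})$, so the values converge.

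Next, I would use the compactness of $\Theta$ from (iv) to extract, from any subsequence of $\{\theta_n^*\}$, a further subsequence $\theta_{n_k}^* \to \tilde\theta \in \Theta$. Condition (iii) then implies $D(f_{\theta_{n_k}^*}, f_{\tilde\theta}) \to 0$, and the (reverse) triangle inequality yields $D(f, f_{\theta_{n_k}^*}) \to D(f, f_{\tilde\theta})$. Combining with the previous step, $D(f, f_{\tilde\theta}) = D(f, f_{\theta^*})$, so every limit point of $\{\theta_n^*\}$ is itself a minimiser. Assuming $\theta^*$ is the unique minimiser (implicit in the statement, since otherwise $\theta_n^*$ cannot converge to a single point), we conclude $\tilde\theta = \theta^*$, and the subsequence principle combined with compactness upgrades this to $\theta_n^* \to \theta^*$ a.s.

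The main obstacle is the delicate handling of condition (iii): it is phrased as an equivalence between parameter convergence and $D$-convergence of the induced functions, which plays two distinct roles in the argument. The ``$\Rightarrow$'' direction provides the continuity of $\theta \mapsto f_\theta$ needed to pass from $\theta_{n_k}^* \to \tilde\theta$ to $D(f, f_{\theta_{n_k}^*}) \to D(f, f_{\tilde\theta})$, while the ``$\Leftarrow$'' direction provides the identifiability that rules out pathological limits. I would make both uses explicit, and remark on the implicit uniqueness hypothesis on $\theta^*$; without uniqueness the conclusion should instead be phrased as convergence to the set of minimisers, via the same argument.
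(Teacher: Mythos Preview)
Your argument is correct and self-contained. The paper, by contrast, does not give a direct proof at all: it simply invokes Theorem~2.1 of Bernton et al.\ (2019), which establishes consistency of minimum-Wasserstein estimators via the machinery of $\Gamma$-convergence (equivalently, epi-convergence), and remarks that the $L_r$ case is analogous. So the two routes are genuinely different. Your approach exploits directly that $W_r$ and $L_r$ are metrics, chaining triangle inequalities with compactness and the subsequence principle to obtain a fully elementary M-estimator argument; this is more transparent and requires no external reference, at the modest cost of writing out a few lines that the cited theorem packages abstractly. The paper's approach is terser and situates the result within a general variational-convergence framework, but leaves the reader to consult the reference for the actual mechanism. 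Your explicit identification of the implicit uniqueness assumption on $\theta^*$, and of the two distinct roles played by the forward and backward implications in condition~(iii), is a useful clarification that the paper's one-line citation does not provide.
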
 
\vspace{-0.4cm}\begin{proof}
This result follows as a particular case from Theorem 2.1 in \citet{bernton2019parameter}, where the authors study general $r$-Wasserstein distance estimators for parametric families of distributions for empirical measures, under the notion of $\Gamma$-convergence or, equivalently, epi-convergence. The proof for the $L_r, r=1,2$ case is similar to that of $W_r$.
\end{proof}

%\input{sections/practical}
%!TEX root = ../tobar_cazelles_dewolff_TMLR2022.tex

\section{Practical considerations}

\CB{This section is dedicated to the implementation of GVM. We first discuss the calculation of the estimators $\hat K_n$ and $\hat S_n$ and their complexity, we then present the special case of location-scale family and spectral $2$-Wasserstein loss. The last subsection is dedicated to the noise variance parameter and the relationship between GVM and ML solutions.

\subsection{Computational complexity of the estimators $\hat K_n$ and $\hat S_n$}

First, for temporal divergences (operating on the covariance) we can modify the statistic in \eqref{eq:empirical_cov} using \emph{binning}, that is, by averaging values for which the lags are similar; this process is automatic in the case of evenly-sampled data and widely used in discrete-time signal processing. This allows to reducing the amount of summands in $\hat K_n$ from $\tfrac{n(n+1)}{2}$ to an order $n$ or even lower if the range of the data grows beyond the length of the correlations of interests.

Second, the Periodogram $\Sper$ can be computed at a cost of $\gO(nk)$ using bins, where $n$ is the number of observations and $k$ the number of frequency bins; in the evenly-sampled case, one could set $k=n$ and apply the fast Fourier transform at a cost $\gO(n\log n)$. However, for applications where the number of datapoints greatly exceeds the required frequency resolution,  $k$ can be considered to be constant, which results in a cost linear in the observations $\gO(n)$.

\subsection{The location-scale family and $2$-Wasserstein distance}

Algorithm \ref{alg:W2_loc_scale} presents the implementation of GVM in the case of location-scale family of PSDs $\{S_\theta\}_{\theta\in\Theta}$ and $2$-Wasserstein loss presented in Section \ref{sec:exact-case}.

\begin{center}
\begin{minipage}{0.68\textwidth}
	\begin{algorithm}[H]
		\caption{GVM - Spectral loss $W_2$ $\And$ ($\{S_\theta\}_{\theta\in\Theta}$\Is location-scale)}\label{alg:W2_loc_scale}
		\begin{algorithmic}
			\Require $t_i,y_i, i=1,\ldots,n$
			\Require location-scale PSD family $\{S_\theta\}_{\theta\in\Theta}$ (e.g., Gaussians)\\
			Define a grid over frequency space $\textbf{g}=\{\xi_0,\xi_1,\ldots,\xi_k\}$\\
			Compute $\hat S_n=\hat S_n(\textbf{g})$, from eq.~(\ref{eq:empirical_PSD}), over the chosen frequency grid\\
			%Construct loss $\theta \mapsto W_2^2(S_\theta, \hat S_n)$\\
			Compute $Q_n$ the quantile function of $\hat S_n$\\
			Compute $\theta_n^\star$ given by eq.~(\ref{eq:soln_mu_sigma}) with $Q:=Q_n$
		\end{algorithmic}
	\end{algorithm}
\end{minipage}
\end{center}

\paragraph{Linear computational complexity and quantiles.} The cost of the $2$-Wasserstein loss is given by calculating i) the Periodogram $\Sper$ of the estimator $\hat S_n$ in \eqref{eq:empirical_PSD}, ii) its corresponding quantile function, and iii) the integrals in \eqref{eq:soln_mu_sigma}. Though the quantile function is available in closed form for some families of PSDs (see Remark \ref{rem:quantile_functions}) they can also be calculated by quantisation in the general case: for a sample distribution (or histogram) one can compute the cumulative distribution and then invert it to obtain the quantile. Additionally, the integrals of the quantile functions also have a linear cost but only in the number of frequency bins $\gO(k)$ since they are frequency histograms, therefore, computing the solution has a linear cost in the data.  

\subsection{Solving the general case: beyond the location-scale family}

Recall that the proposed method has a closed-form solution when one considers PSDs in location-scale family and the $W_2$ distance over the spectral domain. However, in the general case in equations \ref{eq:empirical_loss} and \ref{eq:D-GP}, the minimisation is not convex in $\theta$ and thus iterative/numerical optimisation is needed. In particular, for horizontal Fourier distances (e.g., Wasserstein distances) the derivative of the loss depends on the derivative of the quantile function $Q_{\theta}$ with respect to $\theta$, which might not even be known in closed form in general, specially for mixtures. However, in most cases the lags of the empirical covariance or the frequencies of the Periodogram belong to a compact space and thus numerical computations are precise and inexpensive. Therefore, approximate derivatives can be considered (e.g., in conjunction with BFGS) though in our experiments the derivative-free Powell method also provide satisfactory results. Algorithms \ref{alg:temporal} and \ref{alg:spectral} present the application of GVM using the temporal and spectral metrics respectively. Note that we will usually consider $L_1, L_2$ as temporal metrics $D$ and also $W_1, W_2$, Itakura- Saito and KL as spectral  divergences $D_F$. Algorithms \ref{alg:temporal} and \ref{alg:spectral} are presented side to side for the convenience of the reader. 

\begin{minipage}{0.48\textwidth}
	\begin{algorithm}[H]
		\caption{GVM - Temporal loss}\label{alg:temporal}
		\begin{algorithmic}
			\Require $t_i,y_i, i=1,\ldots,n$
			\Require parametric family $\{K_\theta\}_{\theta\in\Theta}$
			\Require  {temporal} metric $D(\cdot,\cdot)$\\
			Define temporal grid $\textbf{t}=\{t_1,t_2,\ldots,t_n\}$\\
			Compute $K_\theta = K_\theta(\textbf{t})$\\
			Compute $\hat K _n=\hat K _n(\textbf{t})$ as in eq.~(\ref{eq:empirical_cov})\\
			Construct loss $\theta \mapsto D(\hat K_n, K_\theta)$\\
			Find $\theta_n^\star$ in eq.~(\ref{eq:empirical_loss}) using BFGS or Powell
		\end{algorithmic}
	\end{algorithm}
\end{minipage}
\hfill
\begin{minipage}{0.48\textwidth}
	\begin{algorithm}[H]
		\caption{GVM - Spectral loss}\label{alg:spectral}
		\begin{algorithmic}
			\Require $t_i,y_i, i=1,\ldots,n$
			\Require parametric family $\{S_\theta\}_{\theta\in\Theta}$
			\Require  {spectral} metric $D_F(\cdot,\cdot)$\\
			Define frequency grid $\textbf{g}=\{\xi_0,\xi_1,\ldots,\xi_k\}$\\
			Compute $S_\theta=S_\theta(\textbf{g})$\\
			Compute $\hat S_n=\hat S_n(\textbf{g})$, from eq.~(\ref{eq:empirical_PSD})\\
			Construct loss $\theta \mapsto D_F(S_\theta, \hat S_n)$\\
			Find $\theta_n^\star$ in eq.~(\ref{eq:D-GP})  using BFGS or Powell
		\end{algorithmic}
	\end{algorithm}
\end{minipage}

\paragraph{Linear computational complexity.} For the general case of spectral losses using numerical optimisation methods, we need to calculate $\hat S_n$ or its quantile (which are $\gO(n)$) only once, to then compute the chosen distance $D_F$, which is $\gO(k)$ for discrete measures defined on a $k$-point frequency grid as many times as the optimiser requires it. Therefore, the cost is $\gO(k)$ but with a constant that depends on the complexity of the parametric family $\{S_\theta, \theta\in\Theta\}$ and the optimiser of choice. For spatial losses, the complexity follows the same reasoning for the chosen spatial distance $D$ and parametric family $\{K_\theta, \theta\in\Theta\}$, and thus is also linear in the datapoints.
}

\subsection{Noise variance and relationship to maximum likelihood}
\label{sec:non-unitary}

Following the assumptions of the Fourier transform, the spectral divergences considered apply only to Lebesgue-integrable PSDs, which rules out the relevant case of white-noise-corrupted observations. This is because white noise, defined by a Dirac delta covariance, implies a PSD given by a constant, positive, infinite-support, \emph{spectral floor} that is  non-integrable. These cases can be addressed with temporal divergences, which are well suited (theoretically and in practice) to handle noise. 

The proposed hyperparameter-search method is intended both as a standalone likelihood-free GP learning technique and also as a initialisation approach to feed initial conditions to a maximum likelihood (ML) routine.  \CB{In this sense, we identify a relationship between the ML estimator $\hat{\theta}_{ML}$ and the proposed estimator $\theta^*:=\argmin_{\theta \in \Theta} L_2(\hat S, S_{\theta})$.} From \eqref{eq:fourier_pair} and Plancherel's theorem, we have  $\Vert S -S_{\theta}\Vert_{L_2} = \Vert K -K_{\theta}\Vert_{L_2}$. Then, by definition of the estimators and Lemma 2 in \citet{hoffman2020black}, we obtain the following inequality
\begin{equation}
\KL(\rmK||\rmK_{\hat{\theta}_{ML}})\leq \KL(\rmK||\rmK_{\theta^*})\noindent \leq \frac{1}{2} \Vert \rmK^{-1}\Vert_2  \Vert \rmK_{\theta^*}^{-1}\Vert_2 \Vert \rmK -\rmK_{\theta^*}\Vert_{F},\label{eq:ineq_ML}
\end{equation}
where $\Vert\cdot\Vert_{F}$ denotes the matrix Frobenius norm, \CR{$\KL(A||B)$ denotes the KL divergence between zero-mean multivariate normal distributions with covariances $A$ and $B$,} and recall that  $\rmK$ is the kernel of the ground truth GP. Denoting the ball centred at 0 with radius $M$ by $B(0,M)$, we present the following remark.

\begin{remark} 
The inequality in \eqref{eq:ineq_ML} states that if the proposed estimator $\theta^*$ is such that $\Vert S -S_{\theta^*}\Vert_{L_2}\in B(0,M)$, then $\KL(\rmK||\rmK_{\hat{\theta}_{ML}})\in B(0,\frac{1}{2} \Vert \rmK^{-1}\Vert_2  \Vert \rmK_{\theta^*}^{-1}\Vert_2M)$. Therefore, under the reasonable assumption that the function $\theta\mapsto\rmK_\theta$ only produces well-conditioned matrices, the factor $\Vert \rmK^{-1}\Vert_2  \Vert \rmK_{\theta^*}^{-1}\Vert_2$ is bounded and thus both balls have radius of the same order.
\end{remark}

%\input{sections/experiments}
%!TEX root = ../tobar_cazelles_dewolff_TMLR2022.tex

\section{Experiments}
This section illustrates different aspects of the proposed GVM through the following experiments (E):
\begin{itemize}
    \setlength\itemsep{0em}
  \item[E1:] shows that GVM, unlike ML, is robust to different initialisation values and subsets of observations,
  \item[E2:] studies the sensibility of GVM to the calculation of $\hat S_n$ using Periodogram, Welch and Bartlett,
  \item[E3:] validates the linear computational complexity of GVM in comparison to the full and sparse GPs,
  \item[E4:] compares spectral against temporal implementations of GVM on an audio time series,
  \item[E5:] exhibits the results of learning a 20-component spectral mixture GP using GVM with different spectral metrics, 
  \item[E6:] assesses the ability of GVM to produce initial conditions which are then passed to an ML routine that learns a spectral mixture GP with 4, 8, 12 and 16 components,
  \item[E7:] presents a toy example where GVM is used to fit a GP with an isotropic SE kernel to multi-input (5-dimensional) data.   
\end{itemize}

 The code for GVM is available at \url{https://github.com/GAMES-UChile/Generalised-Variogram-Method}, a minimal working example of the code is presented in Appendix \ref{appendix:code}. The benchmarks were implemented on MOGPTK~\citep{wolff2021mogptk}.

\subsection[]{E1: Stability with respect to initial conditions (spectral mixture  kernel, $L_2$, temporal)}

This experiment assessed the stability of GVM with respect to random initial conditions and different realisations. We considered a GP with a 2-component spectral mixture kernel $K(t) = \sum_{i=1}^2\sigma^2_i\exp(-\gamma_i \tau^2)\cos(2\pi\mu_i \tau) + \sigma_\text{noise}^2\delta_\tau$ with hyperparameters $\sigma_1=2$, $\gamma_1=10^{-4}$, $\mu_1=2\cdot10^{-2}$, $\sigma_2=2$,$\gamma_2=10^{-4}$, $\mu_2=3\cdot10^{-2}$, $\sigma_\text{noise}=1$. We produced 4000-point realisations from the GP and 50 random initial conditions $\{\theta^r\}_{r=1}^{50}$ according to $[\theta^r]_i\sim\text{Uniform}[\tfrac{1}{2}\theta_i,\tfrac{3}{2}\theta_i]$, where $\theta_i$ is the $i$-th true hyperparameter.

We considered two settings: i) train from $\{\theta^r\}_{r=1}^{50}$ using ML, and ii) compute GVM from $\{\theta^r\}_{r=1}^{50}$ and then perform ML starting from the value found by GVM. Each procedure was implemented using a single  realisation (to test stability wrt $\theta^r$) and different realisations (to test stability wrt the data). Our estimates $\hat\theta_i$ were assessed in terms of the  NLL and the relative mean absolute error (RMAE) of the parameters $\sum_{i=1}^8|\theta_i-\hat\theta_i|/|\theta_i|$.

Fig.~\ref{fig:stability} shows the NLL (left) and RMAE (right) versus computation time, for the cases of fixed (top) and different (bottom) observations; all times start from $t=1$ and are presented in logarithmic scale. First, in all cases the GVM initialisation (in red) took about half a second and resulted in an NLL/RMAE virtually identical to those achieved by ML initialised by GVM, this means that GVM provides reliable parameters and not just initial conditions for ML. Second, for the fixed observations (top), the GVM was stable wrt $\theta^r$ unlike ML which in some cases diverged. Third, for the multiple observations (bottom) GVM-initialised ML diverged in two (out of 50) runs, which is far fewer than the times that random-initialised ML diverged.

\begin{figure}[h]
  \includegraphics[width=0.49\textwidth]{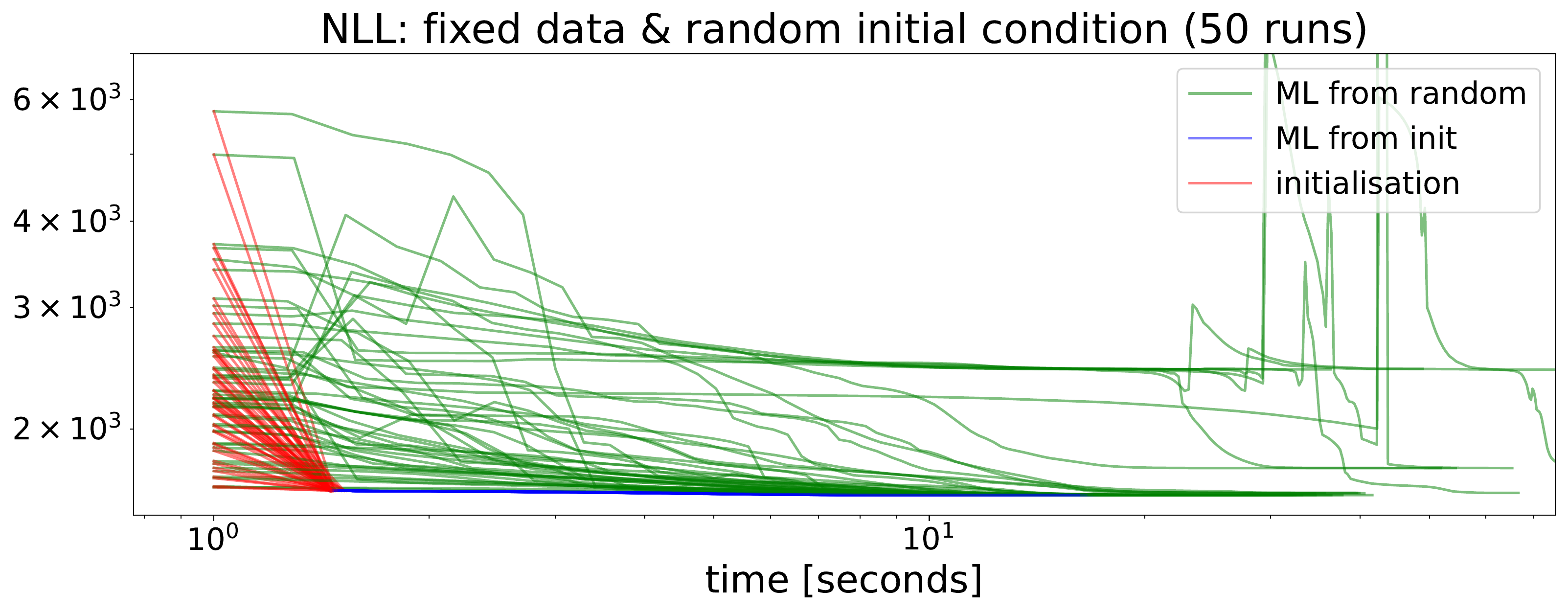} \hfill
  \includegraphics[width=0.49\textwidth]{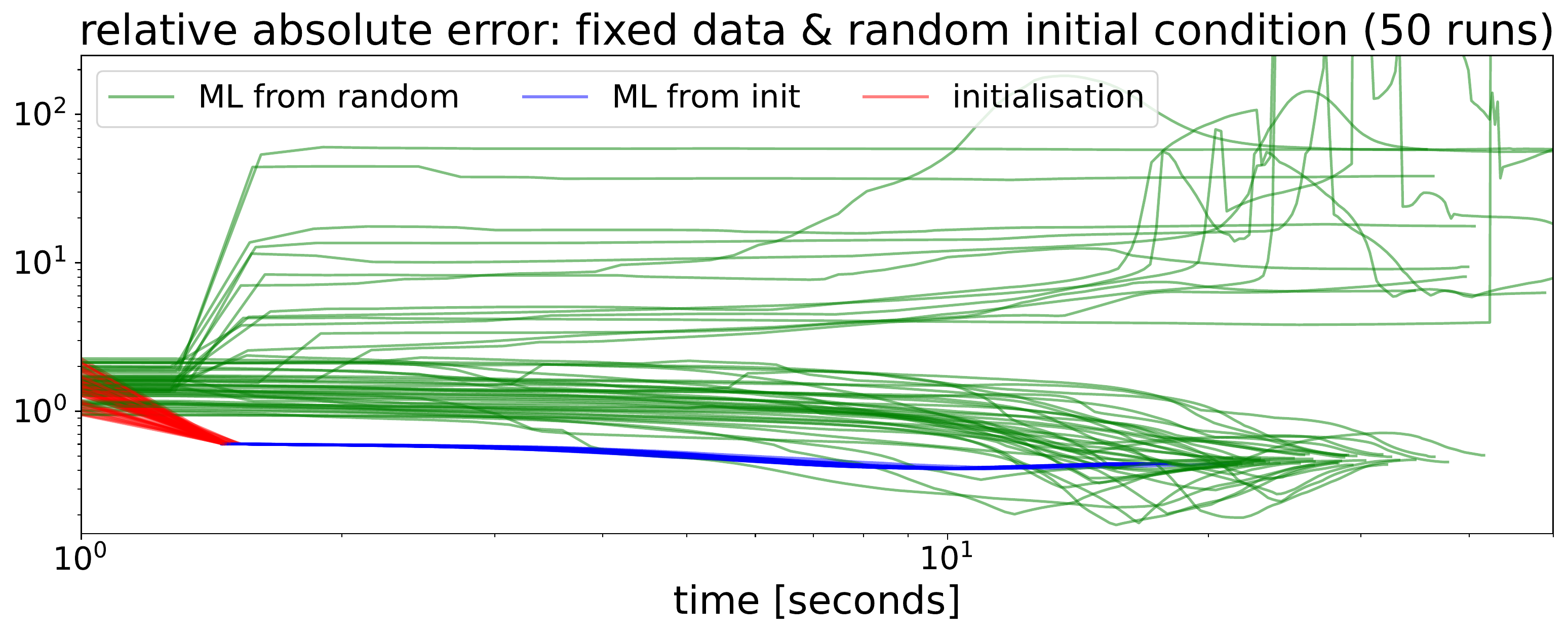}
  \includegraphics[width=0.49\textwidth]{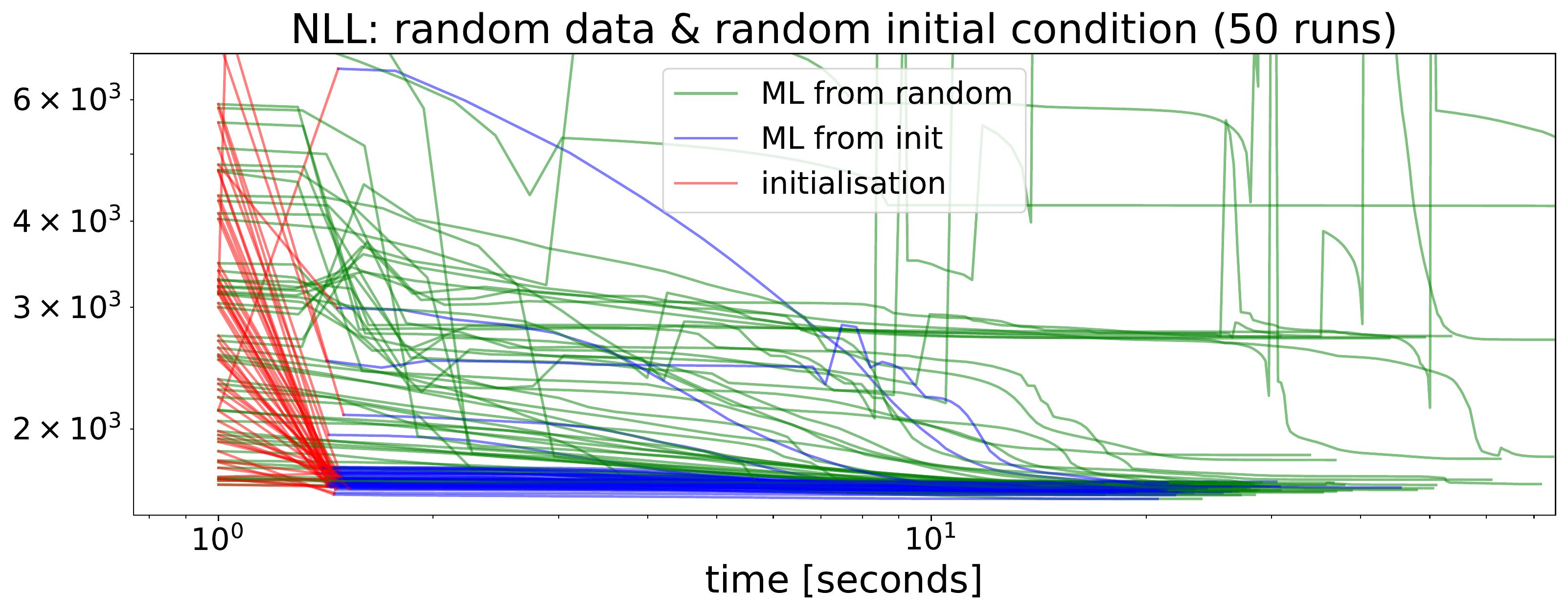} \hfill
  \includegraphics[width=0.49\textwidth]{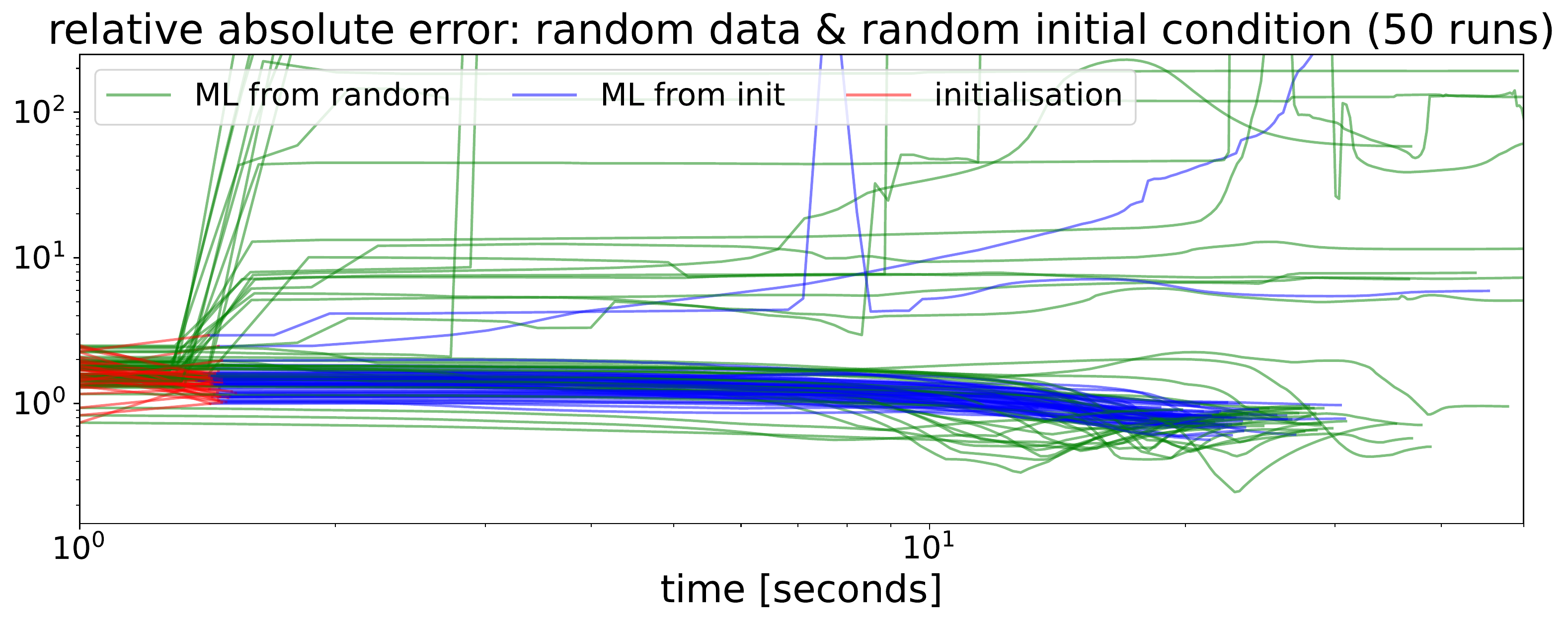}
  \caption{GVM as initialisation for ML starting form 50 random initial conditions: proposed GVM (red), standard ML (green) and ML starting from GVM (blue). The top plots consider a single dataset, while the bottom plots consider different realisations for each initial condition. The L-BFGS-B optimizer was used with the default gradient tolerance in order for the results to be comparable.}
  \label{fig:stability}
\end{figure}

\subsection[]{E2: Sensibility of GVM wrt the Periodogram (exact case: $W_2$, location-scale)}

We then considered kernels with location-scale PSDs and the $W_2$ metric over the PSDs. This case has a unique solution but requires us to compute $\hat S_n$ in \eqref{eq:D-GP}; this experiment evaluates different ways of doing so. We produced 4000 observations \textbf{evenly-sampled} in $[0,1000]$ from GPs with square exponential and rectangular PSDs (which correspond to the Exp-cos and Sinc kernels respectively---\CR{see Table \ref{tab:loc-scale}}), \CR{each with location $\mu \sim \text{U}[0.025,0.075]$ and scale $l \sim \text{U}[0.01,0.02]$}. We computed $\hat S$ via the Periodogram, Welch and Bartlett methods with different windows. \CR{Table \ref{tab:sensitivity} shows the percentage relative error (PRE)\footnote{PRE = $100\frac{|\theta-\hat\theta|}{\theta}$, where $\theta$ is the true parameter and $\hat\theta$ its estimate.} averaged over 50 runs}. \CR{The estimates of both parameters are fairly consistent: their magnitude does not change radically for different windows and Periodogram methods. In particular, the estimates of the location parameter are accurate with an average error in the order of 2\% for both kernels. The scale parameter was, however, more difficult to estimate, this can be attributed to the spectral energy spread in the Periodogram, which, when using the Wasserstein-2 metric, results in the scale parameter to be overestimated. }For both kernels and $\mu=0.05, l=0.01$, the case of the (windowless) Periodogram is shown in Fig.~\ref{fig:sensibility} and the remaining cases are all shown in Appendix \ref{appendix:E2}. In the light of these results, we considered the Periodogram (no window) for the remaining experiments.

\setlength{\tabcolsep}{4pt}
\renewcommand{\arraystretch}{1}

\begin{table*}[t]
\centering
\caption{
\CR{Performance of GVM learning GPs with the Exp-cos and Sinc kernels under different sampling settings, Periodogram methods and windows. Each entry of the table shows the average percentage relative error for location (left) and scale (right), with their standard deviations, separated by the symbol "/". True parameters where $\mu \sim \text{U}[0.025,0.075]$ and $l \sim \text{U}[0.01,0.02]$; averages and standard deviations computed over 50 runs.}}
\label{tab:sensitivity}
\small
\begin{tabular}{cl|ccc}
\toprule
              Kernel      &   Window   & Periodogram           & Bartlett               & Welch                   \\ \hline
\multirow{3}{*}{Exp-cos}  & none & 2.30\p1.61/33.41\p13.27 & 2.17\p1.91/20.88\p11.69  & 2.05\p1.64/24.87\p14.35    \\
                          & hann & 2.98\p2.14/34.93\p14.33 & 3.02\p2.36/26.23\p13.05  & 2.52\p1.75/31.83\p13.29    \\
                          & hamming & 2.92\p2.04/34.93\p14.16 & 2.93\p2.29/27.35\p13.26  & 2.49\p1.73/32.26\p13.22  \\ \hline
\multirow{3}{*}{Sinc}     & none & 2.36\p1.65/8.93\p6.73 & 2.63\p2.23/83.23\p26.39 & 2.31\p1.77/38.87\p15.95   \\
                          & hann & 2.68\p2.02/10.03\p9.22 & 2.59\p1.77/58.86\p17.48  & 2.44\p1.78/19.52\p11.32    \\
                          & hamming & 2.60\p2.01/9.62\p8.93 & 2.56\p1.75/50.62\p15.79 & 2.43\p1.76/16.51\p10.75    \\ \bottomrule
\end{tabular}
\end{table*}

\begin{figure}[ht]
  \includegraphics[width=0.49\textwidth]{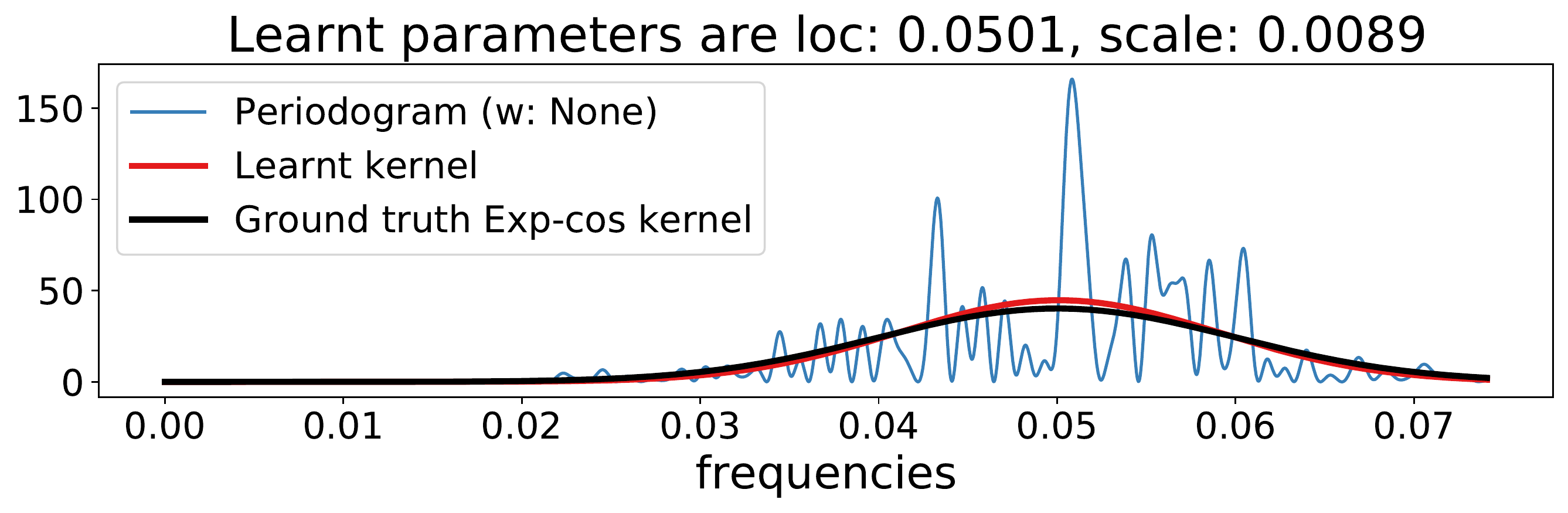}
  \includegraphics[width=0.49\textwidth]{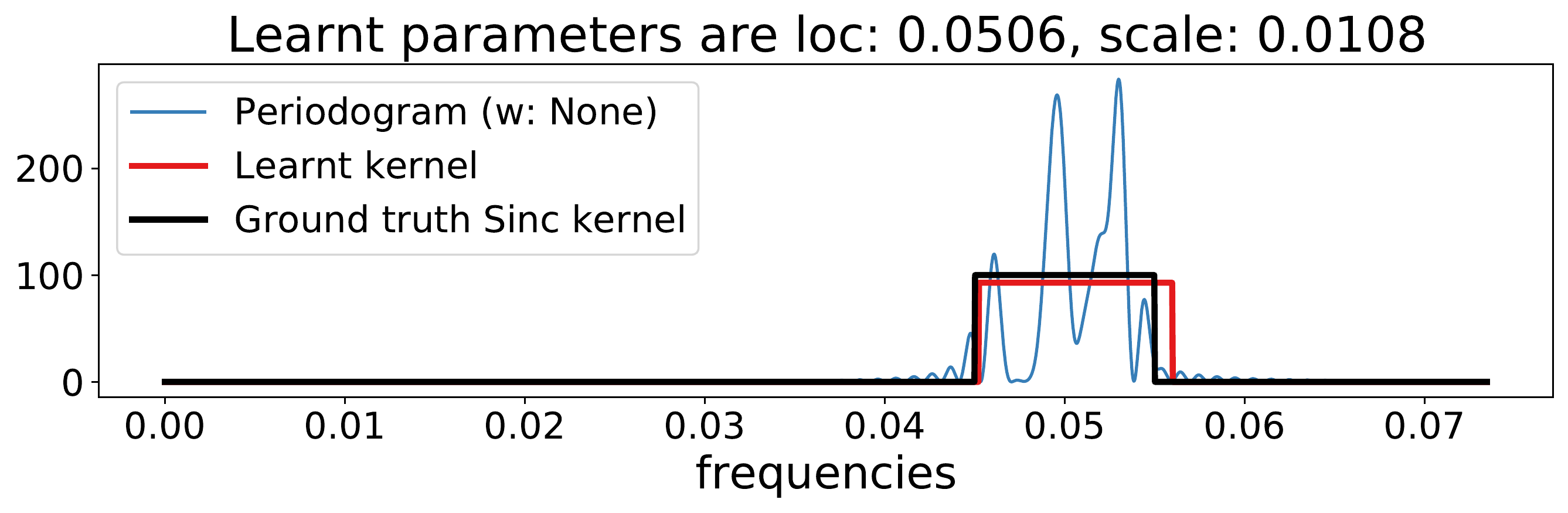}
  \caption{GVM estimates for Exp-cos (left) and Sinc (right) kernels shown in red against $\hat S$ (blue) and true kernels (black). This case: Periodogram, no window.}
  \label{fig:sensibility}
\end{figure}

\subsection{E3: Linear complexity (exact case)} 

%\begin{wrapfigure}{r}{0.55\textwidth}
%\centering
%  \includegraphics[width=0.55\textwidth]{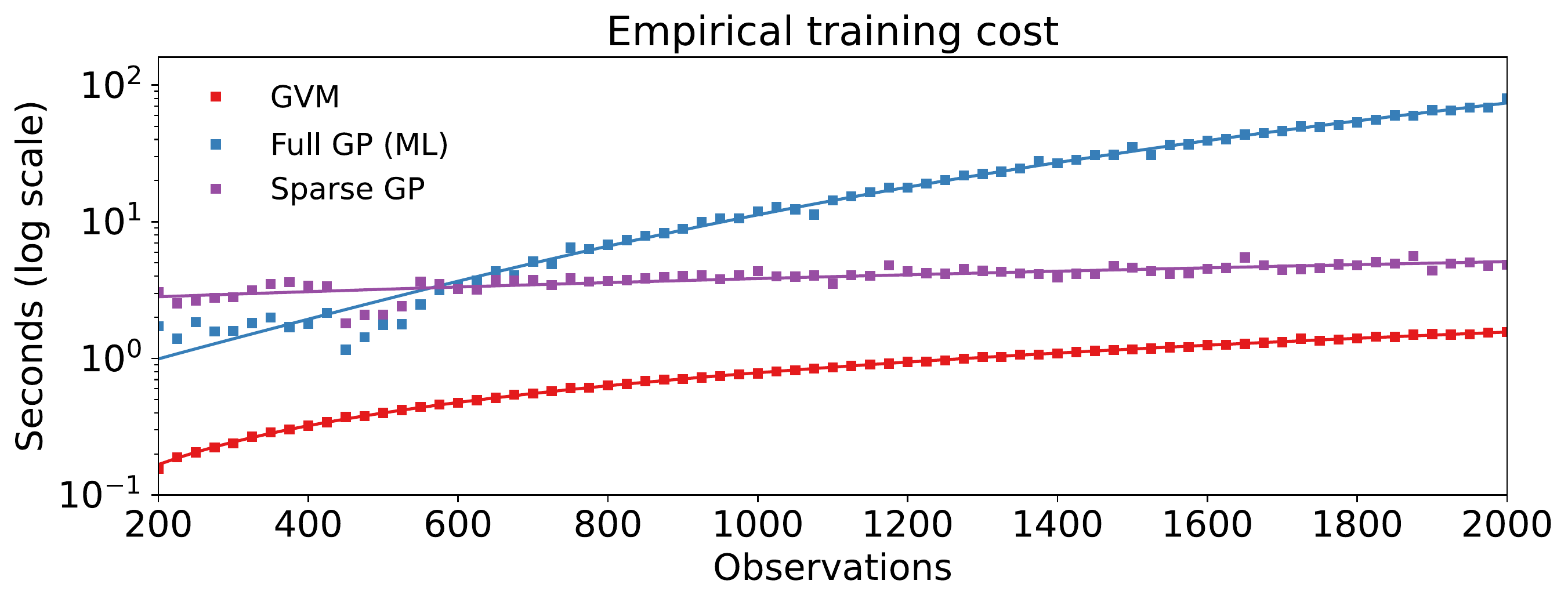}
%  \caption{Training times vs number of datapoints.}
%  \label{fig:comp_cost}
%\end{wrapfigure}

We then evaluated the computation time for the exact case of GVM ($W_2$ distance and location-scale family) for an increasing amount of observations. We considered \textbf{unevenly-sampled} observations from an single component SM kernel ($\mu=0.05, \sigma=0.01$) in the range $[0,1000]$. We compared GVM against i) the ML estimate starting from the GVM value (full GP, 100 iterations), and ii) the sparse GP using 200 pseudo inputs~\citep{sparseGP}. Fig.~\ref{fig:comp_cost} shows the computing times versus the number of observations thus validating the claimed linear cost of GVM and its superiority wrt to the rest of the methods. The (solid line) interpolation in the plot is of linear order for GVM, linear for sparse GP since the number of inducing points is fixed, and cubic for the full GP. 

\begin{figure}[ht]
\centering
  \includegraphics[width=0.7\textwidth]{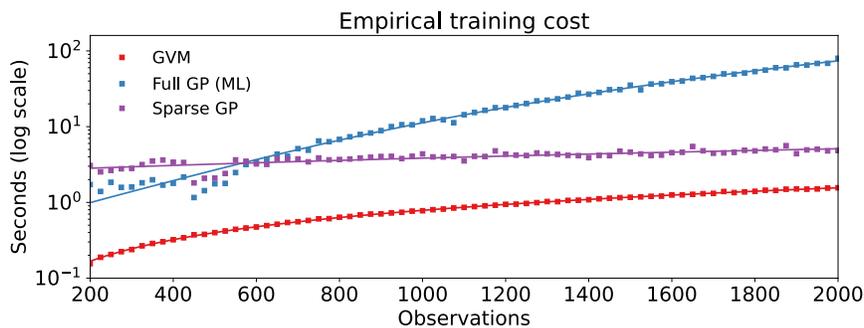}
  \caption{Training times vs number of datapoints for the proposed GVM, full GP and sparse GP.}
  \label{fig:comp_cost}
\end{figure}

%Additionally, Fig.~\ref{fig:comp_cost} (bottom) shows the estimated parameters for SL-GP and maximum likelihood. Notice that SL-GP succeeds in estimating both parameters for more that 750 observations and that ML benefits from SL-GP's initial conditions except for some instances around 500 iterations (for the location parameter). The fact that SL-GP fails to identify the parameters for fewer than 750 observations is explained from the fact that the 95\% of the model (Gaussian) spectral energy is concentrated on frequencies $[\mu-2\sigma,\mu+2\sigma] = [0.03,0.07]$, meaning that the (approximate) Nyquist frequency is $f_N = (2\pi*0.14)$ and is equivalent to sampling 879 observations in the considered range $[0,1000]$. This further validates the SL-GP estimates from a Nyquist \cite{Nyquist_1928,Shannon_1949} perspective.

\CB{

\subsection[]{E4: Performance and cost of spectral and temporal metrics over the same time series}

This experiment compares the performance and computational cost of the temporal and spectral implementations of GVM for a common time series and GP models of increasing complexity. We used a real-world audio signal from the Free Spoken Digit Dataset\footnote{\url{https://github.com/Jakobovski/free-spoken-digit-dataset}}. GVM was  implemented with the metrics $L_1$ and $L_2$ both in the spectral and temporal domain to learn a sample from the above dataset, which was 4300 samples long. The kernel considered was a spectral mixture (SM)~\citep{pmlr-v28-wilson13}, Fig.~\ref{fig:comparison_losses_and_times} shows losses and running times as a function of the number of components of the SM kernel; all runs use the Powell optimiser. 

\begin{figure}[ht]
  \centering
  \includegraphics[width=0.49\textwidth]{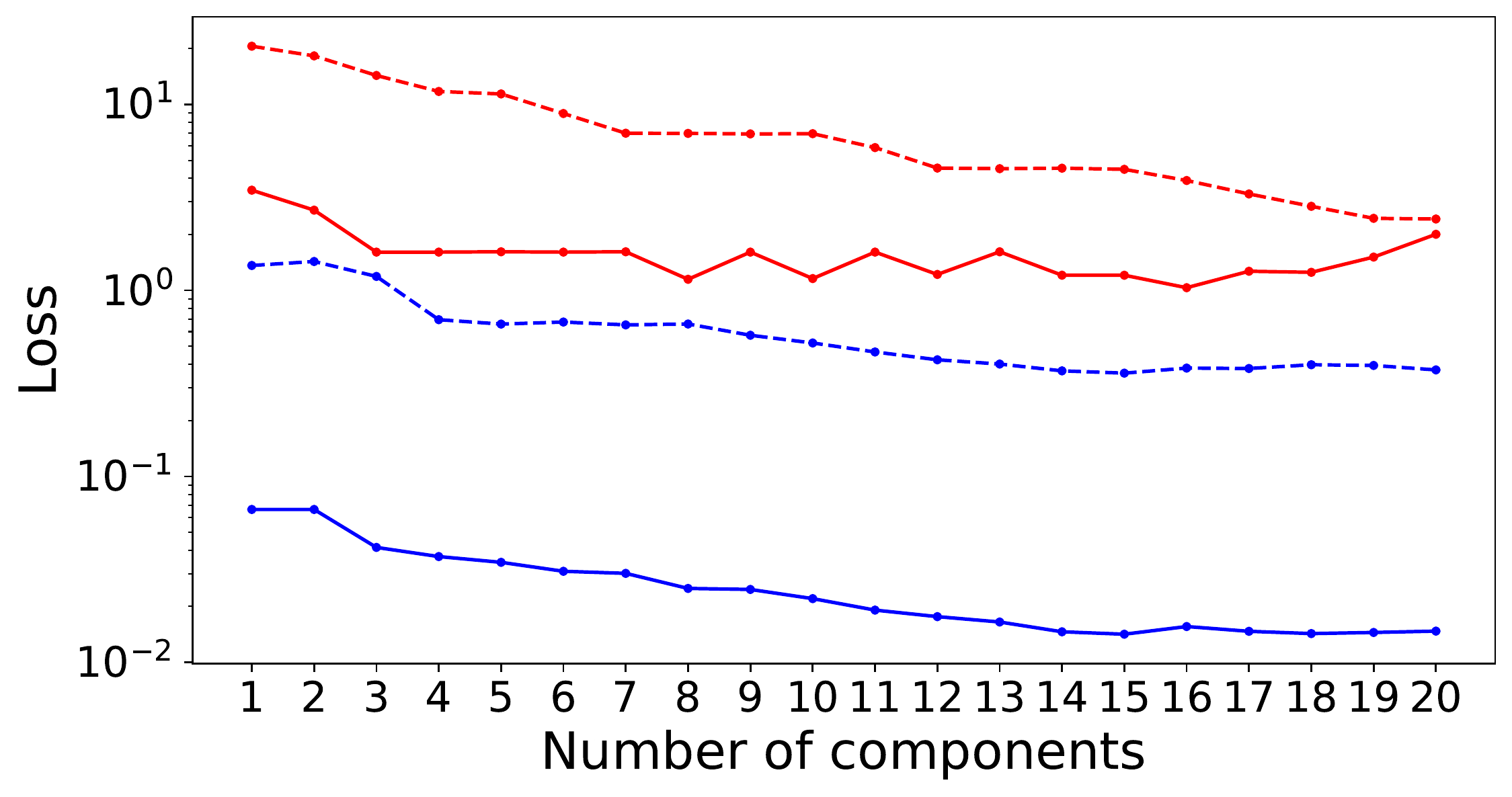}
    \includegraphics[width=0.49\textwidth]{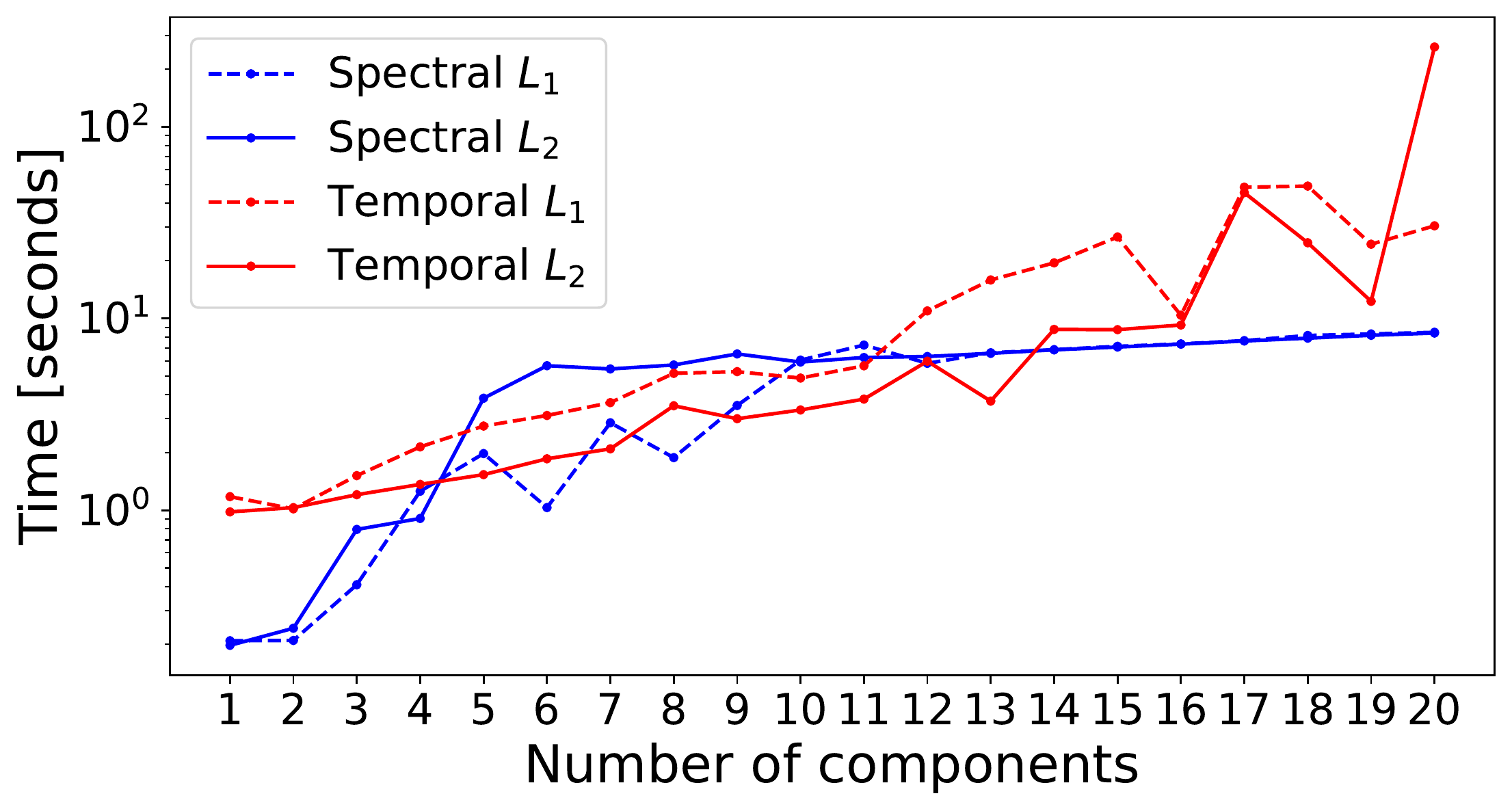}
  \caption{Temporal and spectral implementation of GVM: Losses (left) and running times (right) as a function of the order of the spectral mixture kernel.}
  \label{fig:comparison_losses_and_times}
\end{figure}

From the left plot in Fig.~\ref{fig:comparison_losses_and_times} (losses) let us recall that each implementation has its own metric and thus they are not comparable directly, however, notice that both spectral implementations are monotonic with respect to the model order. Furthermore, the fact that the Temporal $L_2$ loss  increases with the model order suggests that the optimisation in the time domain is more challenging than in its spectral counterpart. In terms of computational complexity, the right plot in Fig.~\ref{fig:comparison_losses_and_times} confirms monotonicity of the computational cost with the number of kernel parameters, however, notice that the spectral implementations reached a plateau after 10 components; further suggesting the superiority of the spectral implementation in terms of ease of optimisation.

Lastly, Fig.~\ref{fig:comparison_examples} shows the fitting of both implementations and metrics in their respective domains. In the time domain (top plots) we can see that the $L_1$ metric (top left) provides a generally acceptable fit but misses some peaks of the autocorrelation function (empirical kernel estimate $\hat K_n$), while the $L_2$ metric (top right) aims to reach the peaks of $\hat K_n$ at the cost of missing central parts of it. In the spectral domain (bottom plots) we see only minor discrepancies for both metrics, with perhaps the most interesting feature being the difference in the peak at around frequency 0.04, where $L_1$ matched the peak and $L_2$ provided a wider fit.

\begin{figure}[ht]
  \centering
  \includegraphics[width=0.49\textwidth]{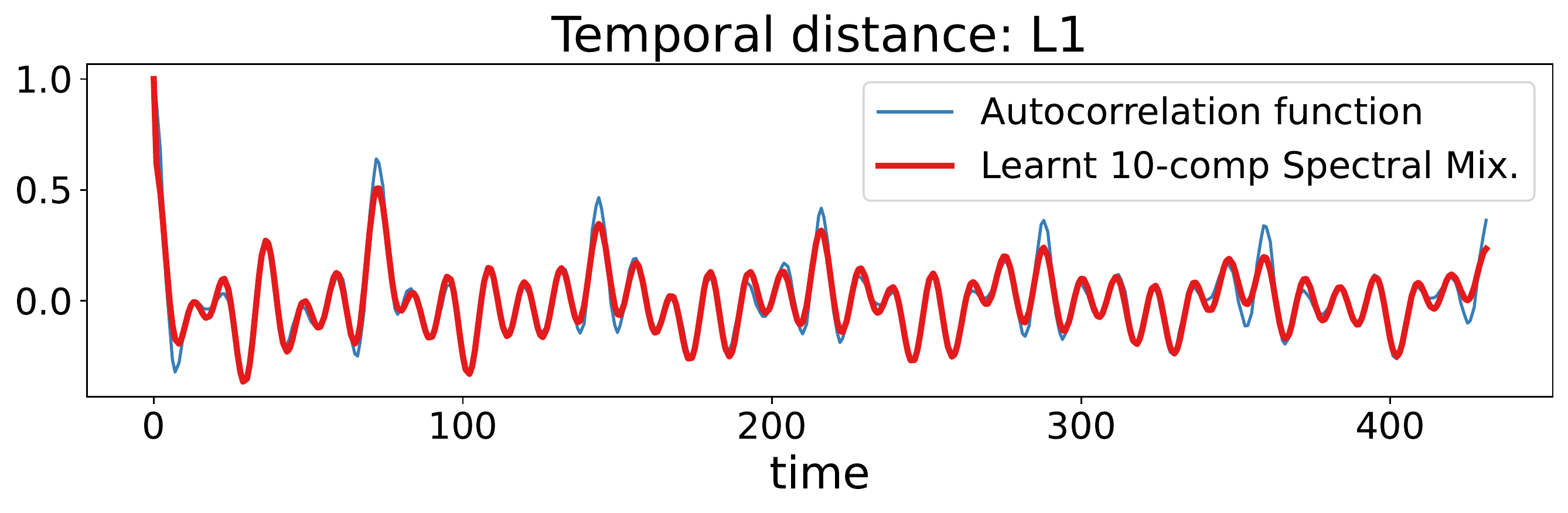}
    \includegraphics[width=0.49\textwidth]{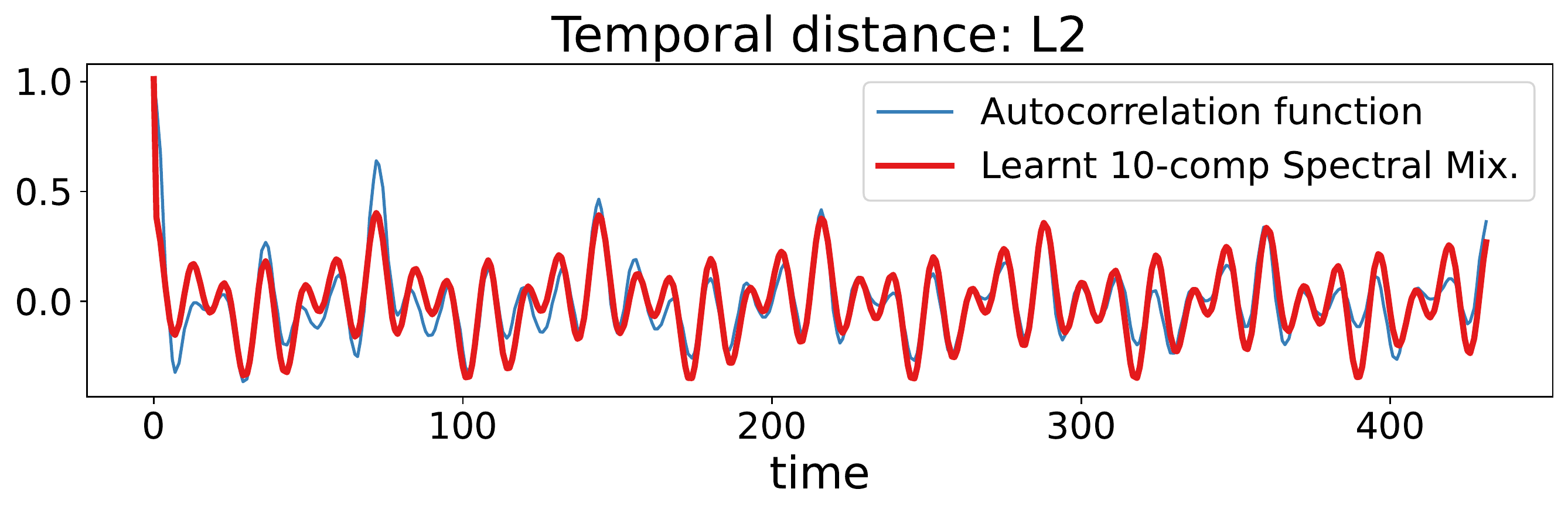}
      \includegraphics[width=0.49\textwidth]{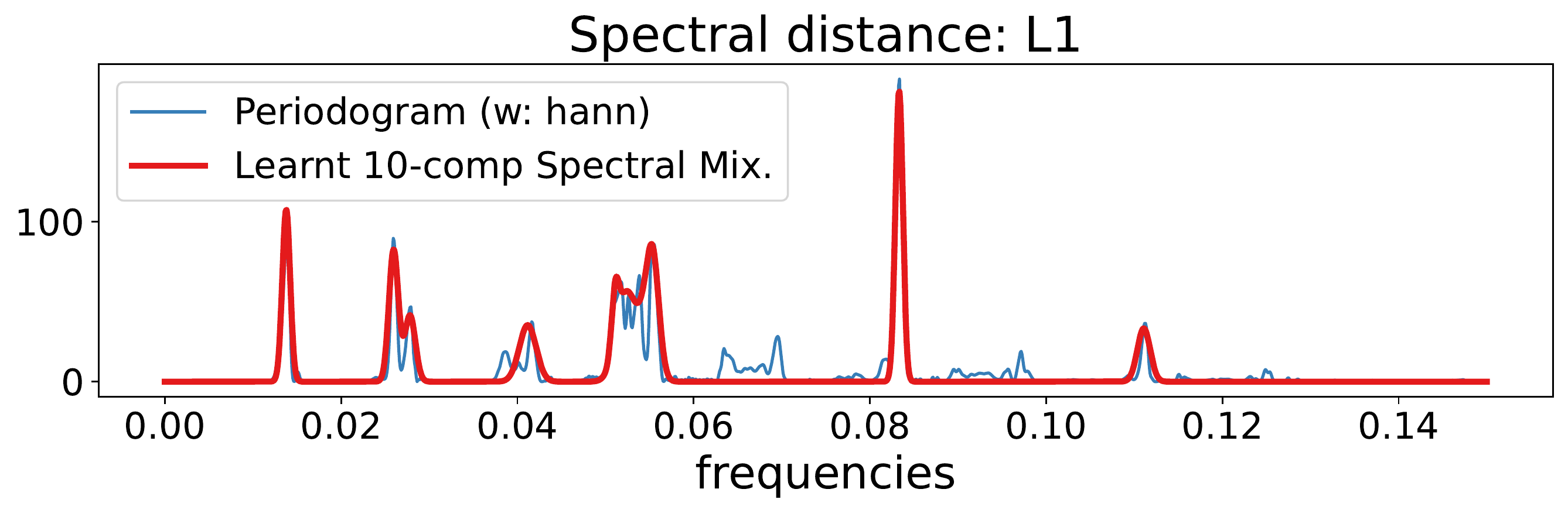}
        \includegraphics[width=0.49\textwidth]{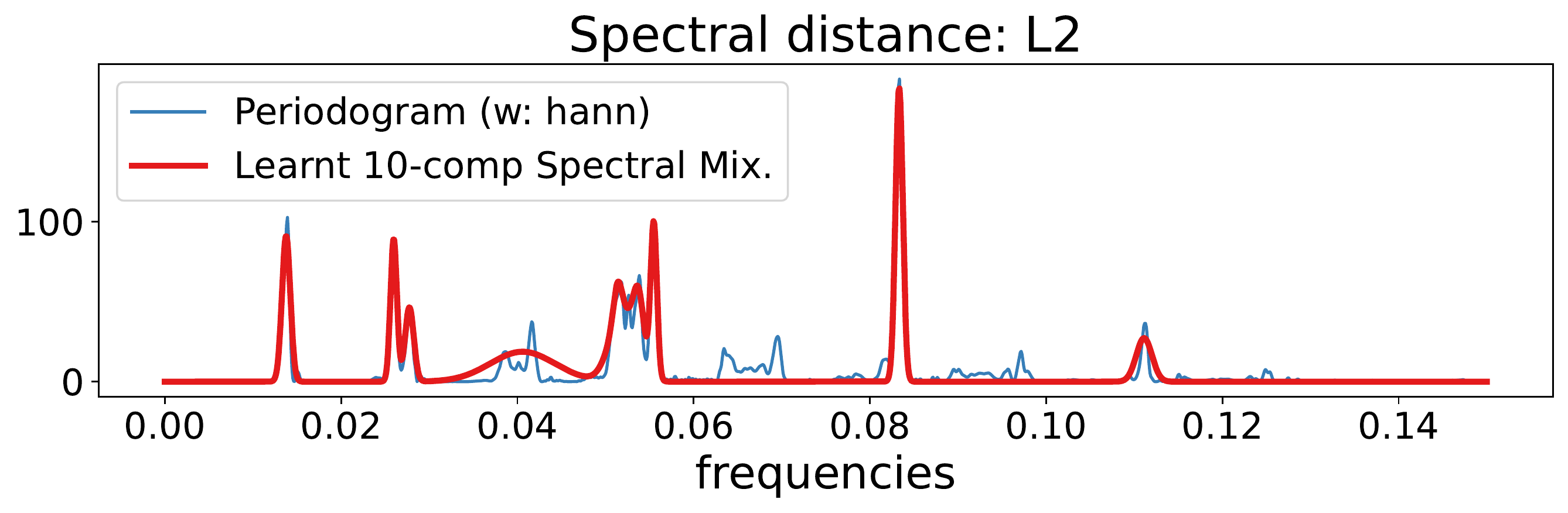}
  \caption{Example of matching for the  temporal (top) and spectral (bottom) implementation of GVM for the 10-component SM using the $L_1$ (left) and $L_2$ (right) metrics.}
  \label{fig:comparison_examples}
\end{figure}

With this comparison, we validate the intuition that for kernels with concentrated spectral information (such as the spectral mixture) the spectral implementation of GVM is more robust to the metric and faces a less challenging optimisation task.

}
\subsection[]{E5: Fitting a 20-component spectral mixture (different spectral metrics)}

This experiment shows the effect of different spectral distances in the GVM estimates, also using an audio signal from the Free Spoken Digit Dataset (\CB{different from Experiment 4}). Based on the promising performance of the spectral implementation of GVM, we  trained a 20-component SM, a kernel known to be difficult to train, and considered the spectral distances $L_1$, $L_2$, $W_1$ and $W_2$ (spectral); Itakura-Saito and KL were unstable and left our of the comparison. Fig.~\ref{fig:fitting_mixtures_audio} shows the results of the GVM: observe that under almost all metrics, the 20-component spectral mixture matches the Periodogram (\CR{considered to be the ground truth PSD in this case}). The exception is $W_2$ which struggles to replicate the PSD peaks due to its objective of averaging mass \emph{horizontally}. 

\begin{figure}[ht]
  \centering
  \includegraphics[width=0.49\textwidth]{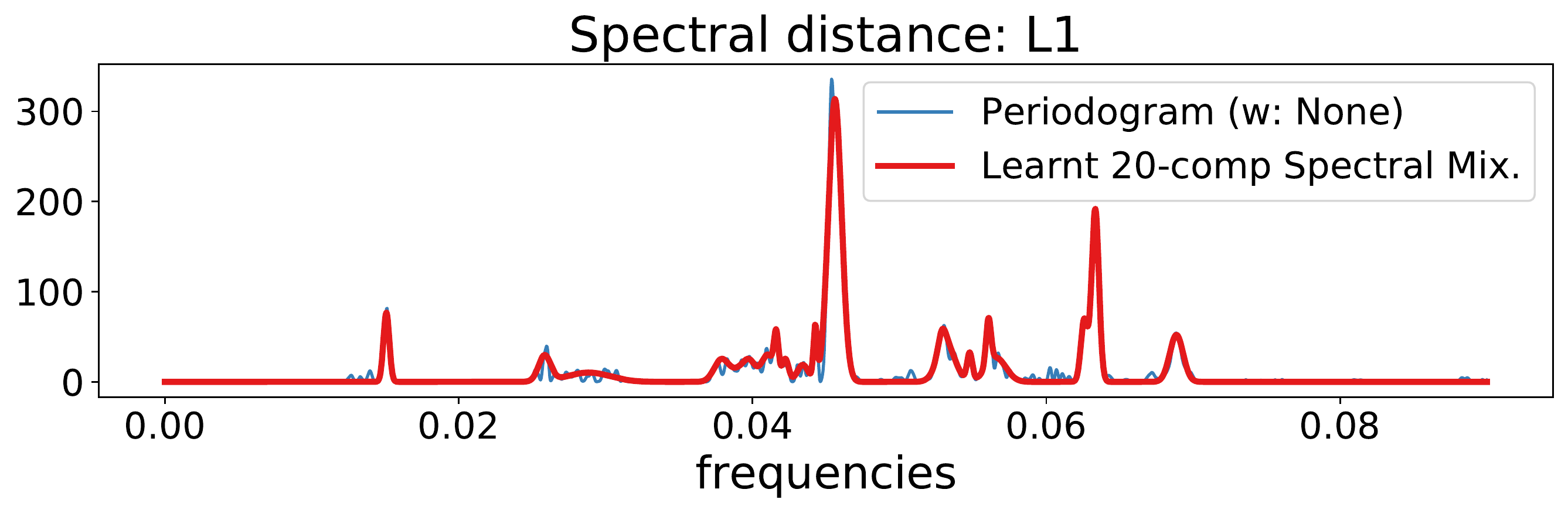}
    \includegraphics[width=0.49\textwidth]{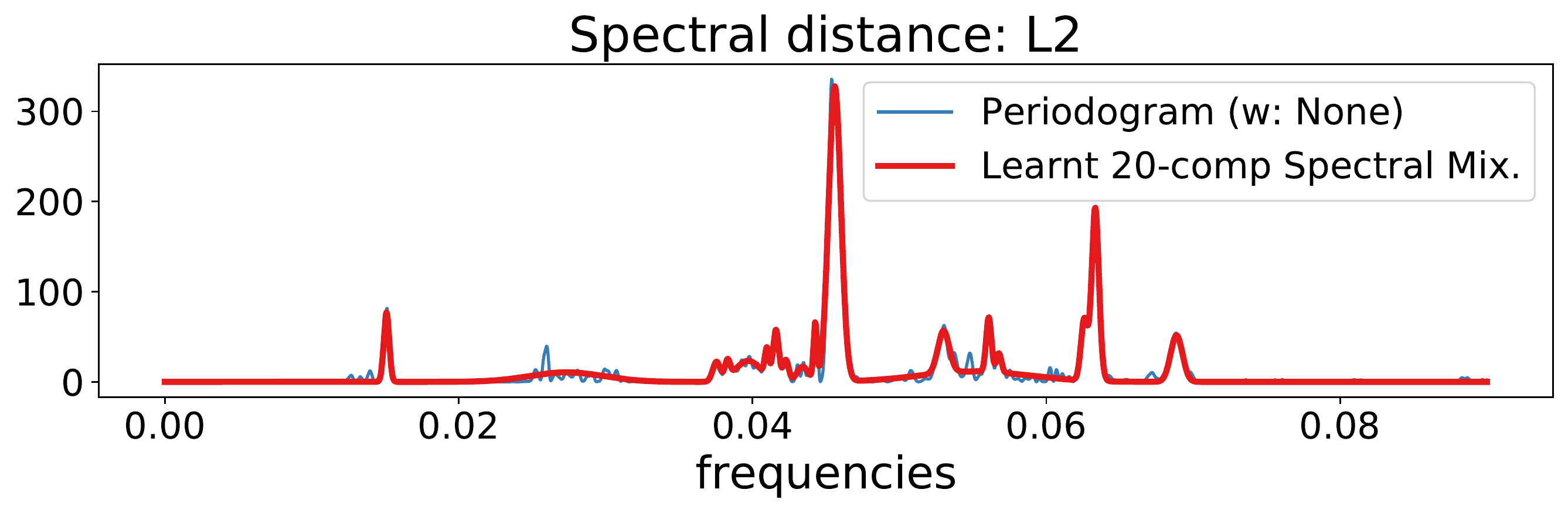}
      \includegraphics[width=0.49\textwidth]{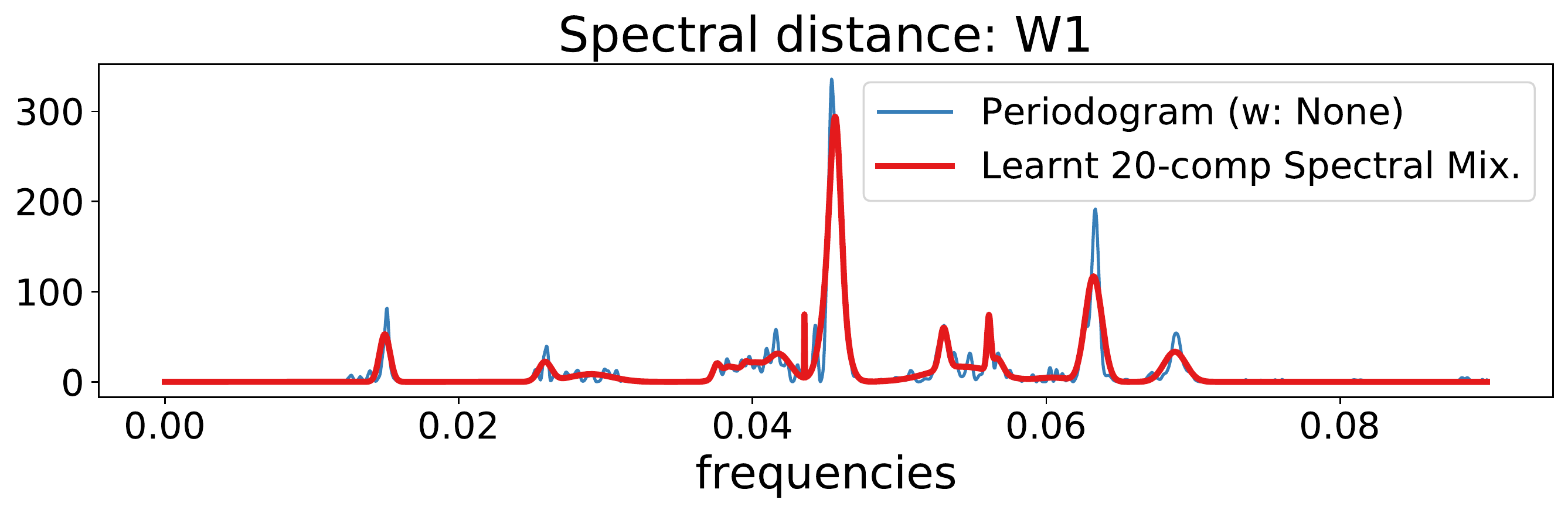}
        \includegraphics[width=0.49\textwidth]{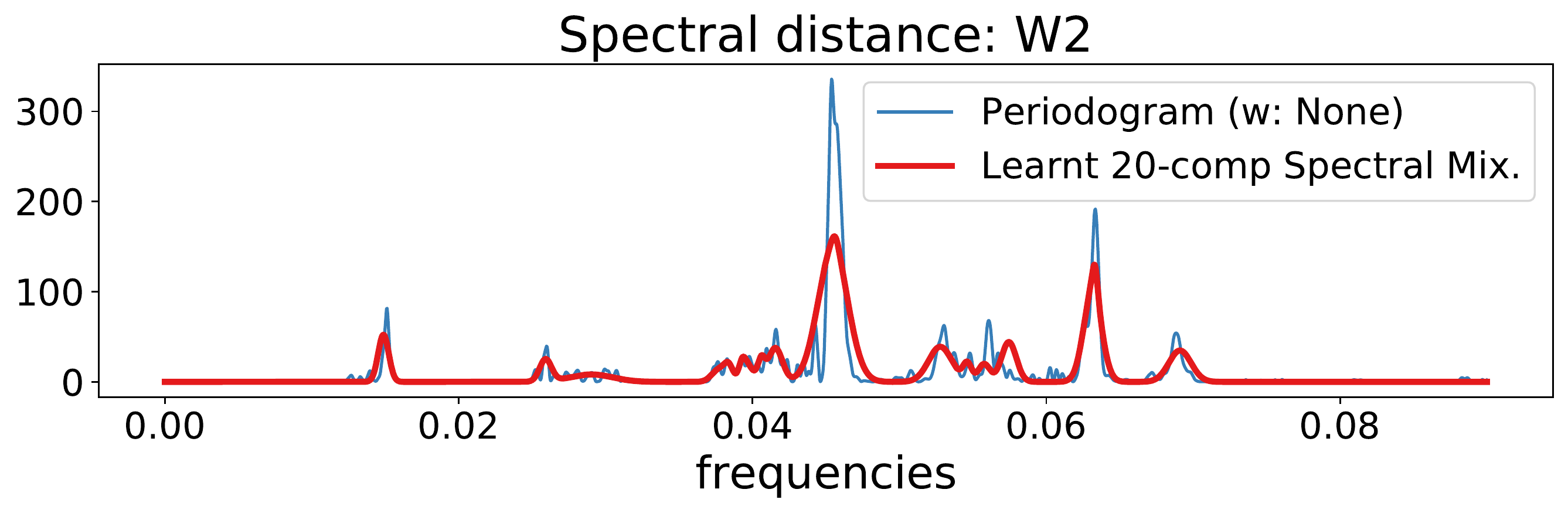}
  \caption{GVM matching Periodogram with a 20-component SM under different spectral metrics.}
  \label{fig:fitting_mixtures_audio}
\end{figure}

\subsection[E6: Learning spectral mixtures and parameter initialisation (L2, multiple orders)]{E6: Learning spectral mixtures and parameter initialisation ($L_2$, multiple orders)}

\begin{figure}[ht]
\centering
  \includegraphics[width=0.24\textwidth]{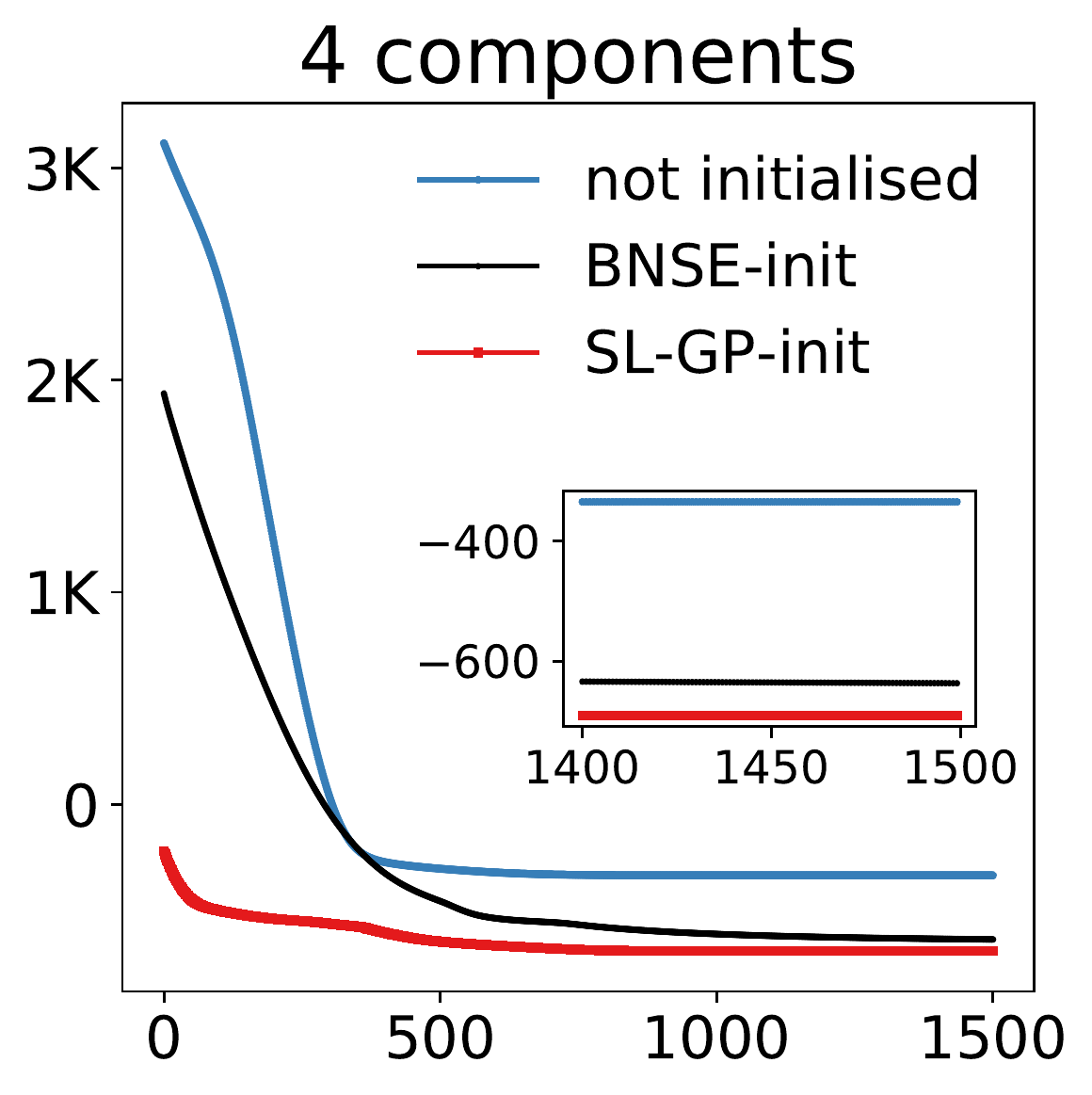}
  \includegraphics[width=0.24\textwidth]{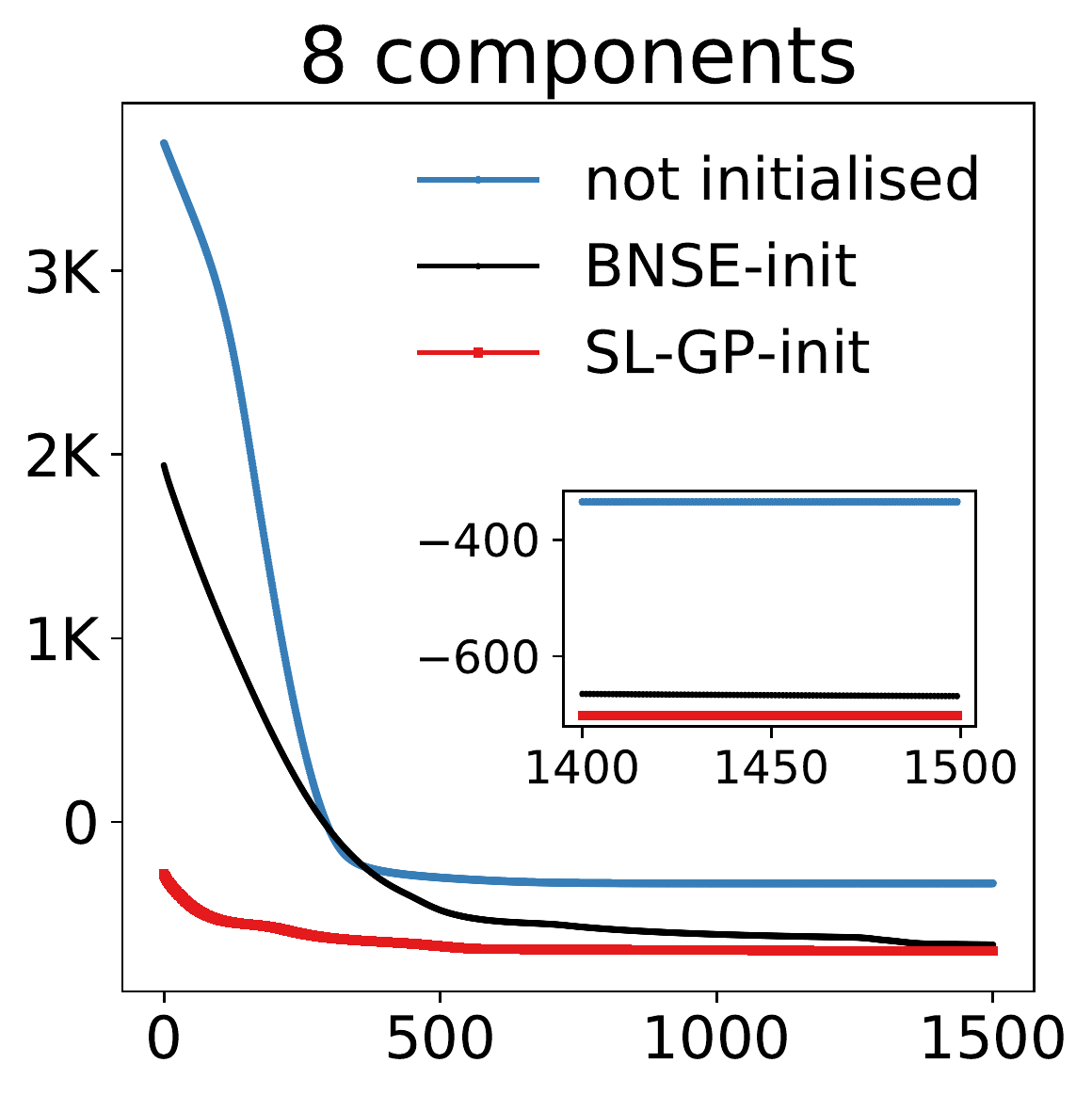}
  \includegraphics[width=0.24\textwidth]{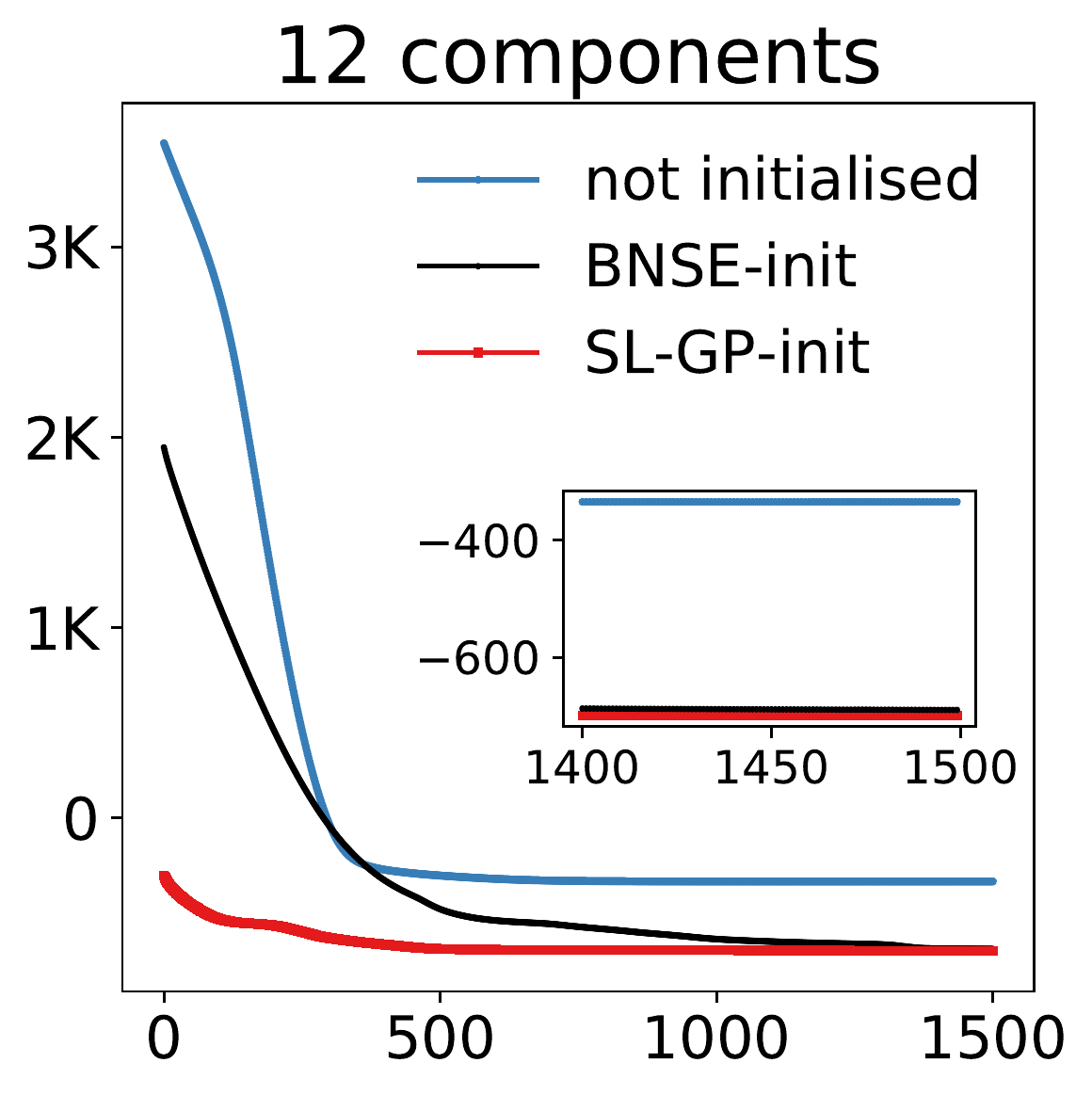}
  \includegraphics[width=0.24\textwidth]{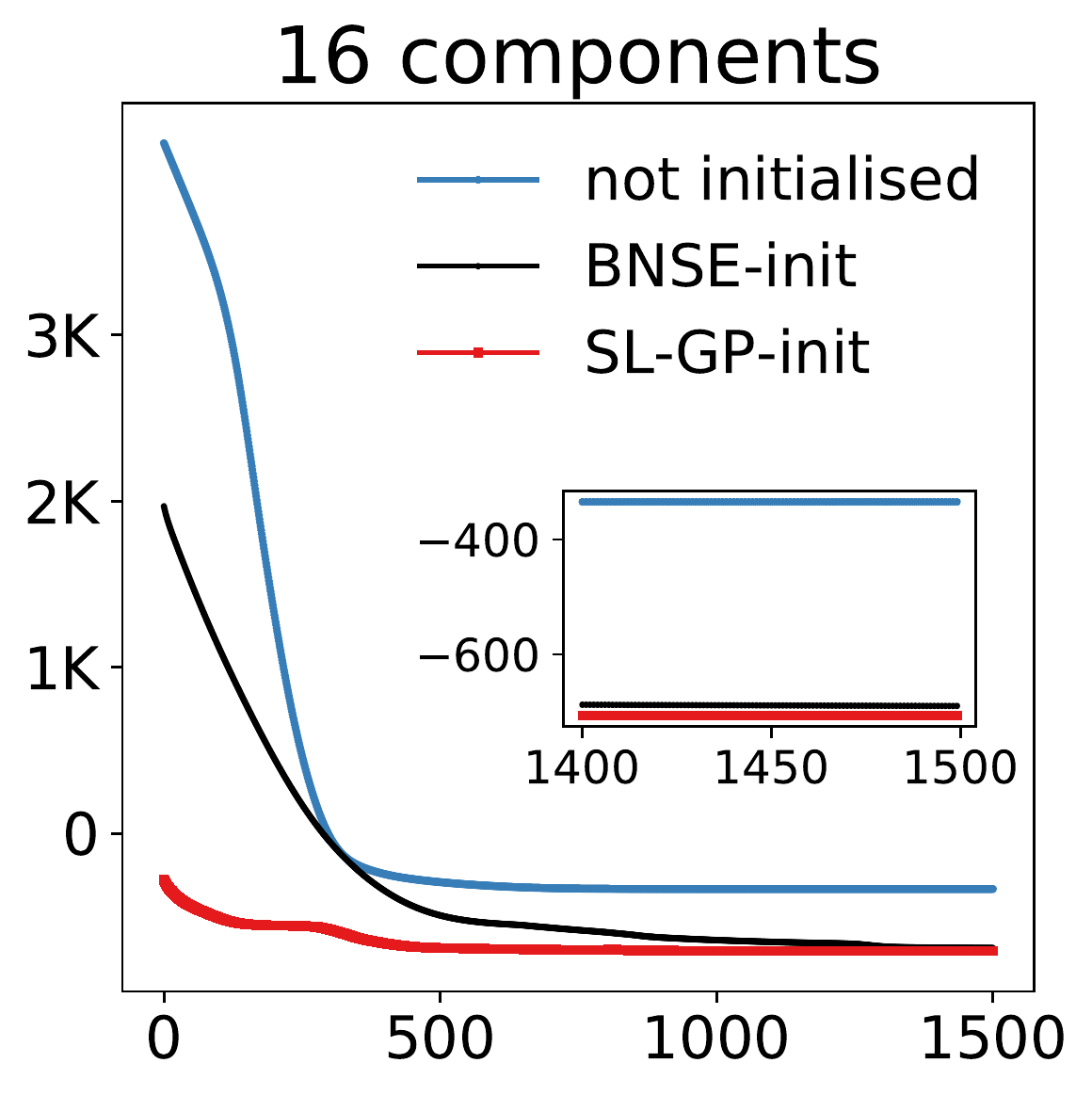}
  \caption{NLLs for the SE spectral mixtures (4,8,12 and 16 components) with different initialisation strategies.}
  \label{fig:losses_SM}
\end{figure}

In this experiment, GVM was implemented to find the initial conditions of a GP with SM kernel (4, 8, 12 and 16 components) to a real-world 1800-point heart-rate signal from the MIT-BIH database\footnote{\url{http://ecg.mit.edu/time-series/}}. We considered the $L_2$ metric (spectral) minimised with Powell and then passed the hyperparameters to an ML routine for 1500 iterations (using Adam with learning rate = 0.1). This methodology was compared against the random initialisation and provided by MOGPTK based on Bayesian nonparametric spectral estimation (BNSE)~\citep{tobar2018bayesian}. Fig.~\ref{fig:fitting_mixtures} first shows the GVM approximations to the heart-rate PSD using 16-component spectral mixtures, for both both kernels.

%\begin{wraptable}{r}{0.4\textwidth}
%\scriptsize
%\centering
%\caption{Computation times (secs) for fitting SMs.}
%\label{tab:init_SM}
%\begin{tabular}{lcccc}
%\toprule
%                    & 4-comp & 8-comp & 12-comp  & 16-comp \\ 
%\midrule
%GVM init & \textbf{2.6}          & \textbf{7.6}          & \textbf{10.9}  &  \textbf{23.9}       \\
%BNSE init & 74.6         & 68.9          & 72.0    &74.1      \\
%ML          & 420.3        & 464.6         & 529.2   & 582.6 \\       
%\bottomrule
%\end{tabular}
%\end{wraptable}

\begin{table}[h]

%\scriptsize
\centering
\small
\caption{Computation times (secs) for fitting SMs.}
\label{tab:init_SM}
\begin{tabular}{lcccc}
\toprule
                    & 4-comp & 8-comp & 12-comp  & 16-comp \\ 
\midrule
GVM init & \textbf{2.6}          & \textbf{7.6}          & \textbf{10.9}  &  \textbf{23.9}       \\
BNSE init & 74.6         & 68.9          & 72.0    &74.1      \\
ML          & 420.3        & 464.6         & 529.2   & 582.6 \\       
\bottomrule
\end{tabular}

\end{table}

Fig.~\ref{fig:losses_SM} shows NLL for the cases considered. Observe that: i) the non-initialised ML training becomes trapped in local minima in all four cases, ii) the initialisation provided by GVM provides a dramatic reduction of the NLL, even wrt to the BNSE initialisation, iii) the ``elbow'' at the beginning of the GVM-initialised case suggests that the ML training could have run for a a few iterations (e.g., 100) and still reach a sound solution. Table \ref{tab:init_SM} shows the execution times and reveals the superiority of GVM also in computational time. 

\begin{figure}[ht]
  \centering
  \includegraphics[width=0.49\textwidth]{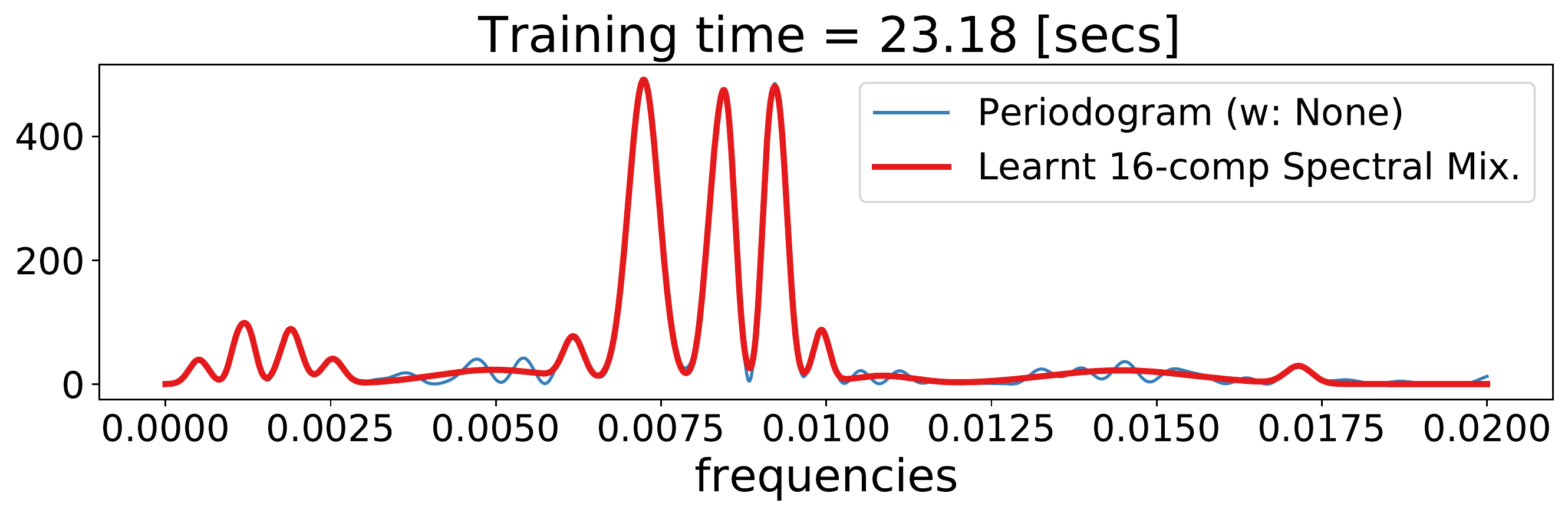}
  \includegraphics[width=0.49\textwidth]{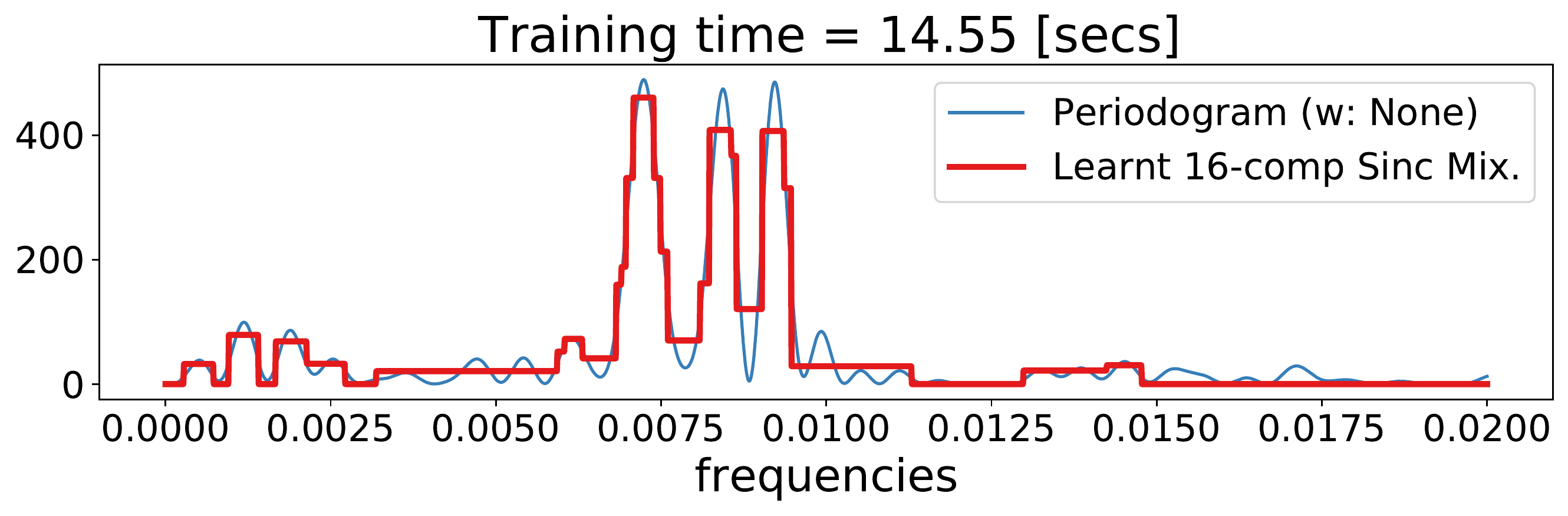}
  \caption{GVM approximations of the PSD of a 1800-sample heart-rate signal using 16-components SE (top) and rectangular (bottom) mixtures. Training time shown above each plot.}
  \label{fig:fitting_mixtures}
\end{figure}

\subsection[]{E7: Learning an isotropic SE kernel (standard variogram, 5-dimensional inputs, $L_2$)}
\label{sec:E6}

%\begin{wrapfigure}[12]{r}{0.55\textwidth}
%\centering
%  \includegraphics[trim={3.2cm 0 3.2cm 0},clip,width=0.55\textwidth]{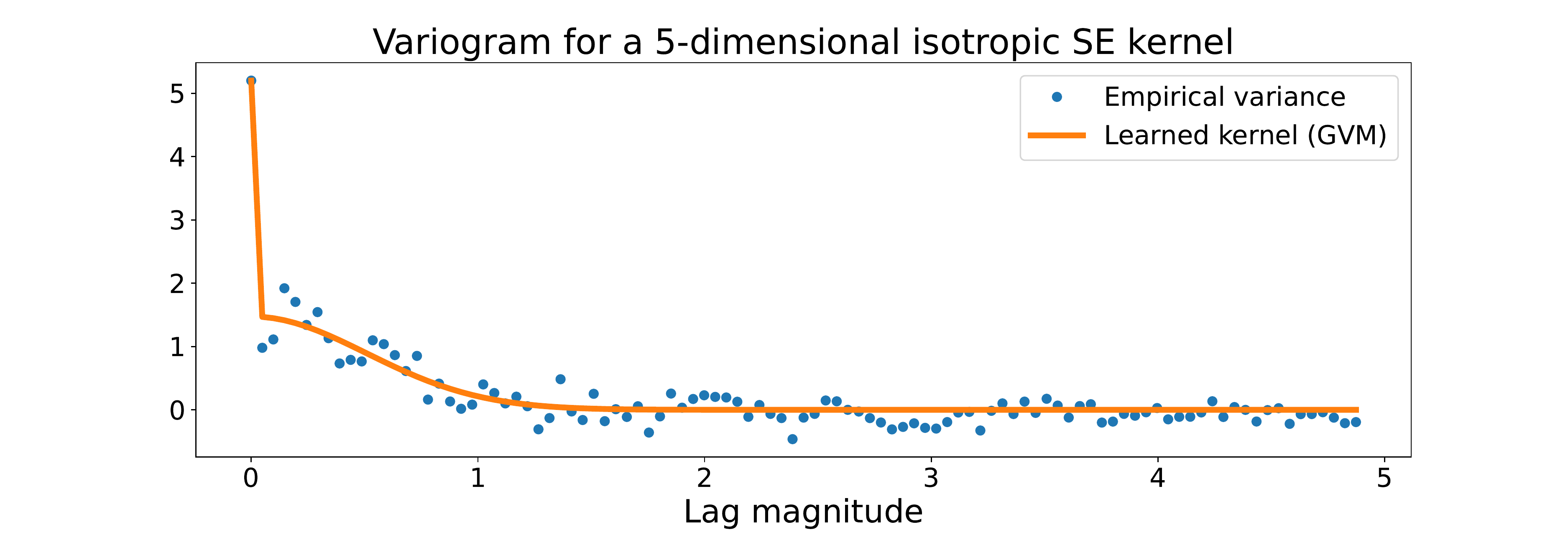}
%  \caption{Empirical covariance and fitted covariance function. The data consisted of 1000 5-dimensional datapoints.}
%  \label{fig:isotropic}
%\end{wrapfigure}

\CR{Though our work focuses on the single-input-dimension case, the rationale behind the proposed GVM method is applicable to datasets of arbitrary input dimension, therefore, to motivate the use of GVM for multi-input GPs, we present a minimal multi-input example. We considered a 5-dimensional GP with SE kernel $K(\tau) = \sigma^2\exp(-\tfrac{1}{2l^2}||\tau||^2) + \sigma_\text{noise}^2\delta_t$, and considered four sets of values for the hyperparameters. Notice that we assumed that all 5 dimensions had the same scale parameter, this is the usual setting of the Variogram method in Geostatistics~\citep{cressie1993stats,chiles1999geo}. For each set of hyperparameters, we sampled 1000 points and then implemented GVM with the $L_2$ (temporal) distance to learn the GP. Table \ref{tab:multi-input} shows the estimate error and standard deviation for four sets of hyperparameters averaged over 100 runs, from which the applicability of GVM to address the isotropic multi-input case can be confirmed. }For illustration and resemblance to the standard variogram method, Fig.~\ref{fig:isotropic} shows the case $\sigma^2=5$, $l=1$, and $\sigma_\text{noise}=1$, where the learnt hyperparameters were $\hat\sigma^2=5.85$, $\hat l=1.09$, and $\hat\sigma_\text{noise}=1.02$.

\begin{figure}[h]
  \centering
  \includegraphics[width=0.7\textwidth]{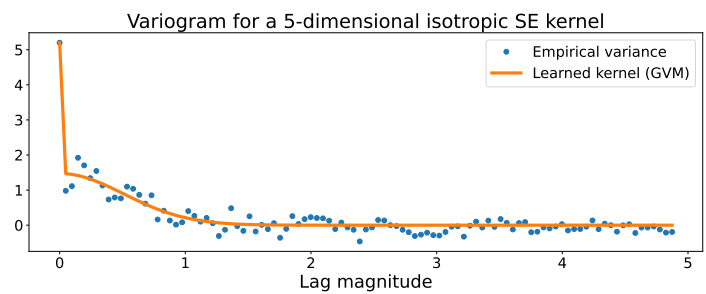}
  \caption{Empirical covariance and fitted covariance function via GVM. The data consisted of 1000 5-dimensional datapoints.}
  \label{fig:isotropic}
\end{figure}

\begin{table}
\centering
\small
\caption{\CR{GVM applied to multi-input data for four runs of synthetic data using different hyperparameters, i.e., $\sigma^2,l,\sigma_\text{noise}^2$. For each set of hyperparameter, $n=100$ runs were executed to calculate the mean and standard deviation of the estimates (i.e., $\hat{\sigma}^2,\hat{l},\hat{\sigma}_\text{noise}^2$).}}
\label{tab:multi-input}
\vspace{1em}
\begin{tabular}{c|ccc|ccc}
    \hline
    Run & $\sigma^2$ & $l$ & $\sigma_\text{noise}^2$ & $\hat{\sigma}^2$ & $\hat{l}$ & $\hat{\sigma}_\text{noise}^2$ \\
    \hline
    1 & 5 & 2 & 1 & 5.20 ± 2.07 & 0.50 ± 0.09 & 0.91 ± 0.68 \\
    2 & 15 & 2 & 2 & 14.12 ± 13.14 & 2.07 ± 1.09 & 2.04 ± 1.51 \\
    3 & 3 & 3 & 0.5 & 2.68 ± 3.33 & 3.24 ± 1.65 & 0.97 ± 0.71 \\
    4 & 0.5 & 5 & 0.5 & 0.45 ± 0.62 & 3.96 ± 2.50 & 0.56 ± 0.30 \\
    \hline
\end{tabular}
\end{table}

%!TEX root = ../tobar_cazelles_dewolff_TMLR2022.tex

\section{Conclusions}

By direct minimisation of the discrepancy among covariance functions, we have proposed a novel methodology to pretrain stationary Gaussian processes, which avoids computation of the (cubic cost) likelihood function. The found hyperparameter values can then be passed to an ML-based training routine as initial conditions to conclude the training of the model. Our approach, termed Generalised Variogram Method (GVM), represents a critical improvement in terms of computational complexity: we have shown, both theoretically and empirically, that for the particular case of the $2$-Wasserstein spectral distance and location-scale PSDs, GVM is convex and its solution can be computed in a single step. In experimental terms, we have shown the following properties of GVM in the general case: i) applicability to multi-input data, ii) stability wrt to different ways of  computing the Periodogram, iii) consistency under different realisations of the GP unlike ML, iv) computational efficiency wrt ML and sparse GPs, v) a realistic alternative to compute initial conditions for ML resulting in considerable reduction of ML iterations, and lastly,  vi) ability to train kernels of large number of components that are challenging to train from random initial conditions using ML. 

\CR{In the general formulation of the proposed GVM (i.e., using either a temporal or spectral divergence) the only requirement in our setup is stationarity, however, particular results in our proposal have specific requirements. First, when using a spectral divergence (the main novel contribution of our work), it is needed  that the kernels' Fourier transform (i.e., the PSD) can be computed and is Lebesgue integrable so that $D_F(\hat S_n,S_\theta)$ in \eqref{eq:D-GP} can be calculated; this condition is rather general and admits most stationary kernels used in the literature with the exception of the white noise variance---see Sec.\ref{sec:non-unitary}. Second, for the exact case outlined in Thm.~\ref{thm:convexWF} we require that the PSD belongs to a location-scale family, this includes standard covariance functions such as the (single-component) spectral mixture, square exponential, sinc, and cosine. PSDs that are not location-scale, such as mixtures of the above kernels, can too be dealt with spectral divergences, however, the solution is not exact and it has to be computed using numerical optimisation methods from \eqref{eq:D-GP}. Third, for all other covariances (including the white noise one) we can use GVM with temporal divergences, which only requires the kernel to be stationary and evaluated pointwise, to find the solution via \eqref{eq:empirical_loss}.

Spectral kernels are a large class of covariance functions, therefore, the GVM is widely applicable in real-world scenarios in audio, seismology, astronomy, fault diagnosis, finance and any other fields where repetitive temporal patterns arise.} In this sense, we hope that our work paves the way for further research in conceptual and applied fields. In theoretical terms, we envision extensions towards non-stationary data using, e.g., time-frequency representations or mini-batches. In practical terms, we aim to address the general-input-dimension case and that our developed companion software will help others make use of GVM as and initialisation method for ML or to directly find the required hyperparameters. 
%\newpage 
\newpage

\section*{Acknowledgments}

We would like to thank the anonymous referees for their valuable questions and comments, which allowed us to clarify the presentation of the paper and refine our results. 

Part of this work was developed when Felipe Tobar was a Visiting Researcher at the \textit{Institut de Recherche en Informatique de Toulouse} (IRIT). 

We thank financial support from Google; ANR LabEx CIMI (grant ANR-11-LABX-0040) within the French State Programme “Investissements d’Avenir”; Fondecyt-Regular 1210606; ANID-Basal Center for Mathematical Modeling FB210005; ANID-Basal Advanced Center for Electrical and Electronic Engineering FB0008; and CORFO/ANID International Centers of Excellence Program 10CEII-9157 Inria Chile, Inria Challenge OcéanIA, STICAmSud EMISTRAL, CLIMATAmSud GreenAI and Inria associated team SusAIn.

%
%\subsubsection*{Acknowledgments}
%Use unnumbered third level headings for the acknowledgments. All
%acknowledgments, including those to funding agencies, go at the end of the paper.
%Only add this information once your submission is accepted and deanonymized. 

\bibliography{refs}

\begin{thebibliography}{38}
\providecommand{\natexlab}[1]{#1}
\providecommand{\url}[1]{\texttt{#1}}
\expandafter\ifx\csname urlstyle\endcsname\relax
  \providecommand{\doi}[1]{doi: #1}\else
  \providecommand{\doi}{doi: \begingroup \urlstyle{rm}\Url}\fi

\bibitem[Amari(2016)]{amari2016information}
S.~Amari.
\newblock \emph{Information Geometry and its Applications}, volume 194.
\newblock Springer, 2016.

\bibitem[Bachoc et~al.(2018)Bachoc, Gamboa, Loubes, and
  Venet]{francois2018gaussian}
F.~Bachoc, F.~Gamboa, J.-M. Loubes, and N.~Venet.
\newblock A {G}aussian process regression model for distribution inputs.
\newblock \emph{IEEE Transactions on Information Theory}, 64\penalty0
  (10):\penalty0 6620--6637, 2018.

\bibitem[Basseville(1989)]{basseville1989distance}
M.~Basseville.
\newblock Distance measures for signal processing and pattern recognition.
\newblock \emph{Signal processing}, 18\penalty0 (4):\penalty0 349--369, 1989.

\bibitem[Bernton et~al.(2019)Bernton, Jacob, Gerber, and
  Robert]{bernton2019parameter}
E.~Bernton, P.~Jacob, M.~Gerber, and C.~Robert.
\newblock On parameter estimation with the {W}asserstein distance.
\newblock \emph{Information and Inference: A Journal of the IMA}, 8\penalty0
  (4):\penalty0 657--676, 2019.

\bibitem[Bochner(1959)]{bochner1959lectures}
S.~Bochner.
\newblock \emph{Lectures on Fourier integrals}, volume~42.
\newblock Princeton University Press, 1959.

\bibitem[Burt et~al.(2019)Burt, Rasmussen, and Van Der~Wilk]{burt2019rates}
D.~Burt, C.E. Rasmussen, and M.~Van Der~Wilk.
\newblock Rates of convergence for sparse variational {G}aussian process
  regression.
\newblock In \emph{International Conference on Machine Learning}, pp.\
  862--871. PMLR, 2019.

\bibitem[Cazelles et~al.(2021)Cazelles, Robert, and Tobar]{WF2021}
E.~Cazelles, A.~Robert, and F.~Tobar.
\newblock The {W}asserstein-{F}ourier distance for stationary time series.
\newblock \emph{IEEE Transactions on Signal Processing}, 69:\penalty0 709--721,
  2021.

\bibitem[Chiles \& Delfiner(1999)Chiles and Delfiner]{chiles1999geo}
J.~P. Chiles and P.~Delfiner.
\newblock \emph{Geostatistics, Modelling Spatial Uncertainty}.
\newblock Wiley, 1999.

\bibitem[Cortes et~al.(2012)Cortes, Mohri, and
  Rostamizadeh]{JMLR:v13:cortes12a}
C.~Cortes, M.~Mohri, and A.~Rostamizadeh.
\newblock Algorithms for learning kernels based on centered alignment.
\newblock \emph{Journal of Machine Learning Research}, 13\penalty0
  (28):\penalty0 795--828, 2012.

\bibitem[Cressie(1993)]{cressie1993stats}
N.~Cressie.
\newblock \emph{Statistics for Spatial Data}.
\newblock Wiley, 1993.

\bibitem[Cristianini et~al.(2001)Cristianini, Shawe-Taylor, Elisseeff, and
  Kandola]{NIPS2001_1f71e393}
N.~Cristianini, J.~Shawe-Taylor, A.~Elisseeff, and J.~Kandola.
\newblock On kernel-target alignment.
\newblock In \emph{Advances in Neural Information Processing Systems},
  volume~14. MIT Press, 2001.

\bibitem[Damianou \& Lawrence(2013)Damianou and Lawrence]{damianou2013deep}
A.~Damianou and N.D. Lawrence.
\newblock Deep {G}aussian processes.
\newblock In \emph{Proc.~of the International Conference on Artificial
  Intelligence and Statistics}, volume~31 of \emph{PMLR}, pp.\  207--215, 2013.

\bibitem[de~Wolff et~al.(2021)de~Wolff, Cuevas, and Tobar]{wolff2021mogptk}
T.~de~Wolff, A.~Cuevas, and F.~Tobar.
\newblock {MOGPTK}: {T}he multi-output {G}aussian process toolkit.
\newblock \emph{Neurocomputing}, 424:\penalty0 49 -- 53, 2021.

\bibitem[Henderson \& Solomon(2019)Henderson and Solomon]{henderson2019audio}
T.~Henderson and J.~Solomon.
\newblock Audio transport: A generalized portamento via optimal transport.
\newblock \emph{arXiv preprint arXiv:1906.06763}, 2019.

\bibitem[Hensman et~al.(2013)Hensman, Fusi, and Lawrence]{hensman2013gaussian}
J.~Hensman, N.~Fusi, and N.D. Lawrence.
\newblock Gaussian processes for big data.
\newblock In \emph{Proc.~of the Conference on Uncertainty in Artificial
  Intelligence}, pp.\  282–290, 2013.

\bibitem[Hoffman \& Ma(2020)Hoffman and Ma]{hoffman2020black}
M.~Hoffman and Y.-A. Ma.
\newblock Black-box variational inference as distilled {L}angevin dynamics.
\newblock In \emph{Proc.~of the International Conference on Machine Learning},
  pp.\  9267--9277, 2020.

\bibitem[Itakura(1968)]{itakura1968analysis}
F.~Itakura.
\newblock Analysis synthesis telephony based on the maximum likelihood method.
\newblock In \emph{The 6th International Congress on Acoustics}, pp.\
  280--292, 1968.

\bibitem[Lalchand et~al.(2022)Lalchand, Bruinsma, Burt, and
  Rasmussen]{lalchand2022sparse}
V.~Lalchand, W.~Bruinsma, D.R. Burt, and C.E. Rasmussen.
\newblock Sparse {G}aussian process hyperparameters: Optimize or integrate?
\newblock In \emph{Advances in Neural Information Processing Systems}, 2022.

\bibitem[Liu et~al.(2020)Liu, Sun, Ramadge, and Adams]{liu2020ahgp}
S.~Liu, X.~Sun, P.J. Ramadge, and R.P. Adams.
\newblock Task-agnostic amortized inference of {G}aussian process
  hyperparameters.
\newblock In \emph{Advances in Neural Information Processing Systems},
  volume~33, pp.\  21440--21452. Curran Associates, Inc., 2020.

\bibitem[Mallasto \& Feragen(2017)Mallasto and Feragen]{mallasto2017learning}
A.~Mallasto and A.~Feragen.
\newblock Learning from uncertain curves: The 2-{W}asserstein metric for
  {G}aussian processes.
\newblock In \emph{Advances in Neural Information Processing Systems},
  volume~30. Curran Associates, Inc., 2017.

\bibitem[Masarotto et~al.(2019)Masarotto, Panaretos, and
  Zemel]{masarotto2019procrustes}
V.~Masarotto, V.~M Panaretos, and Y.~Zemel.
\newblock Procrustes metrics on covariance operators and optimal transportation
  of {G}aussian processes.
\newblock \emph{Sankhya A}, 81\penalty0 (1):\penalty0 172--213, 2019.

\bibitem[Peyr{\'e} \& Cuturi(2019)Peyr{\'e} and Cuturi]{peyre2019computational}
G.~Peyr{\'e} and M.~Cuturi.
\newblock Computational optimal transport: With applications to data science.
\newblock \emph{Foundations and Trends{\textregistered} in Machine Learning},
  11\penalty0 (5-6):\penalty0 355--607, 2019.

\bibitem[Popescu et~al.(2022)Popescu, Sharp, Cole, and
  Glocker]{popescu2022hierarchical}
S.~Popescu, D.~Sharp, J.~Cole, and B.~Glocker.
\newblock Hierarchical {G}aussian processes with {W}asserstein-2 kernels, 2022.
\newblock arXiv preprint 2010.14877.

\bibitem[Quinonero-Candela \& Rasmussen(2005)Quinonero-Candela and
  Rasmussen]{quinonero2005unifying}
J.~Quinonero-Candela and C.E. Rasmussen.
\newblock A unifying view of sparse approximate {G}aussian process regression.
\newblock \emph{The Journal of Machine Learning Research}, 6:\penalty0
  1939--1959, 2005.

\bibitem[Rasmussen \& Williams(2005)Rasmussen and Williams]{GPML}
C.E. Rasmussen and C.K.I. Williams.
\newblock \emph{Gaussian Processes for Machine Learning}.
\newblock MIT Press, 2005.

\bibitem[Rios \& Tobar(2018)Rios and Tobar]{rios2018learning}
G.~Rios and F.~Tobar.
\newblock Learning non-{G}aussian time series using the {B}ox-{C}ox {G}aussian
  process.
\newblock In \emph{International Joint Conference on Neural Networks}, pp.\
  1--8, 2018.

\bibitem[Salimbeni \& Deisenroth(2017)Salimbeni and
  Deisenroth]{salimbeni2017doubly}
H.~Salimbeni and M.~Deisenroth.
\newblock Doubly stochastic variational inference for deep {G}aussian
  processes.
\newblock In \emph{Advances in Neural Information Processing Systems},
  volume~30. Curran Associates, Inc., 2017.

\bibitem[Schuster(1900)]{schuster1900periodogram}
A.~Schuster.
\newblock The periodogram of magnetic declination as obtained from the records
  of the {G}reenwich {O}bservatory during the years 1871–1895.
\newblock \emph{Trans. Cambridge Philos. Soc}, 18:\penalty0 107–135, 1900.

\bibitem[Snelson \& Ghahramani(2006)Snelson and Ghahramani]{sparseGP}
E.~Snelson and Z.~Ghahramani.
\newblock Sparse {G}aussian processes using pseudo-inputs.
\newblock In \emph{Advances in Neural Information Processing Systems},
  volume~18. MIT Press, 2006.

\bibitem[Stoica \& Moses(2005)Stoica and Moses]{stoica2005spectral}
P.~Stoica and R.~Moses.
\newblock \emph{Spectral analysis of signals}.
\newblock Pearson Prentice Hall Upper Saddle River, NJ, 2005.

\bibitem[Titsias(2009)]{titsias2009variational}
M.~Titsias.
\newblock Variational learning of inducing variables in sparse {G}aussian
  processes.
\newblock In \emph{Proceedings of the Twelth International Conference on
  Artificial Intelligence and Statistics}, volume~5, pp.\  567--574, 2009.

\bibitem[Tobar(2018)]{tobar2018bayesian}
F.~Tobar.
\newblock Bayesian nonparametric spectral estimation.
\newblock In \emph{Advances in Neural Information Processing Systems 31}, pp.\
  10148--10158, 2018.

\bibitem[Tobar et~al.(2015)Tobar, Bui, and Turner]{tobar2015learning}
F.~Tobar, T.~Bui, and R.~Turner.
\newblock {Learning stationary time series using {G}aussian processes with
  nonparametric kernels}.
\newblock In \emph{Advances in Neural Information Processing Systems 28}, pp.\
  3483--3491, 2015.

\bibitem[van~der Wilk et~al.(2020)van~der Wilk, John, Artemev, and
  Hensman]{wilk2020variational}
M.~van~der Wilk, ST~John, A.~Artemev, and J.~Hensman.
\newblock Variational {G}aussian process models without matrix inverses.
\newblock In \emph{Proceedings of The 2nd Symposium on Advances in Approximate
  Bayesian Inference}, volume 118, pp.\  1--9. PMLR, 08 Dec 2020.

\bibitem[Vetterli et~al.(2014)Vetterli, Kovačević, and Goyal]{vkg_2014}
M.~Vetterli, J.~Kovačević, and V.K. Goyal.
\newblock \emph{Foundations of Signal Processing}.
\newblock Cambridge University Press, 2014.

\bibitem[Villani(2009)]{villani2009optimal}
C.~Villani.
\newblock \emph{Optimal transport: old and new}, volume 338.
\newblock Springer, 2009.

\bibitem[Wang et~al.(2019)Wang, Pleiss, Gardner, Tyree, Weinberger, and
  Wilson]{ke2019exact}
K.~Wang, G.~Pleiss, J.~Gardner, S.~Tyree, K.Q. Weinberger, and A.G. Wilson.
\newblock Exact {G}aussian processes on a million data points.
\newblock In \emph{Advances in Neural Information Processing Systems},
  volume~32. Curran Associates, Inc., 2019.

\bibitem[Wilson \& Adams(2013)Wilson and Adams]{pmlr-v28-wilson13}
A.~Wilson and R.~Adams.
\newblock Gaussian process kernels for pattern discovery and extrapolation.
\newblock In \emph{Proceedings of the 30th International Conference on Machine
  Learning}, volume~28, pp.\  1067--1075. PMLR, 2013.

\end{thebibliography}
\bibliographystyle{tmlr}

\appendix
%\section{Appendix}
%!TEX root = ../tobar_cazelles_dewolff_TMLR2022.tex

\section{\CR{Background: Fourier analysis of continuous-time stochastic processes}}
\label{appendix:fourier}

We present a brief description of the elements of Fourier analysis used in our proposal. For a more in-depth introduction of the subject, the reader is referred to \citep{stoica2005spectral} and \citep{vkg_2014}.

Let us consider a stochastic process over $\R^n$ denoted $(y_t)_{t\in\R^n}$, $n\in\N$. Since the usual condition of  stationarity might be restrictive in some cases, we consider the following weaker version of stationarity. 

\begin{definition}[Wide-sense stationarity]
The stochastic process y is wide-sense stationary (WSS) if its mean function is constant and its autocorrelation only depends on the temporal difference. That is, 
\begin{align}
  \E[y_t] &= \mu(t) = \mu\\
  \E[y_{t_1}y_{t_2}] &= c(t_1,t_2) = c(t_1-t_2).
\end{align}
\end{definition}

Though strict stationarity implies WSS, the implication does not hold in the opposite direction. However, for Gaussian processes (GP), whose distribution is fully determined by the first two moments, strict stationarity and WSS are equivalent conditions.

\begin{definition}[Power spectral density]
Let $y$ be a WSS stochastic process with an absolutely integrable correlation function $c(\tau) = \E[y_ty_{t-\tau}]$. The Fourier transform of $c(\cdot)$ given by 
\begin{equation}
  \label{eq:WK1}
   S(\xi) = \int_{-\infty}^{\infty} c(\tau)e^{-j2\pi\xi\tau}d\tau
\end{equation}
is called \emph{power spectral density}.
\end{definition}

Observe that, if both $c$ and $S$ satisfy the conditions for the inversion of the Fourier transform, then $c$ and $S$  are \emph{Fourier pairs}, meaning that we also have
\begin{equation}
   \label{eq:WK2}
   c(\tau) = \int_{-\infty}^{\infty} S(\xi)e^{j2\pi\xi\tau}d\xi.
\end{equation}

Equations \ref{eq:WK1} and \ref{eq:WK2} are a consequence of the Wiener-Khinchin Theorem \citep[p.~292]{vkg_2014}, which relates autocorrelation structure of a WSS process with its distribution of energy across frequencies. This result is instrumental in the construction of GP: under the observation that GPs are uniquely determined by their autocorrelation (or covariance) function, the design of a GP can be conveniently performed in the frequency domain by parametrising the PSD. Furthermore, recall that for zero-mean GPs the autocorrelation and autocovariance functions coincide; thus, we refer to the latter as the covariance kernel. 

There are well-known pairs of kernels and PSDs that follow from the above observation. Figure \ref{fig:kernels_psds} illustrates five cases, showing the covariance kernels, their PSD and a sample of a GP with the corresponding kernel. 

\begin{figure}
\centering
  \includegraphics[trim={10em 5em 10em 5em},clip,width=0.95\textwidth]{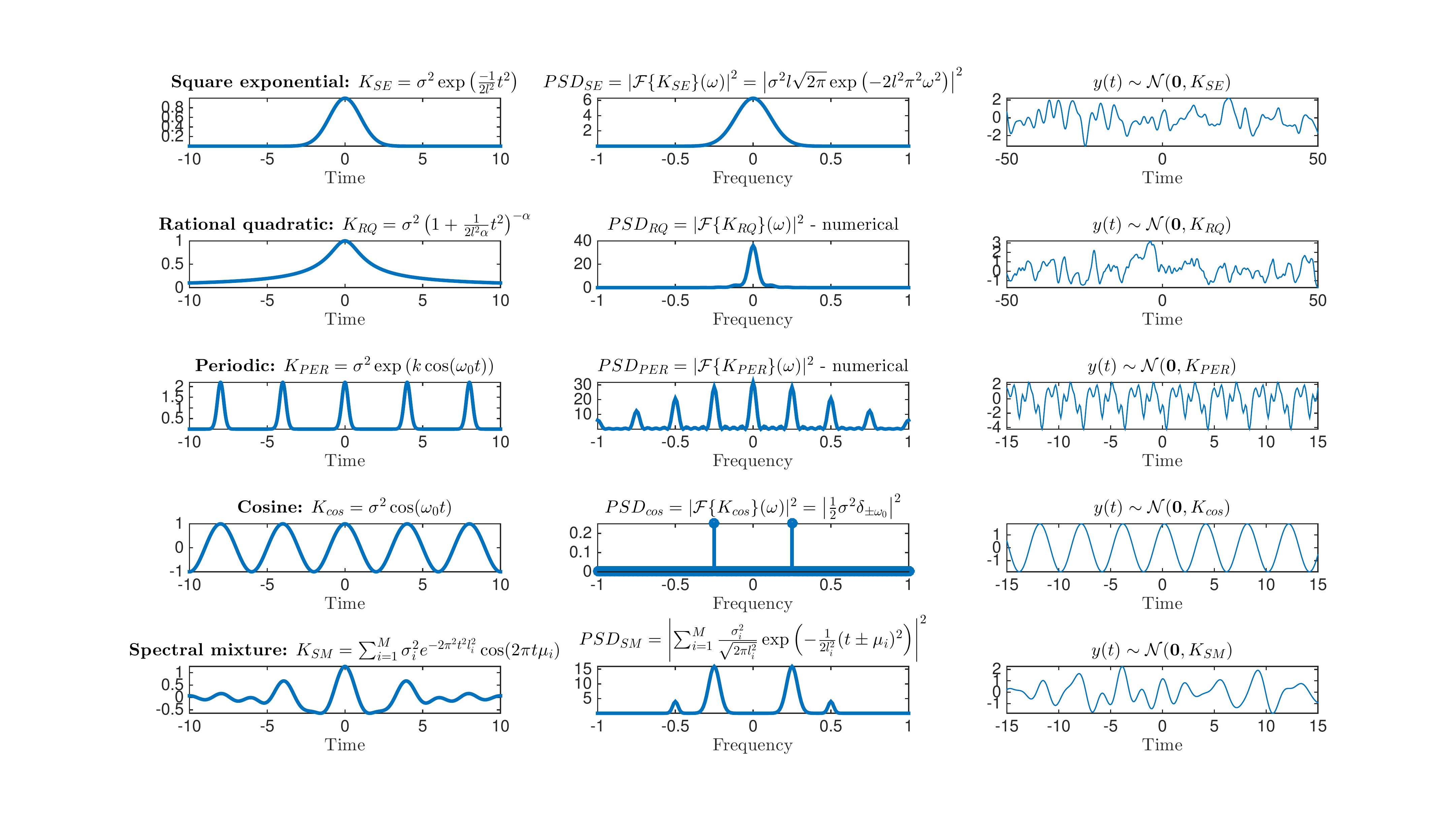}
  \caption{Illustration of the relationship between covariance and spectral representations of GPs. Left to right: kernel, PSD and a GP sample. Top to bottom: square exponential (SE), rational quadratic (RQ), periodic (Per), cosine (cos) and spectral mixture (SM) kernels. For the RQ and Per kernels, the PSD was computed numerically using the discrete time Fourier transform.}
    \label{fig:kernels_psds}
\end{figure}

In practice, we regard data as realisations of a stochastic process and we need to estimate the covariance or the PSDs from the available datasets. In the signal processing community, these quantities are usually estimated in a nonparametric fashion. For instance, the \emph{sample covariance} in Definition \ref{def:sample_cov} is an estimate of the covariance, which in the case of evenly-spaced dataset $\{y_{\Delta},\ldots,y_{N\Delta}\}$ takes the standard form 

\begin{equation}
\hat c(k\Delta) = \frac{1}{N-k} \sum_{n=1}^{N-k} y_{n\Delta}y_{(n+k)\Delta}.
\end{equation}

The problem of recovering the PSD from a finite collection of observations $\{y_{t_1},\ldots,y_{t_N}\}$ is referred to as \emph{spectral estimation} and the classic nonparametric method to perform this estimation is called the Periodogram. This technique builds on the observation that, under mild assumptions, the definition of the PSD in equation \ref{eq:WK1} is equivalent to 
\begin{equation}
S(\omega) = \lim_{T\to\infty}\E\left[ \frac{1}{2T} \left|  \int_{-T}^T y_t e^{-j\omega t} dt \right|^2 \right].
\end{equation}

Therefore, when only a finite dataset of observations is available, the limit and the expectation can be ignored in the above expression thus yielding the natural sample estimate of the PSD given by
\begin{equation}
\label{eq:Per}
\hat{S}(\omega) =  \frac{1}{N} \left|  \sum_{i=1}^N y_t e^{-j\omega t} \right|^2,
\end{equation}
where $N$ in the denominator replaced $2T$ as normaliser. 

\Eqref{eq:Per} is known as the \emph{Periodogram}, a widely used method for  estimating the PSD of a WSS stochastic process introduced by \cite{schuster1900periodogram}. Today, however, the concept of Periodogram refers to a wider class of techniques for spectral estimation that build on the original formulation but at the same time address some of the known drawbacks related to the biasedness and large variance of the Periodogram. 

Two widely used extensions of the Periodogram are the Bartlett and the Welch methods. The former operates by splitting the data into segments and computing the standard Periodogram over each one of them, to then average over all computed Periodograms with the aim to reduce the noise in the estimate. The latter follows the same concept but also multiplies each segment by a \emph{window} so as to mitigate the effect of the border of the segments in the estimation. Usual choices for the windows are the Hann and Hamming functions.

\section{\CR{Definition of distances and divergences considered}}
\label{appendix:def_dist}

For two functions $f_1$ and $f_2$, we have the following general distances:
\begin{itemize}
\item 1-Euclidean : $L_1(f_1,f_2)= \int_\R|f_1(\xi)-f_2(\xi)| d\xi$
\item 2-Euclidean : $L_2(f_1,f_2)=\int_\R(f_1(\xi)-f_2(\xi))^2 d\xi$
\end{itemize}
Furthermore, when $f_1$ and $f_2$ are densities with quantile functions $Q_1$ and $Q_2$ respectively, we have the additional divergences:
\begin{itemize}
\item 1-Wasserstein~\citep{villani2009optimal,peyre2019computational} : $$W_1(f_1,f_2)=\int_0^1 | Q_1(p)-Q_2(p)| d p$$
\item 2-Wasserstein~\citep{villani2009optimal,peyre2019computational} : $$W_2(f_1,f_2)=\int_0^1(Q_1(p)-Q_2(p))^2d p$$
\item Kullback-Leibler : $$\KL(f_1 \Vert f_2)=\int_\R \log\left(\frac{f_1(\xi)}{f_2(\xi)}\right) f_1(\xi)d\xi$$
\item Itakura-Saito~\citep{itakura1968analysis} : $$\IS(f_1 \Vert f_2)=\int_\R\left(\frac{f_1(\xi)}{f_2(\xi)}-\log\frac{f_1(\xi)}{f_2(\xi)}-1\right)d\xi$$
\item Bregman divergences~\citep{amari2016information} : for a function $G:\R\to\R$ that is differentiable and strictly convex,  $$D_G(f_1,f_2) = G(f_1)-G(f_2)-\langle \nabla G(f_2),f_1-f_2\rangle$$
\end{itemize}
Here, we have assumed that both $f_1$ and $f_2$ integrate unity, in the cases where this condition is not met, the densities can be normalised before computing the distance.

\section{Proofs}
\label{appendix:proofs}
\subsection[]{Convexity of spectral loss for $W_2$ and location-scale family}

%\begin{theorem} 
%\label{thm:convexWF}
%If $\gS$ is a  location-scale family with prototype $S_{0,1}$, the minimiser of  $W_2(S,S_{\mu,\sigma})$ is unique and given by 
%\begin{align}
%  \mu^{\ast} &= \int_0^1 Q(p)d p\label{eq:soln_mu}\\
%  \sigma^* &= \frac{1}{\int_0^1 Q_{0,1}^2(p)d p}\int_0^1 Q(p) Q_{0,1}(p)d p\label{eq:soln_sigma},
%\end{align}
%where $Q$ is the quantile function of $S$. The PSD $S$ does not need to be location-scale.
%\end{theorem}

\begin{proof}[Proof of Theorem \ref{thm:convexWF}]
We recall that
\begin{equation}
W_2^2(S, S_{\mu,\sigma}) = \int_0^1(Q_{\mu,\sigma}(p)- Q(p))d p.
\end{equation}
From a direct application of the rule of differentiation under the integral sign, we obtain the gradient for the location-scale family:
\begin{align}
  \nabla_{\mu,\sigma}W_2^2(S, S_{\mu,\sigma}) &= 2\int_0^1 ( Q_{\mu,\sigma}(p)- Q(p)) \nabla_{\mu,\sigma} Q_{\mu,\sigma}(p) d p\\
  &= 2\int_0^1 ( \mu + \sigma Q_{0,1}(p) - Q(p) ) \nabla_{\mu,\sigma} (\mu + \sigma Q_{0,1}(p)) d p\nonumber\\
  &= 2\int_0^1 ( \mu + \sigma Q_{0,1}(p) - Q(p) ) \envmatr{1\\Q_{0,1}(p)} d p.\nonumber
\end{align}
The Hessian for the location-scale family:
\begin{align}
  \mH_{\mu,\sigma}W_2^2(S, S_{\mu,\sigma}) &= 2\int_0^1\envmatr{1 & Q_{0,1}(p) \\ Q_{0,1}(p) & Q_{0,1}(p)^2}d p.
\end{align}
The determinant of the Hessian (via Jensen's inequality):
\begin{align}
  |\mH|/2 &= \int_0^1 Q_{0,1}(p)^2d p - \left(\int_0^1 Q_{0,1}(p)d p\right)^2 >  \int_0^1 Q_{0,1}(p)^2d p - \int_0^1 Q_{0,1}(p)^2d p = 0,
\end{align}
where the inequality is strict due to the strict convexity of $(\cdot)^2$.

Therefore, the first order conditions are given by: 

\begin{align}
&  \int_0^1 ( \mu + \sigma Q_{0,1}(p) - Q(p) ) d p = 0  \iff \mu = \int_0^1 ( Q(p) - \sigma Q_{0,1}(p) ) d p = \int_0^1 Q(p)d p \\
\mbox{and} \quad & \int_0^1 ( \mu + \sigma Q_{0,1}(p) - Q(p) )Q_{0,1}(p) d p = 0  \\
&   \hspace{4cm} \iff \sigma = \frac{\int_0^1 ( Q(p) - \mu )Q_{0,1}(p) d p}{\int_0^1 Q_{0,1}(p)^2 d p} = \frac{\int_0^1 Q(p) Q_{0,1}(p)d p}{\int_0^1 Q_{0,1}^2(p)d p},
\end{align}
where in the last expression we have used the fact that the location of the prototype $S_{0,1}$ is zero and so is its mean, meaning that if $\rx\sim S_{0,1}$ we can write $\int_0^1 \mu Q_{0,1}(p) d p = \mu \E_{\rx\sim S_{0,1}}[x] = 0$.
\end{proof}

\subsection[]{Learning from data ($W_2$ distance and location-scale family)}

%\begin{proposition}
%\label{prop:empirical_ls} 
%For $S_\theta$ in the location-scale family, $S$ the ground truth PSD and $D_F=W_2^2$, $\E W_2^2(S, \hat S_n)\to0$ implies that the empirical minimiser $\theta^*_n$ in eq.~(7) converges to the true minimiser $\theta^*=\argmin_{\theta \in \Theta} D_F(S, S_{\theta})$, meaning that $\E |\theta^*_n- \theta^*|\to 0$.
%\end{proposition} 
\begin{proof}[Proof of Proposition \ref{prop:empirical_ls}]
First, recall that in the location-scale family $\theta = (\mu,\sigma)$. We denote by $Q$ and $\hat{Q}_n$ the respective quantile functions of $S$ and $\hat S_n$. Then, following the solutions in Theorem \ref{thm:convexWF} and Jensen's inequality we can compute the following upper bound for the location parameter $\mu^*$:
\begin{align*}
(\E\vert \mu^*-\mu^*_n\vert)^2&\leq \E\vert \mu^*-\mu^*_n\vert^2 = \E\left\vert\int_0^1Q(p)d p-\int_0^1\hat{Q}_n(p)d p\right\vert^2\\
&\leq \E\left[\int_0^1\vert Q(p)-\hat{Q}_n(p)\vert^2d p\right] = \E [W_2^2(S,\hat S_n)],
\end{align*}
and by hypothesis $\E W_2^2(S,\hat S_n)\to 0$.

Similarly, now using H\"older's inequality, we obtain the following bound for the scale parameter $\sigma^*$:
\begin{align*}
\E\vert \sigma^*-\sigma^*_n\vert &= \E\left\vert\int_0^1Q(p)Q_{0,1}(p)d p-\int_0^1\hat{Q}_n(p)Q_{0,1}(p)d p\right\vert\\
& \leq \E\left[ \int_0^1\vert (Q(p)-\hat{Q}_n(p))Q_{0,1}(p)\vert d p\right] \\
&\leq \E \left[\left(\int_0^1\vert Q(p)-\hat{Q}_n(p)\vert^2 d p\right)^{\frac{1}{2}}\right]\left(\int_0^1 \vert Q_{0,1}(p)\vert^2 d p\right)^{\frac{1}{2}}\\
&=\E[ W_2(S, \hat S_n)]\left(\int_0^1 \vert Q_{0,1}(p)\vert^2 d p\right)^{\frac{1}{2}},
\end{align*}
which tends to $0$ by again Jensen's inequality, $(\E[ W_2(S, \hat S_n)])^2\leq \E[ W_2^2(S, \hat S_n)]\to 0$.
\end{proof}

%\newpage
\section{Code}
\label{appendix:code}

We have developed a short self-contained toolbox which is available in \url{https://github.com/GAMES-UChile/Generalised-Variogram-Method}. The purpose of the code is to facilitate the use of the proposed method by the community and in particular to replicate all our results. For the reader's convenience, Jupyter Notebook \texttt{Exp0\_minimal\_ex.ipynb} showing a minimal example of the toolbox implementation is presented here. 
\begin{figure}[ht]
\centering
\centerline{\textbf{Minimal working example of provided code}}  
\makebox[\textwidth][c]{
  \includegraphics[clip, trim=1cm 2.5cm 1cm 7cm,page=1, width=0.99\textwidth]{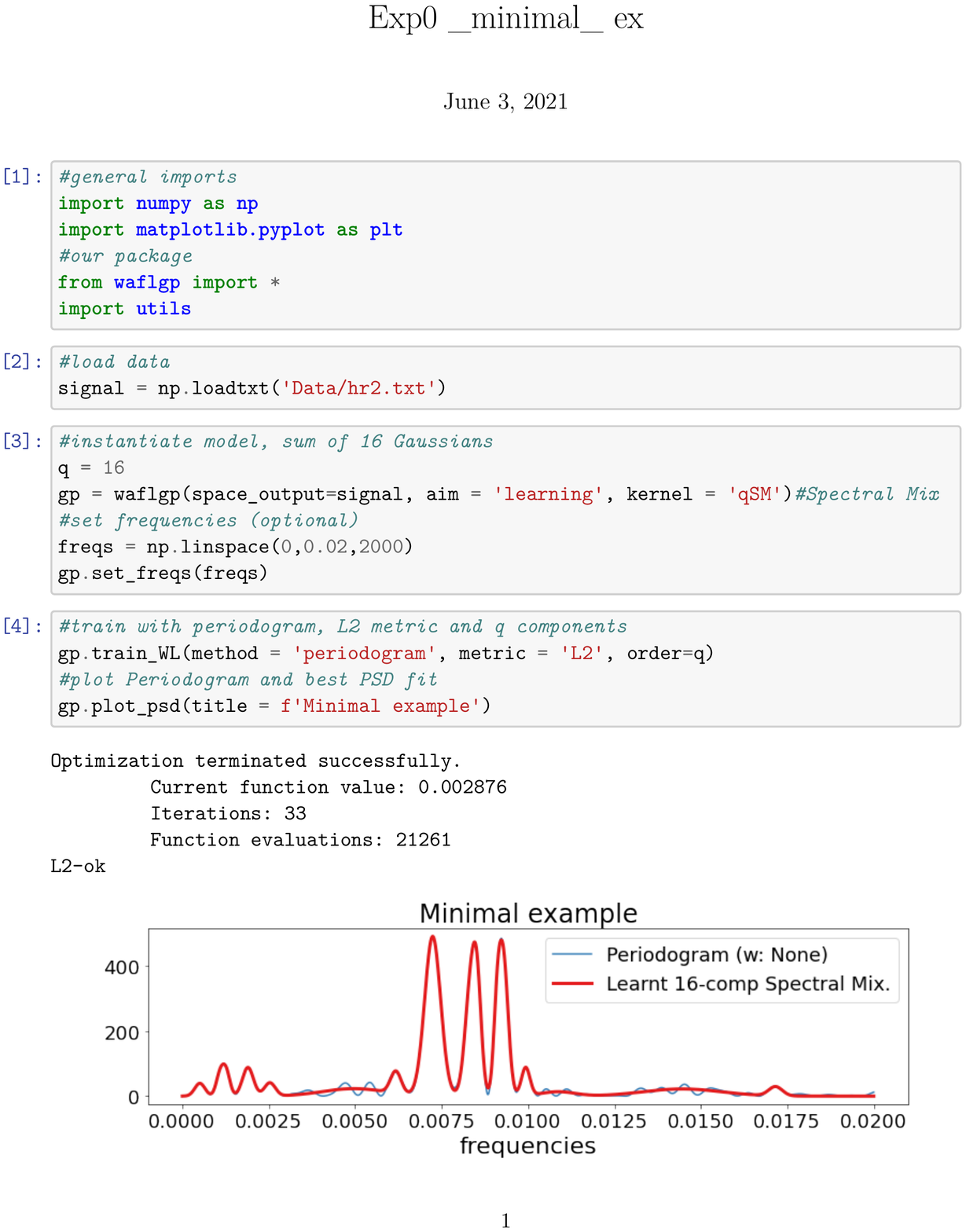} 
} 

\end{figure}
%\vfill

\vspace{8cm}
\section{Additional figures for E2}
\label{appendix:E2}

Experiment E2 showed the sensibility of GVM to the choice of Periodogram method and window for two kernels. Figures \ref{fig:supp_expcos} and \ref{fig:supp_sinc} presents all the figures corresponding to the estimates in Table \ref{tab:sensitivity} for the Exp-cos and Sinc kernel respectively.

\begin{figure}[ht]
  \centering
  \mbox{
  \includegraphics[width=0.5\textwidth]{img/effect_SM_periodogram_None.pdf}
  \includegraphics[width=0.5\textwidth]{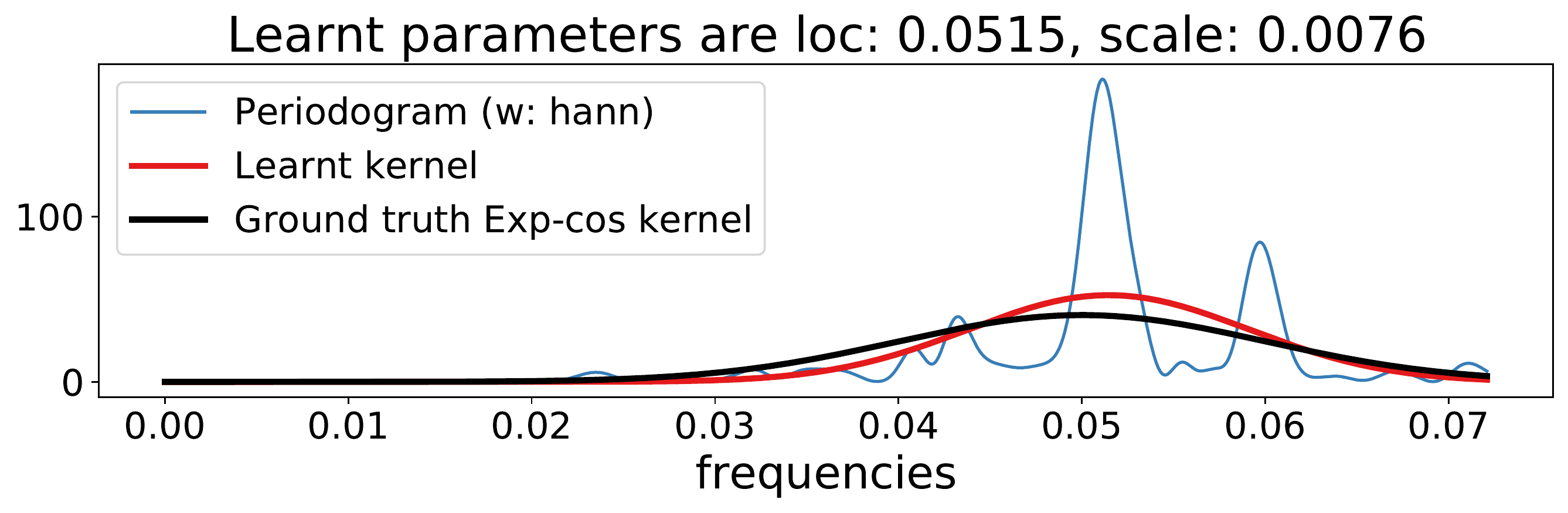}
  
  }

  \hspace{0em}\mbox{ 
  \includegraphics[width=0.5\textwidth]{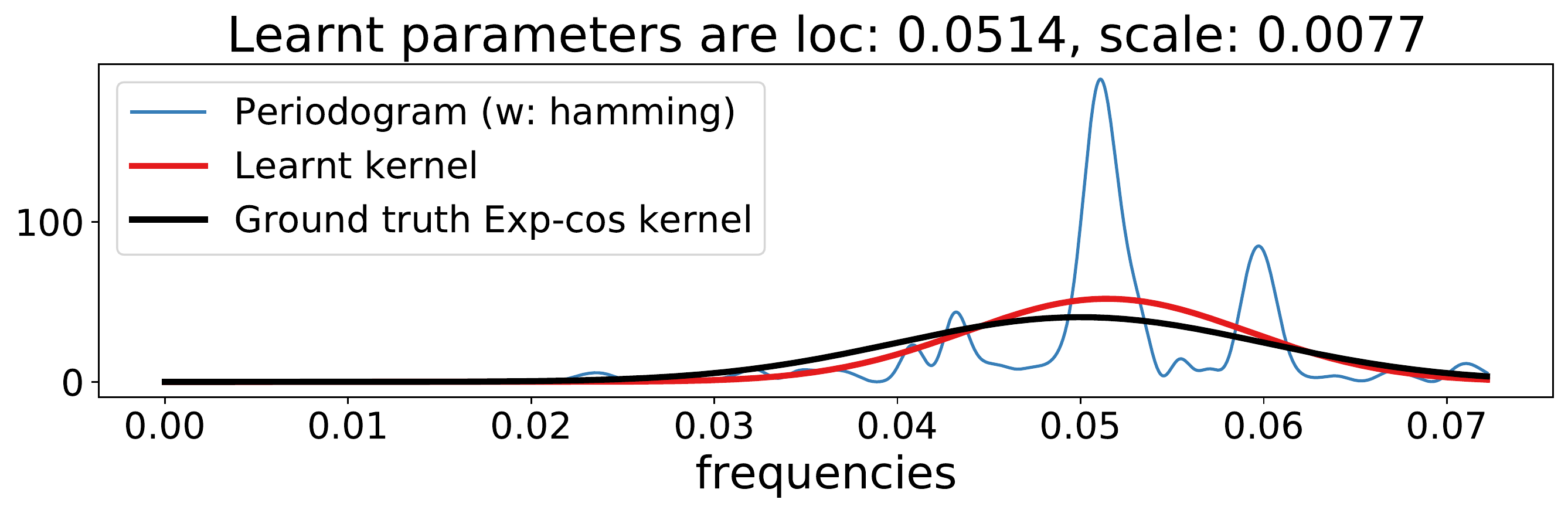}
  \includegraphics[width=0.5\textwidth]{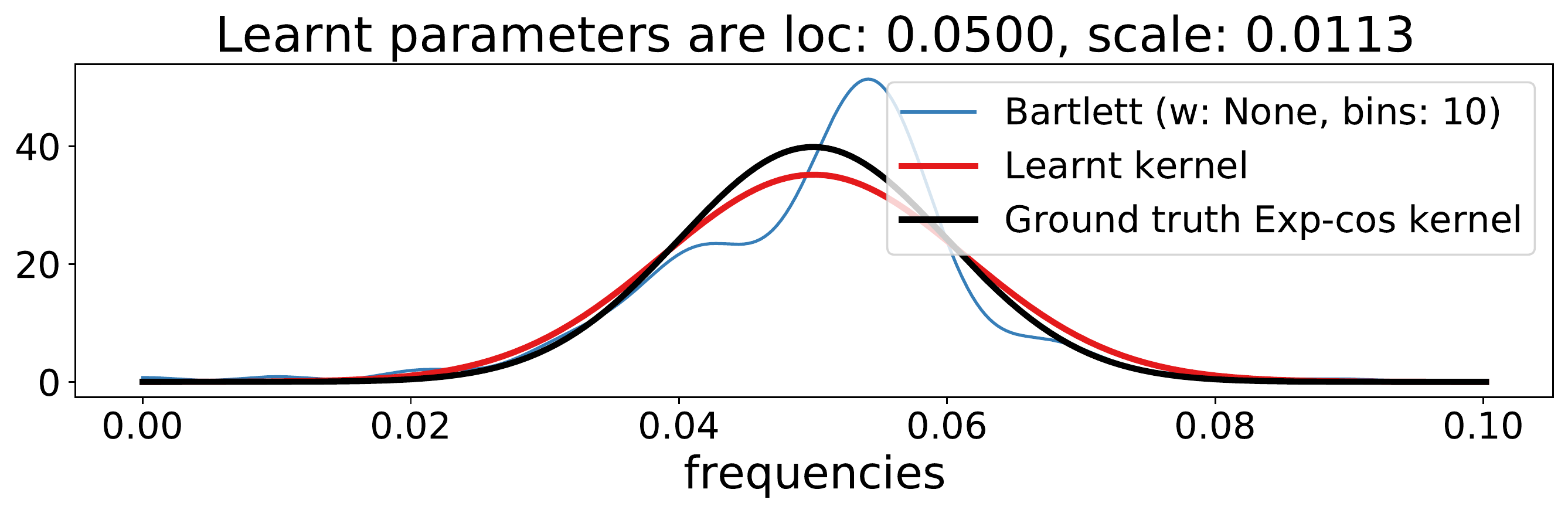}
  }

  \hspace{0em}\mbox{ 
  \includegraphics[width=0.5\textwidth]{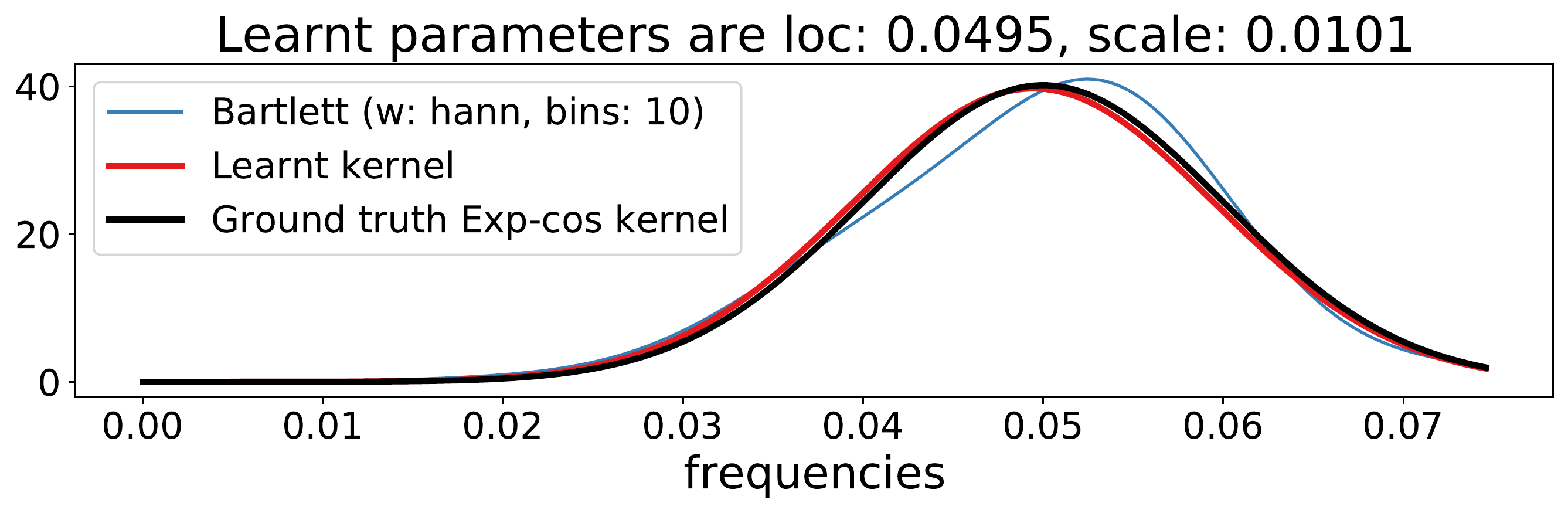}
  \includegraphics[width=0.5\textwidth]{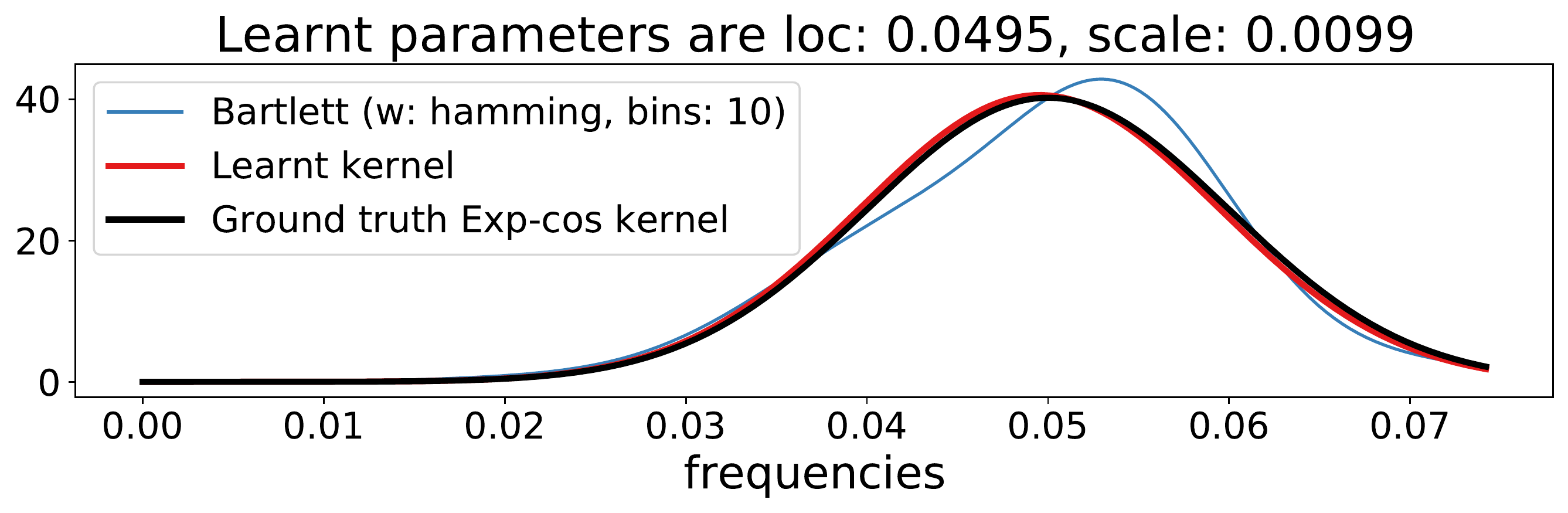}
  }

    \hspace{0em}\mbox{ 
  \includegraphics[width=0.5\textwidth]{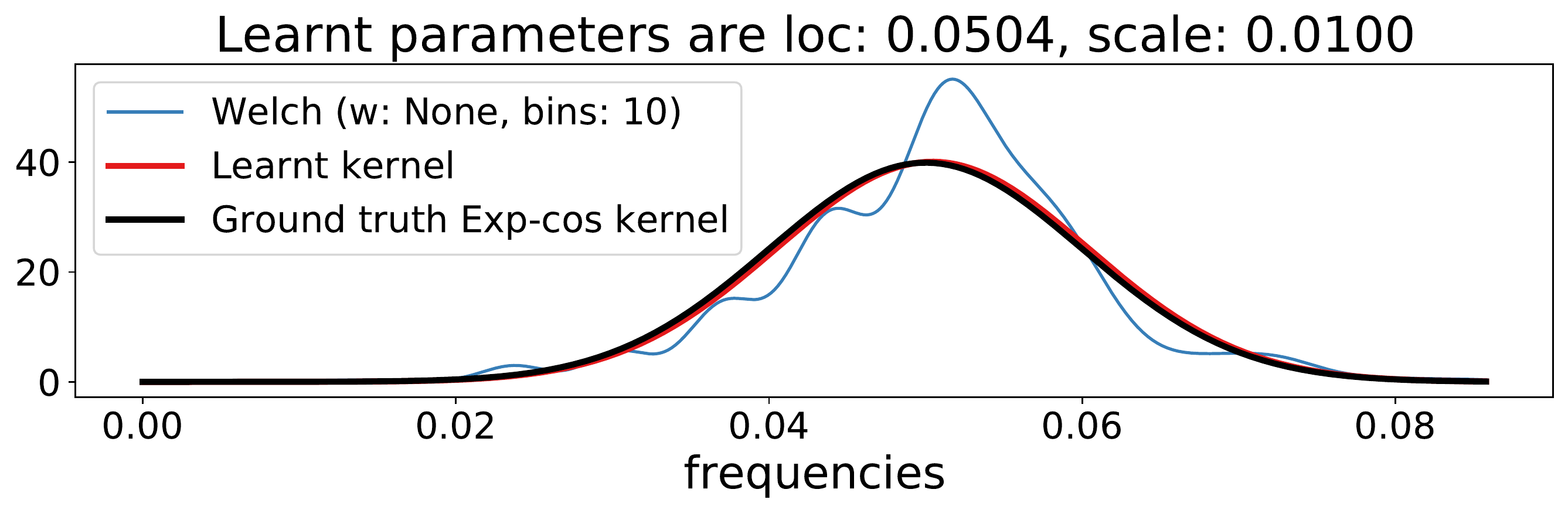}
  \includegraphics[width=0.5\textwidth]{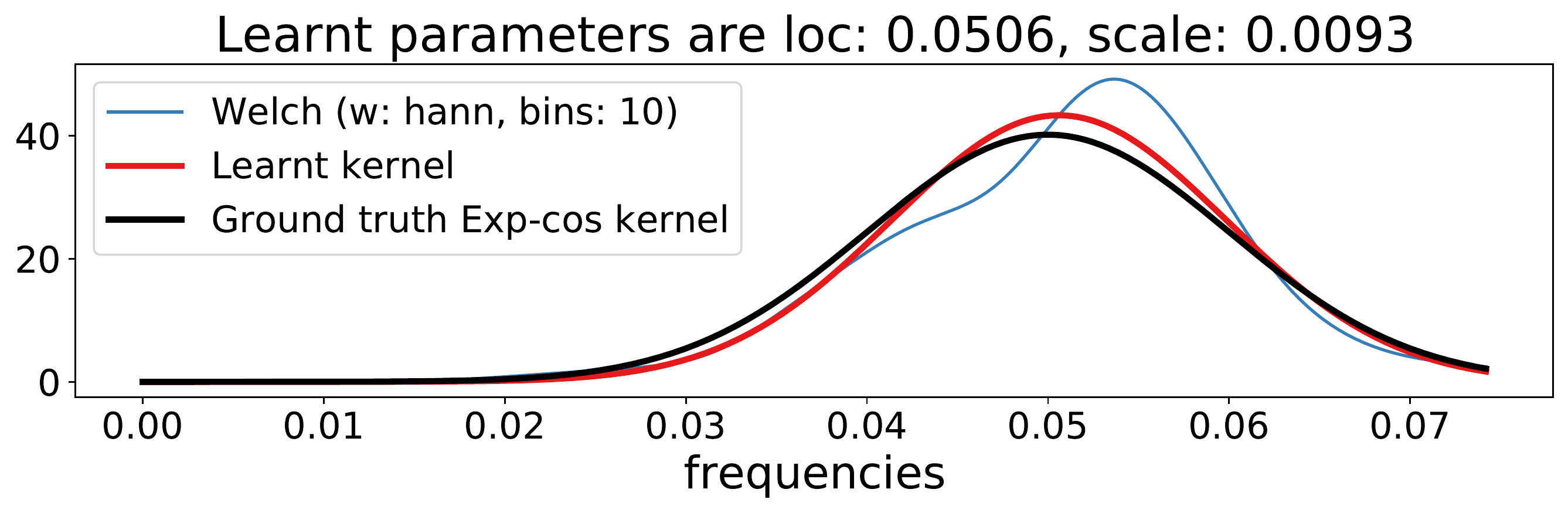}
  }

    \centering\mbox{ 
  \includegraphics[width=0.5\textwidth]{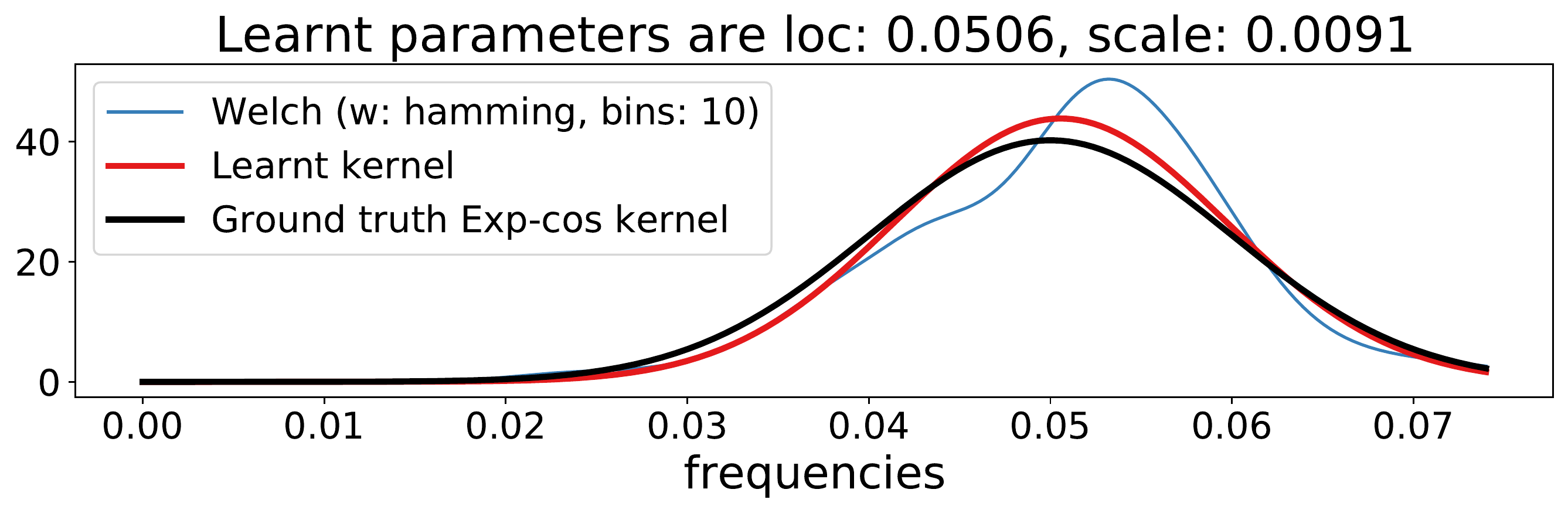}
  }
  \caption{Exp-cos kernel.}\label{fig:supp_expcos}
\end{figure}

\iffalse
\begin{figure}[ht]
  \hspace{0em}\mbox{ 
  \includegraphics[width=0.5\textwidth]{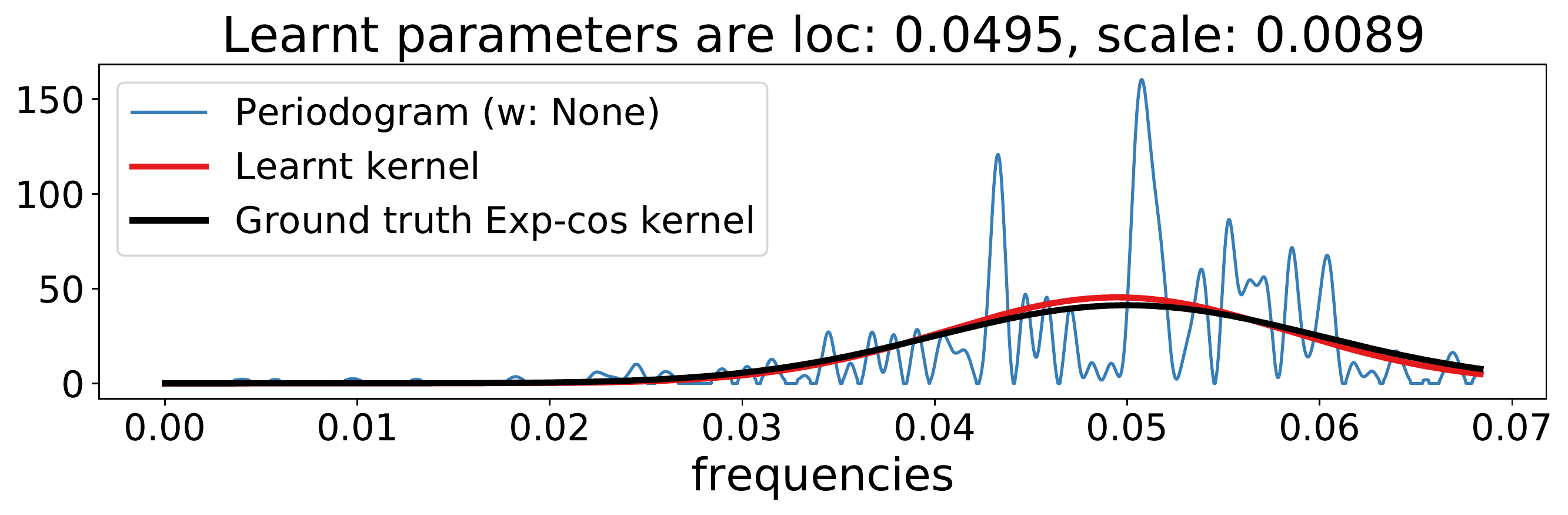}
  \includegraphics[width=0.5\textwidth]{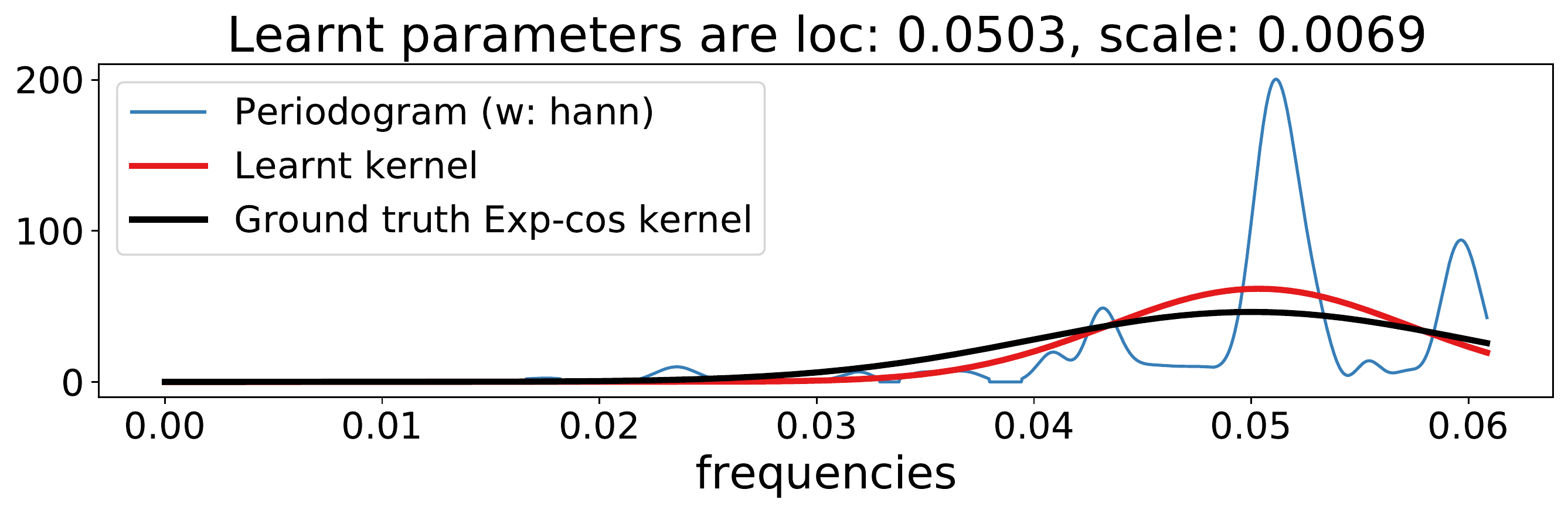}
  }

  \hspace{0em}\mbox{ 
  \includegraphics[width=0.5\textwidth]{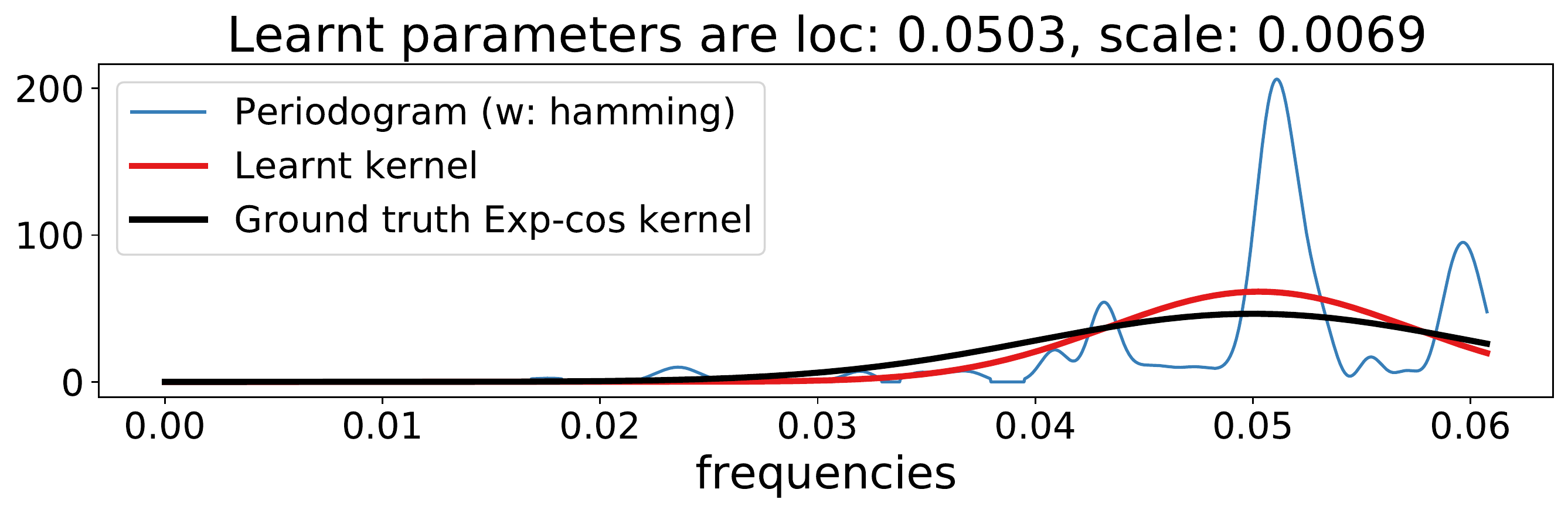}
  \includegraphics[width=0.5\textwidth]{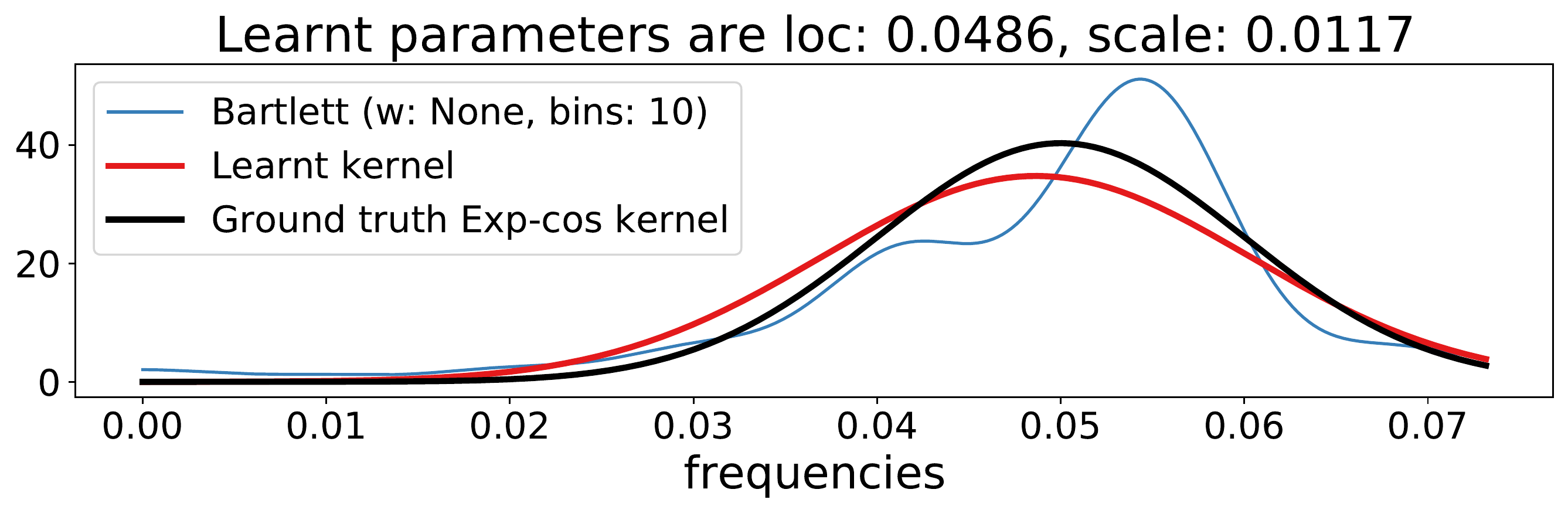}
  }

  \hspace{0em}\mbox{ 
  \includegraphics[width=0.5\textwidth]{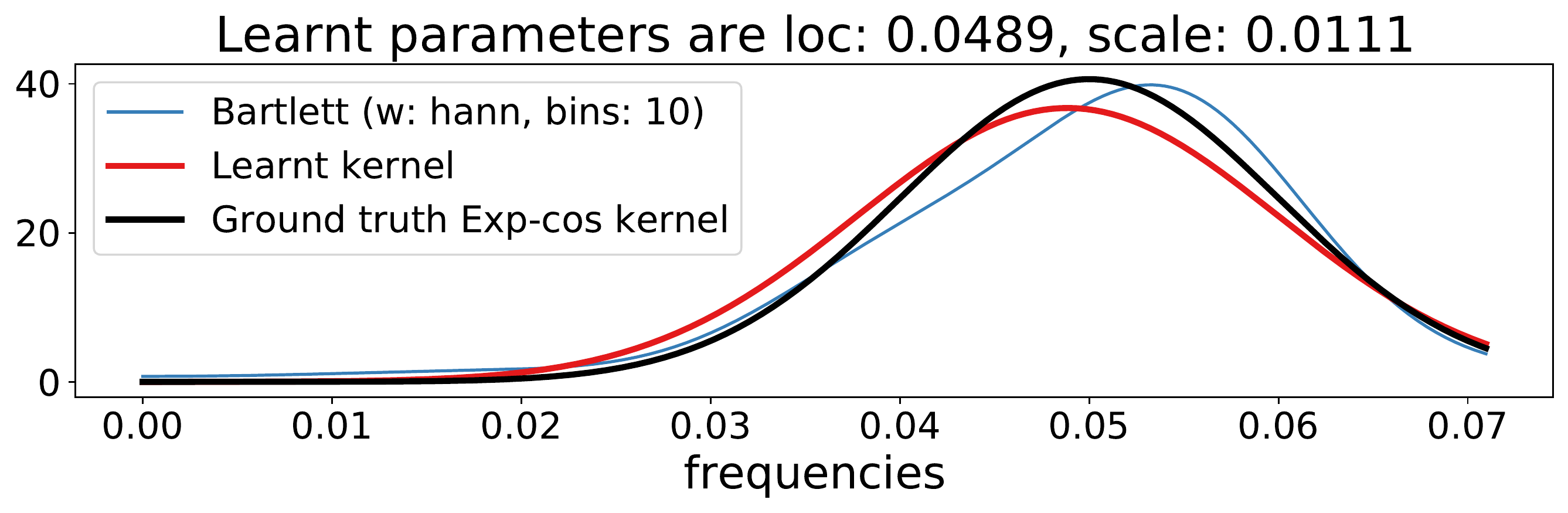}
  \includegraphics[width=0.5\textwidth]{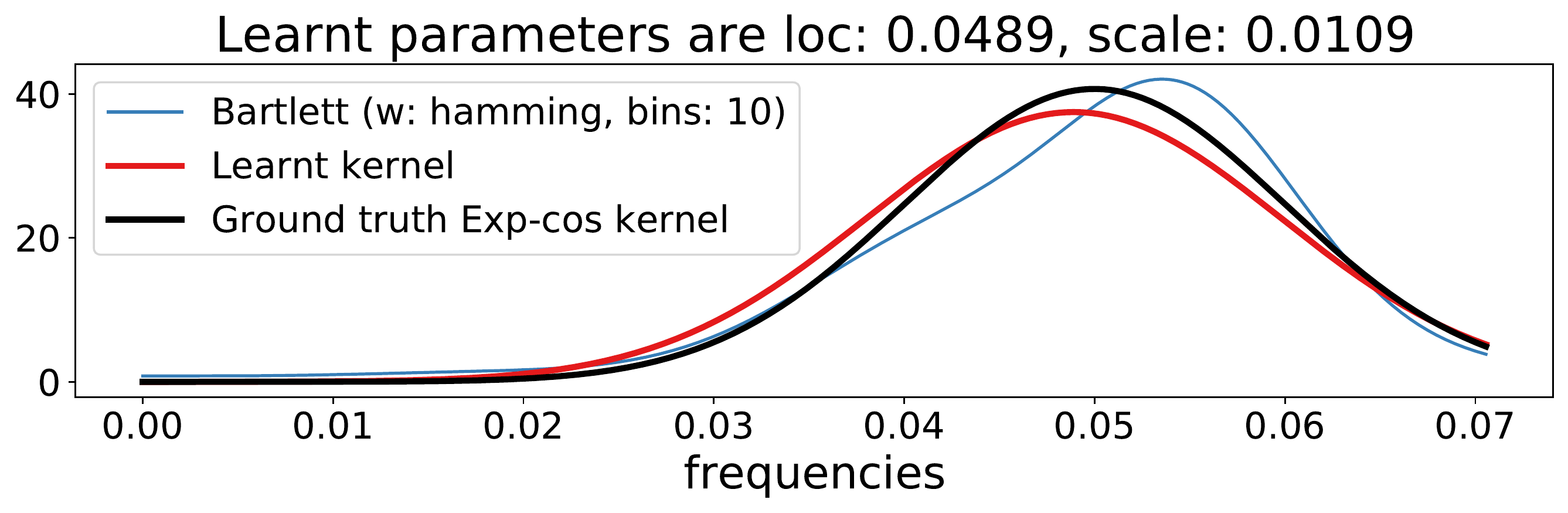}
  }

    \hspace{0em}\mbox{ 
  \includegraphics[width=0.5\textwidth]{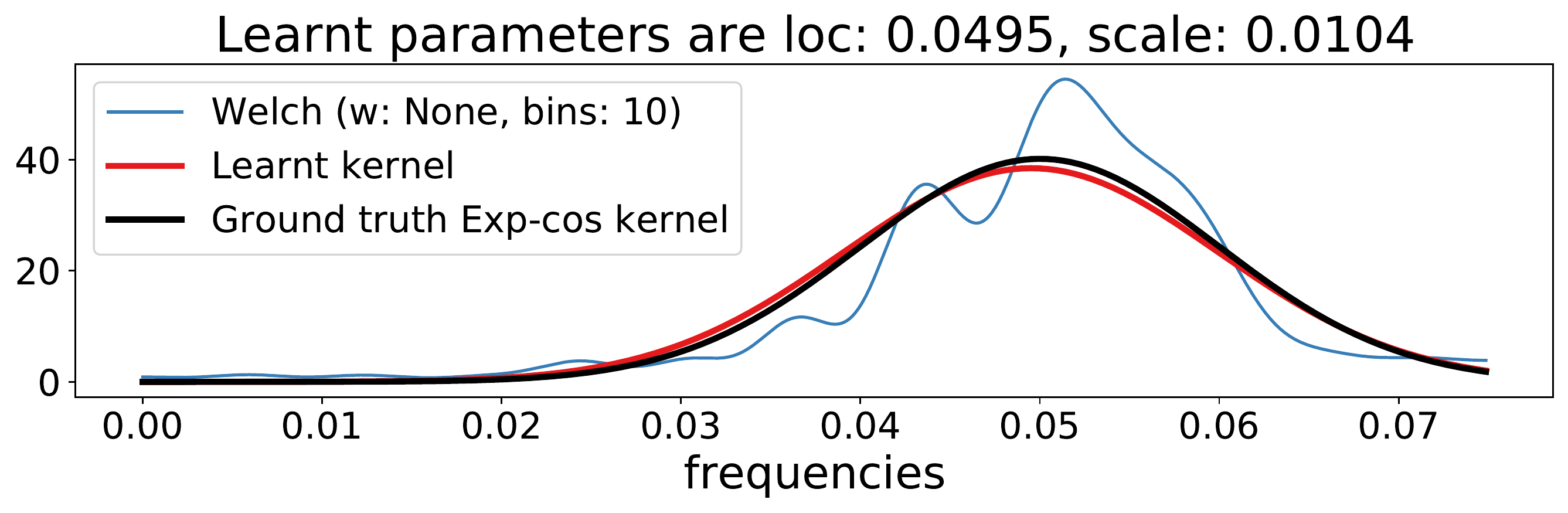}
  \includegraphics[width=0.5\textwidth]{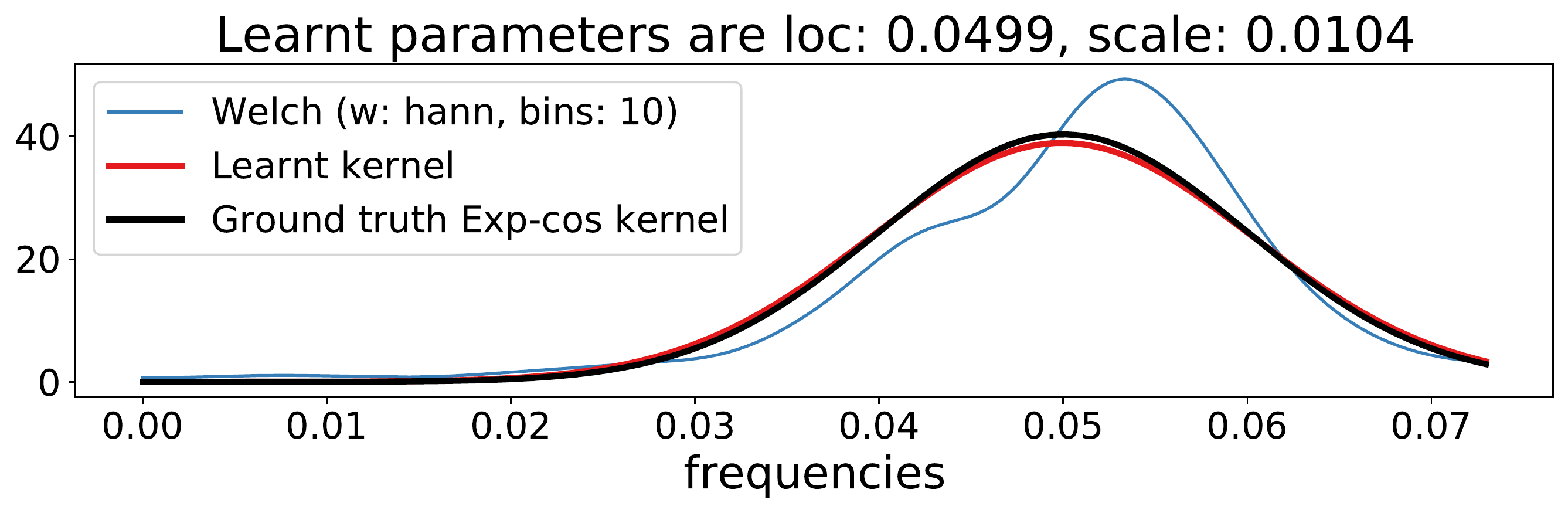}
  }

  \centering\mbox{ 
  \includegraphics[width=0.5\textwidth]{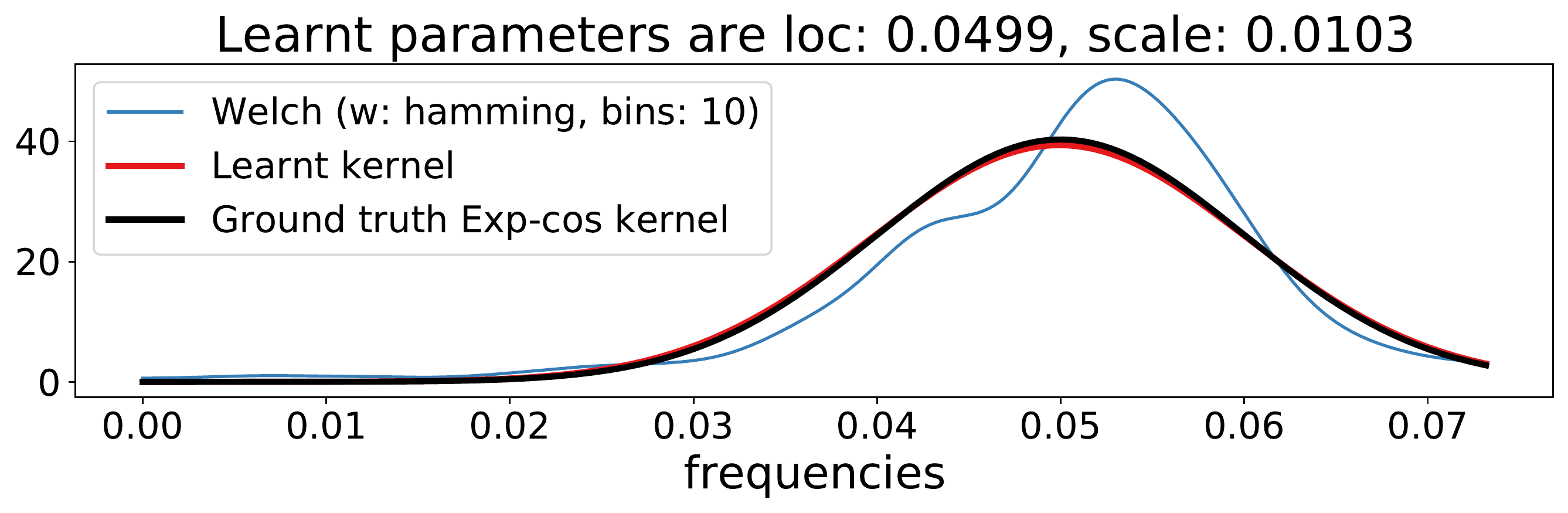}
  }
  \caption{Kernel: Exp-cos. Regime: unevenly sampled}

\end{figure}

\fi

\begin{figure}[ht]
  \hspace{0em}\mbox{ 
  \includegraphics[width=0.5\textwidth]{img/effect_sinc_periodogram_None.pdf}
  \includegraphics[width=0.5\textwidth]{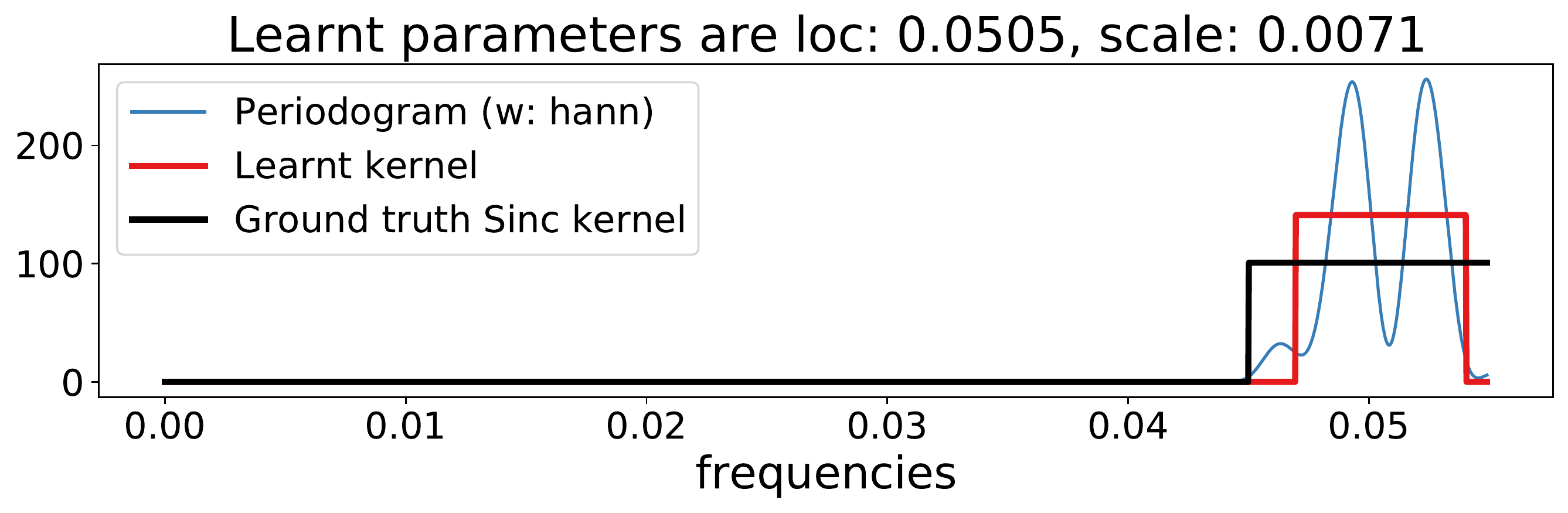}
  }

  \hspace{0em}\mbox{ 
  \includegraphics[width=0.5\textwidth]{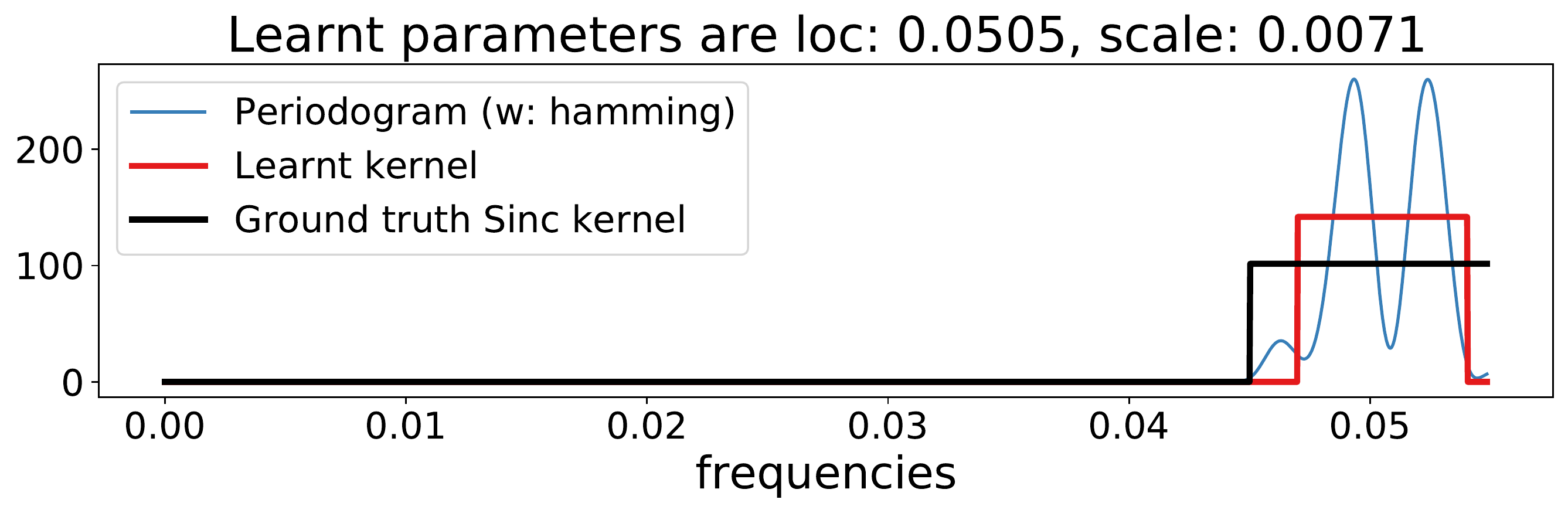}
  \includegraphics[width=0.5\textwidth]{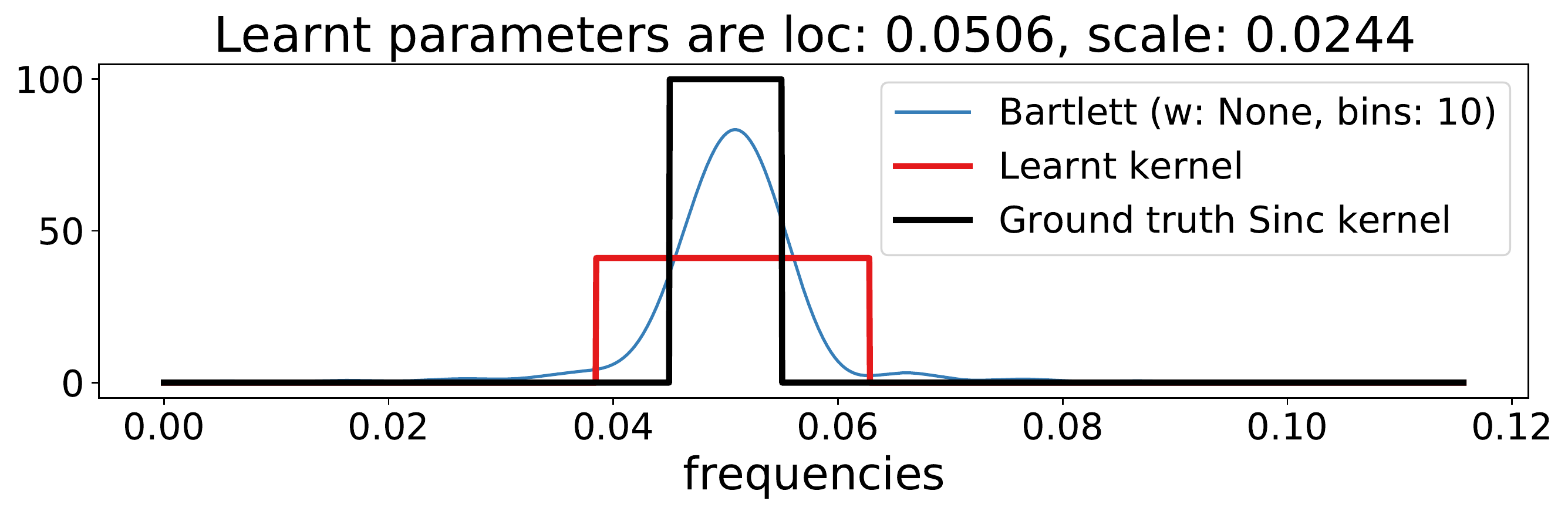}
  }

  \hspace{0em}\mbox{ 
  \includegraphics[width=0.5\textwidth]{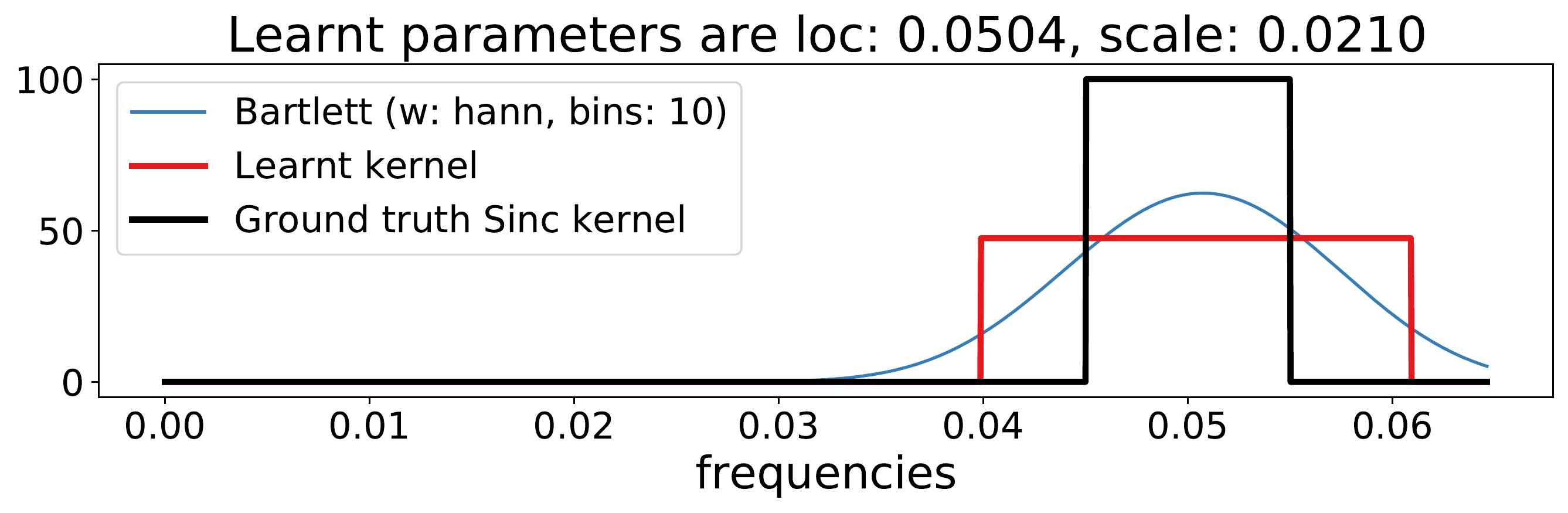}
  \includegraphics[width=0.5\textwidth]{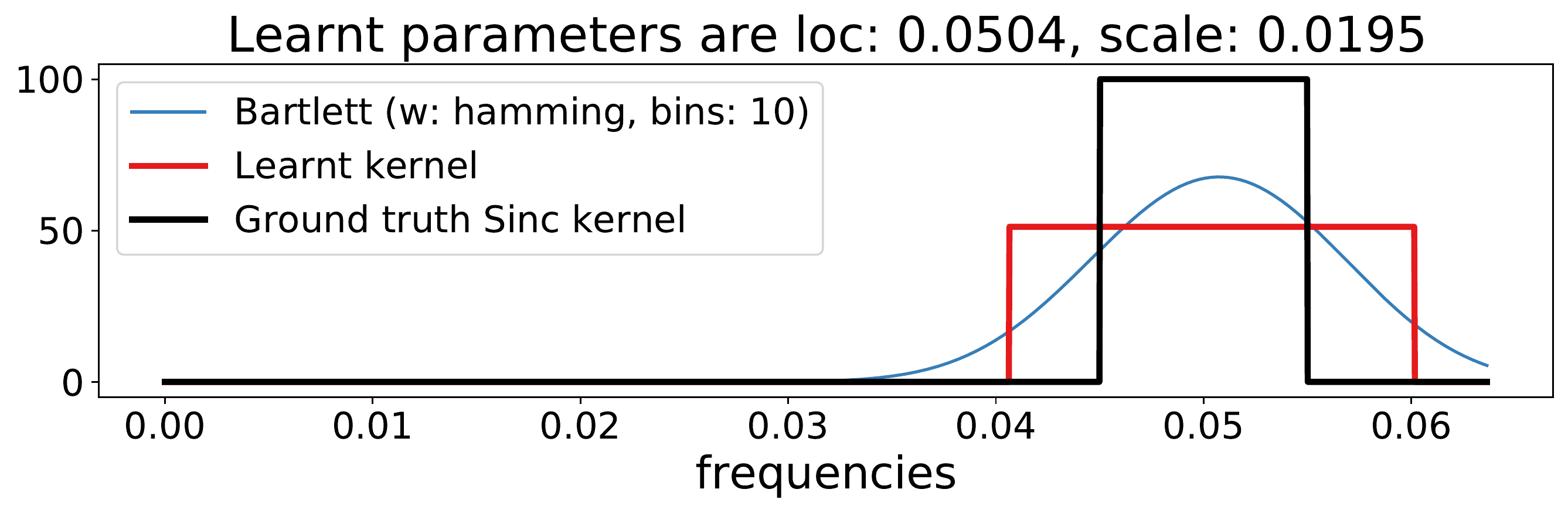}
  }

    \hspace{0em}\mbox{ 
  \includegraphics[width=0.5\textwidth]{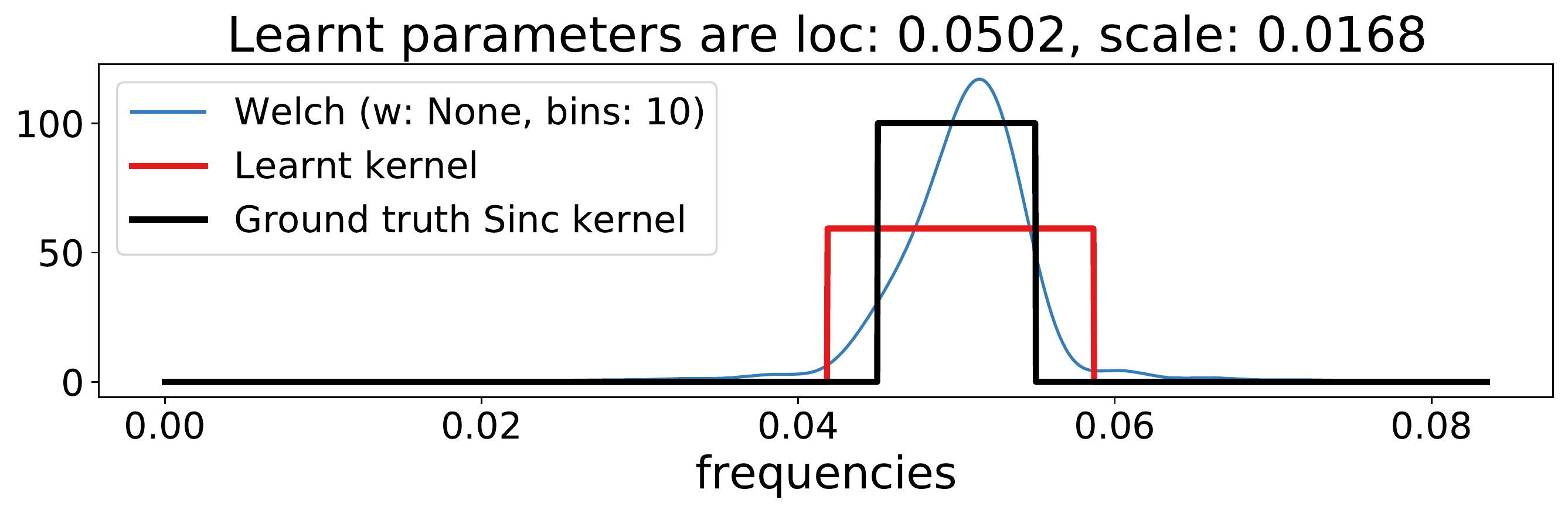}
  \includegraphics[width=0.5\textwidth]{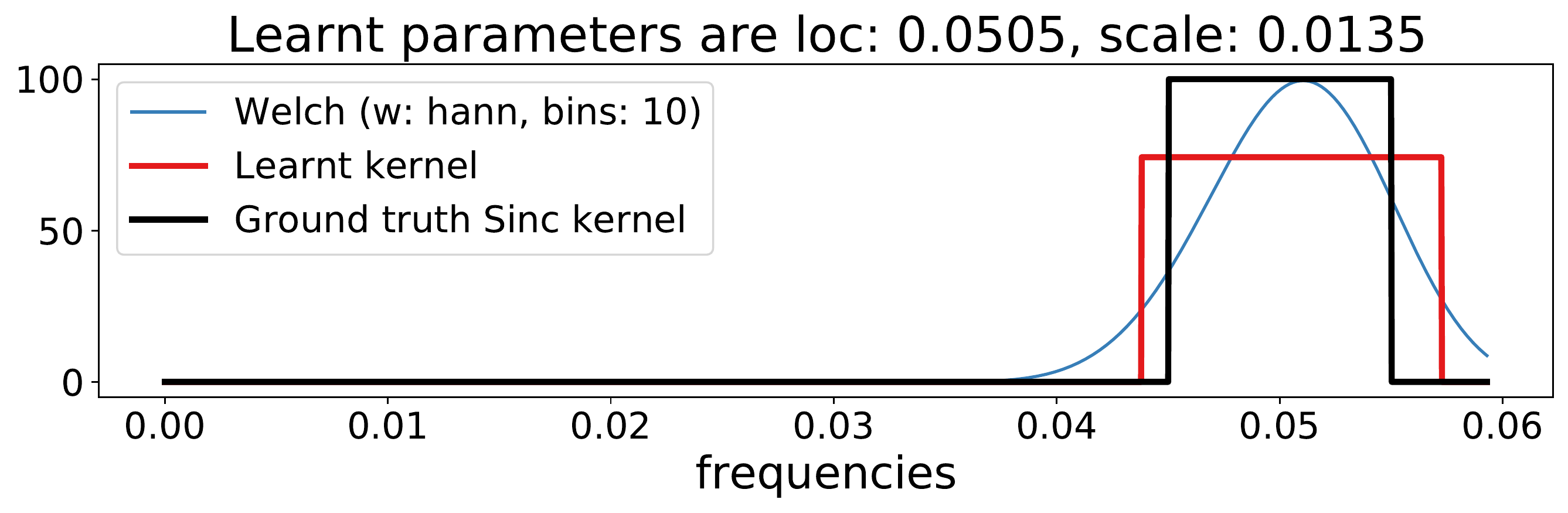}
  }

   \centering\mbox{ 
  \includegraphics[width=0.5\textwidth]{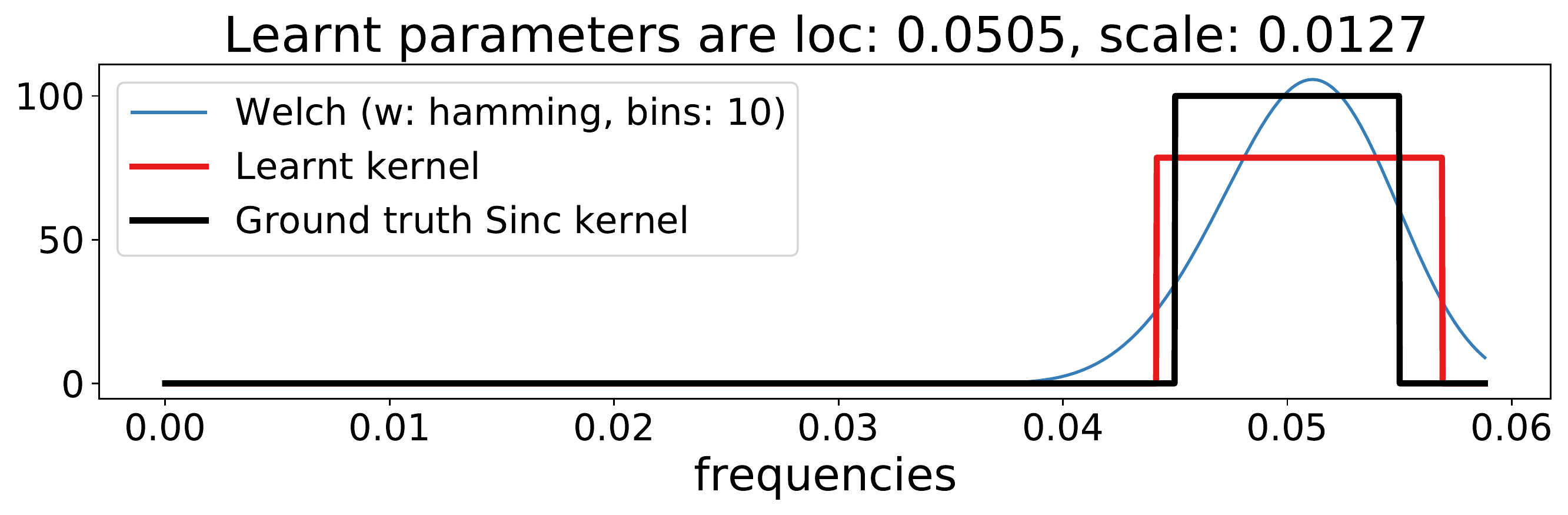}
  }
  \caption{Sinc kernel.}\label{fig:supp_sinc}

\end{figure}

\iffalse

\begin{figure}[ht]
  \hspace{0em}\mbox{ 
  \includegraphics[width=0.5\textwidth]{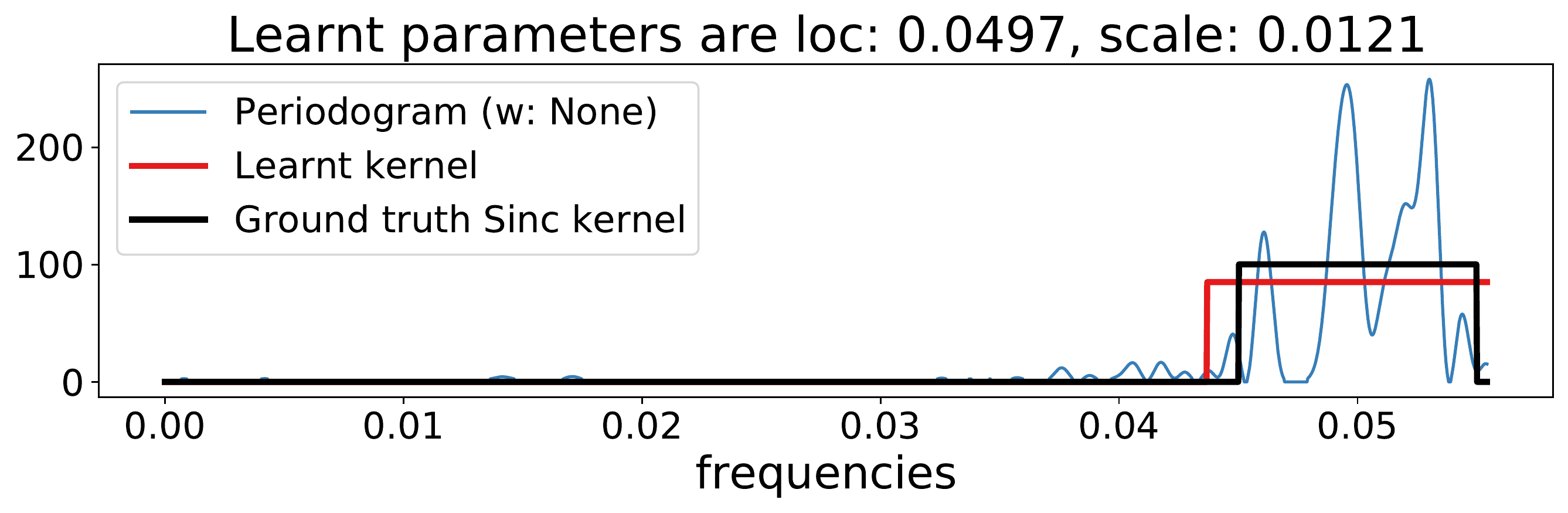}
  \includegraphics[width=0.5\textwidth]{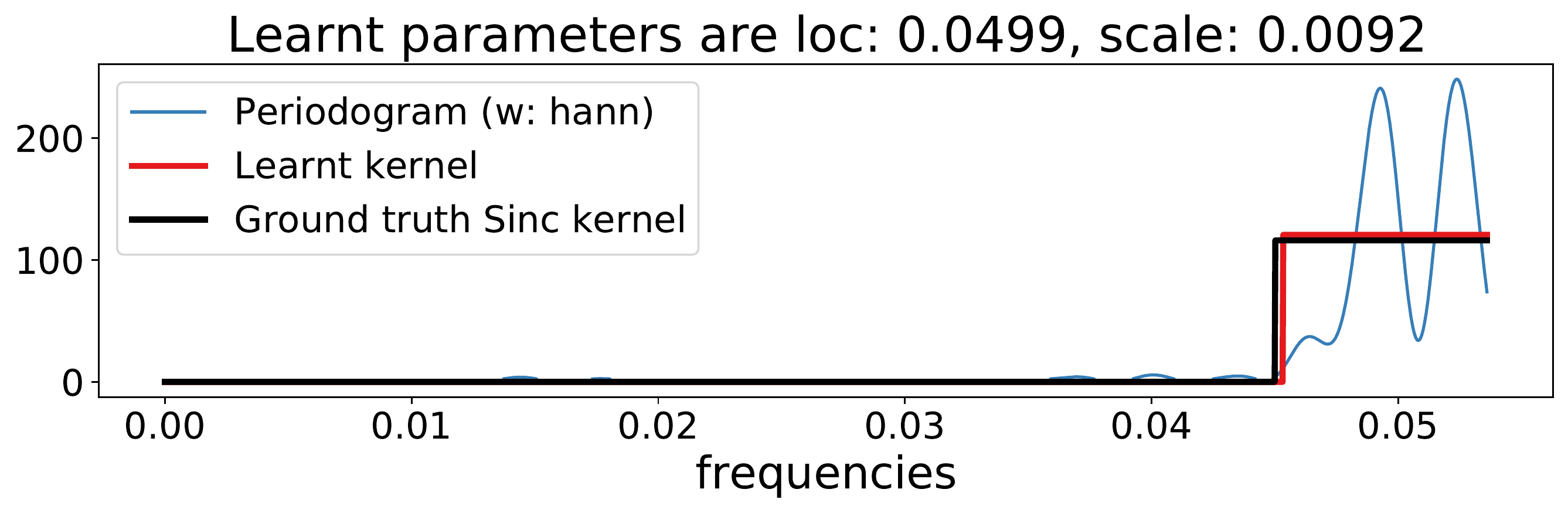}
   }

  \hspace{0em}\mbox{ 
  \includegraphics[width=0.5\textwidth]{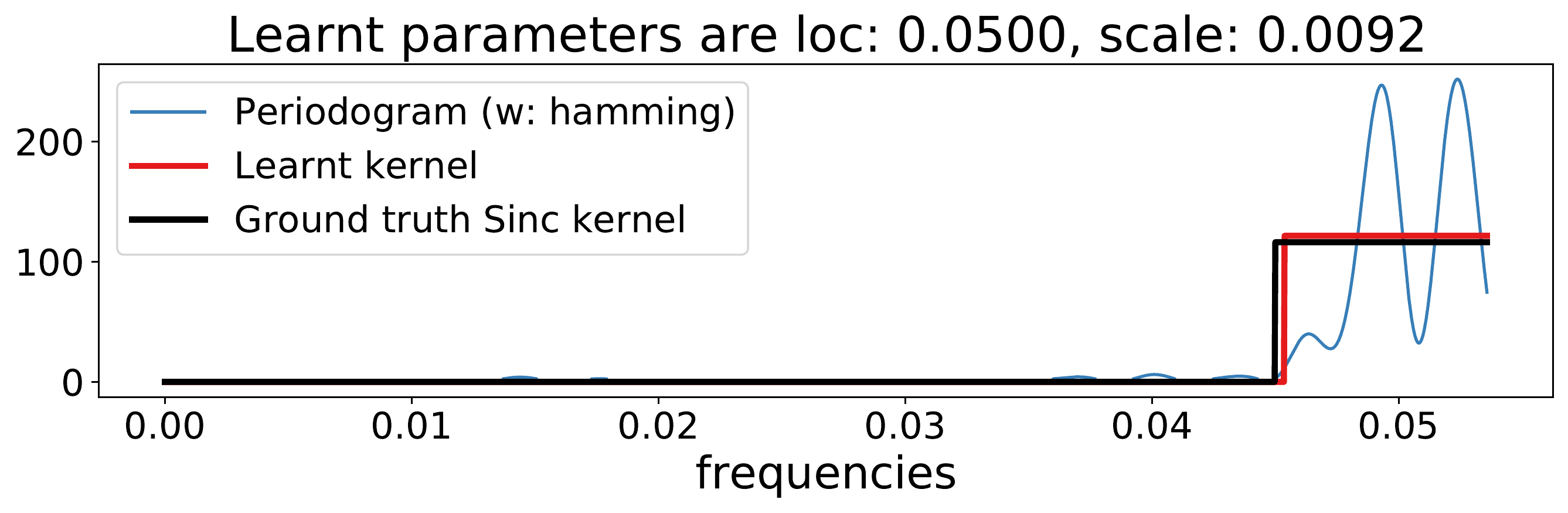}
  \includegraphics[width=0.5\textwidth]{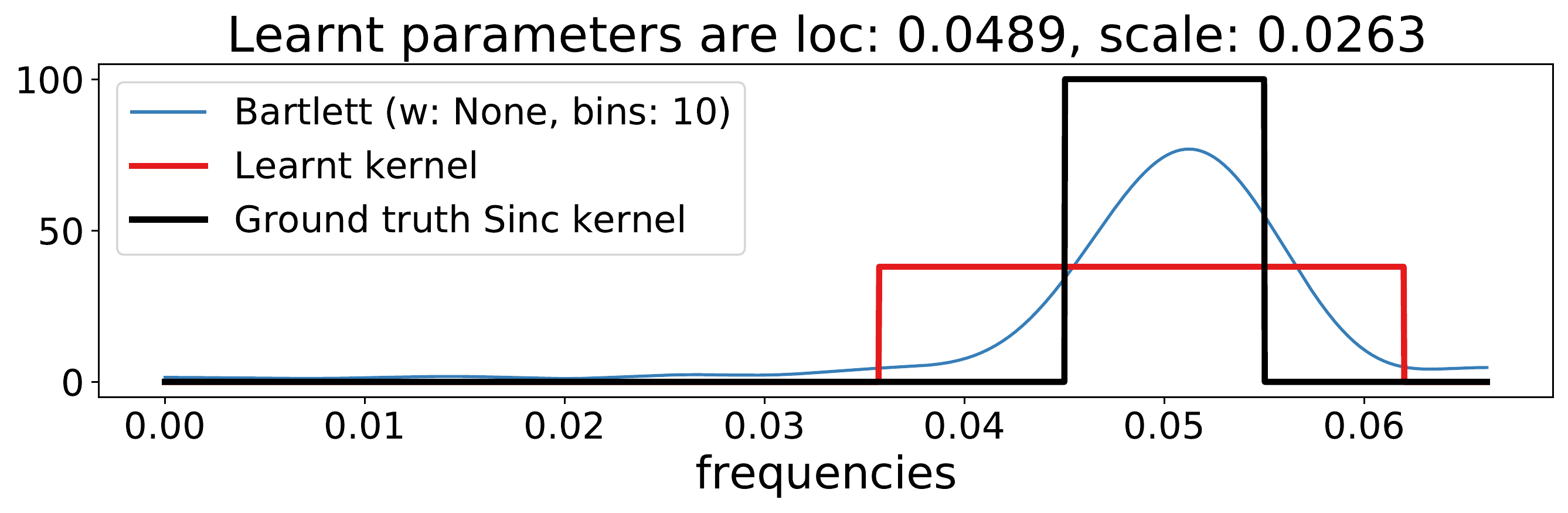}
  }
  
  \hspace{0em}\mbox{ 
  \includegraphics[width=0.5\textwidth]{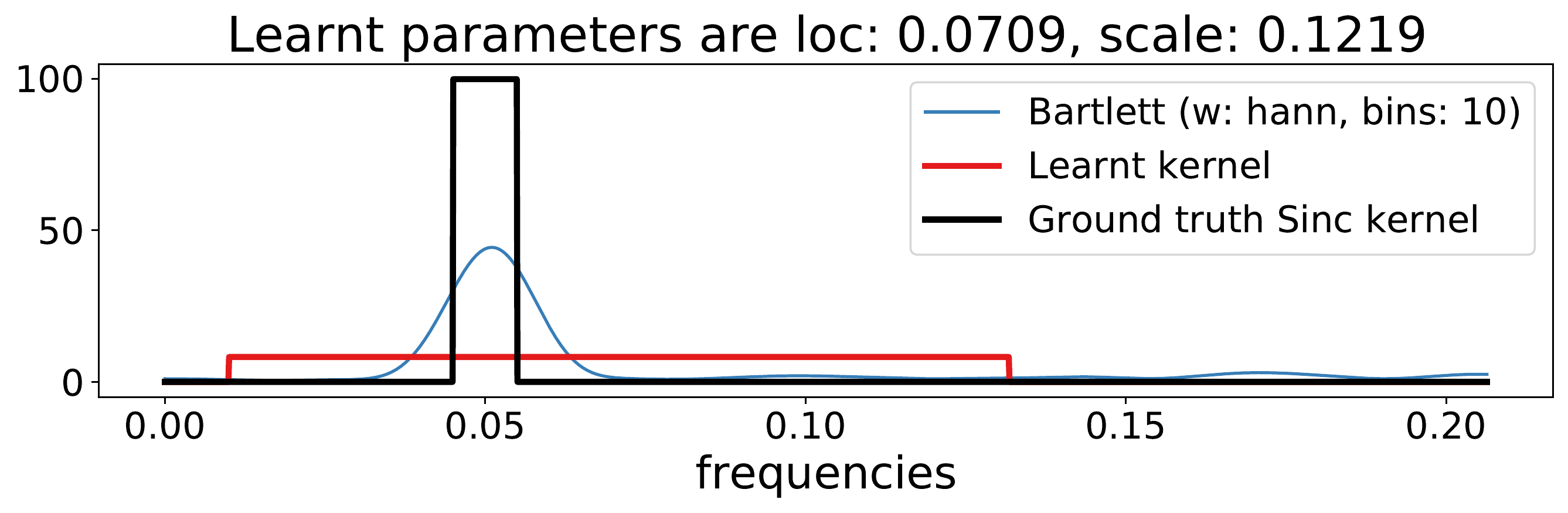}
  \includegraphics[width=0.5\textwidth]{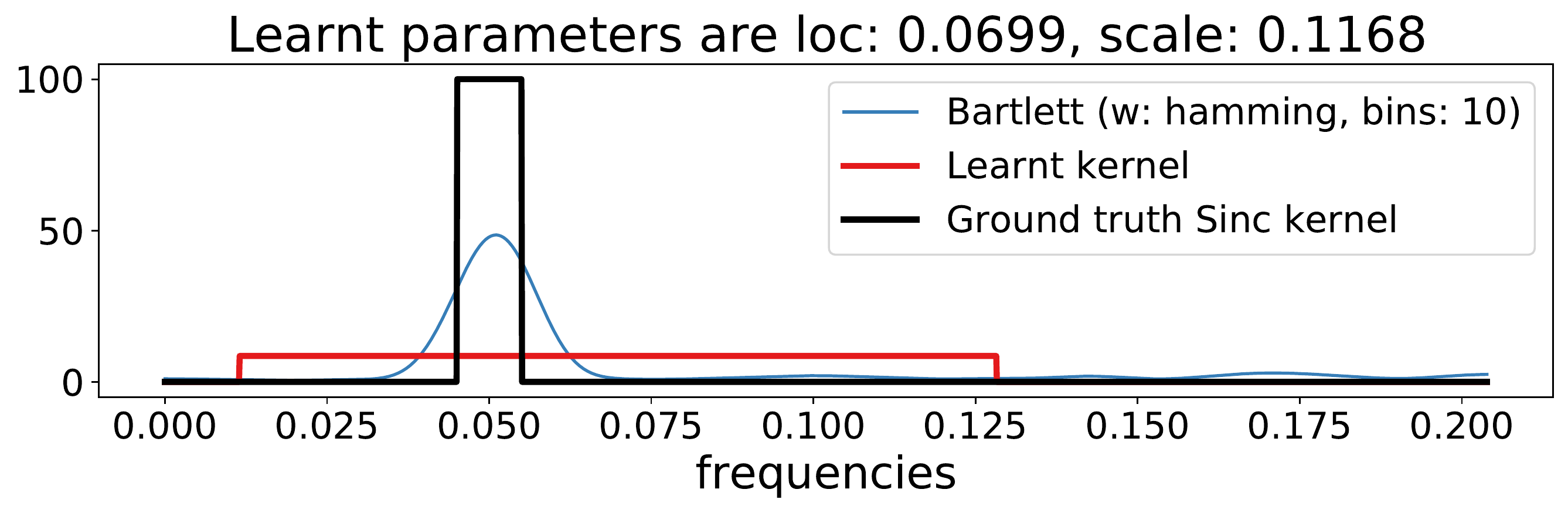}
  }

    \hspace{0em}\mbox{ 
  \includegraphics[width=0.5\textwidth]{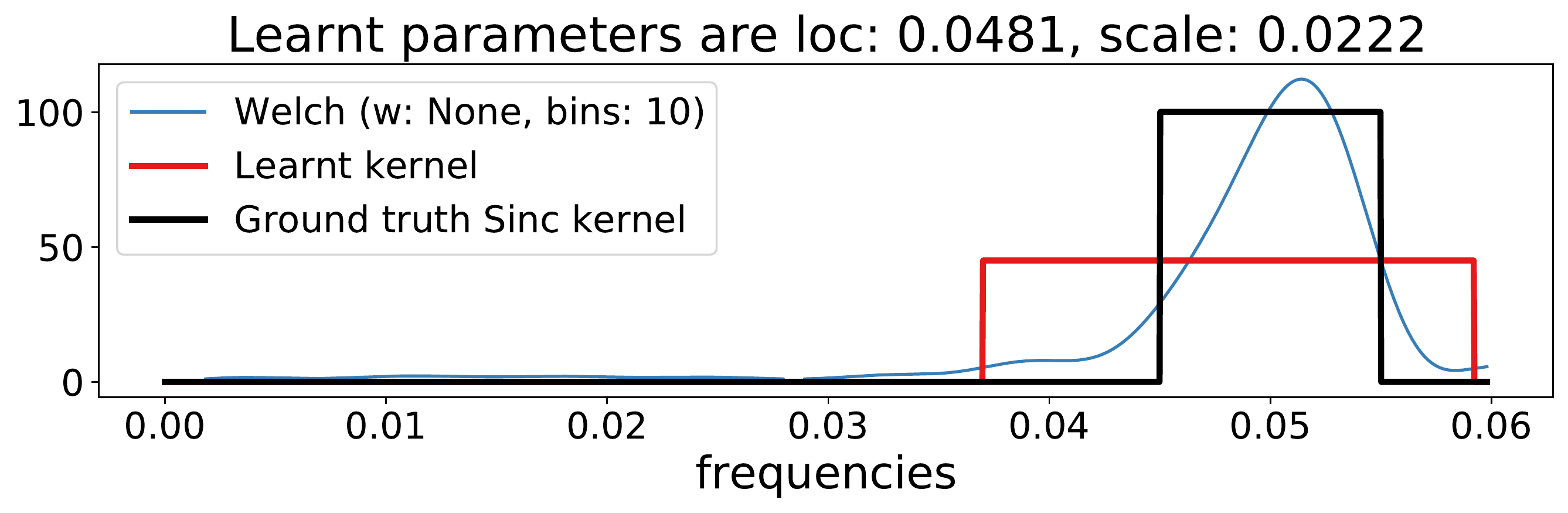}
  \includegraphics[width=0.5\textwidth]{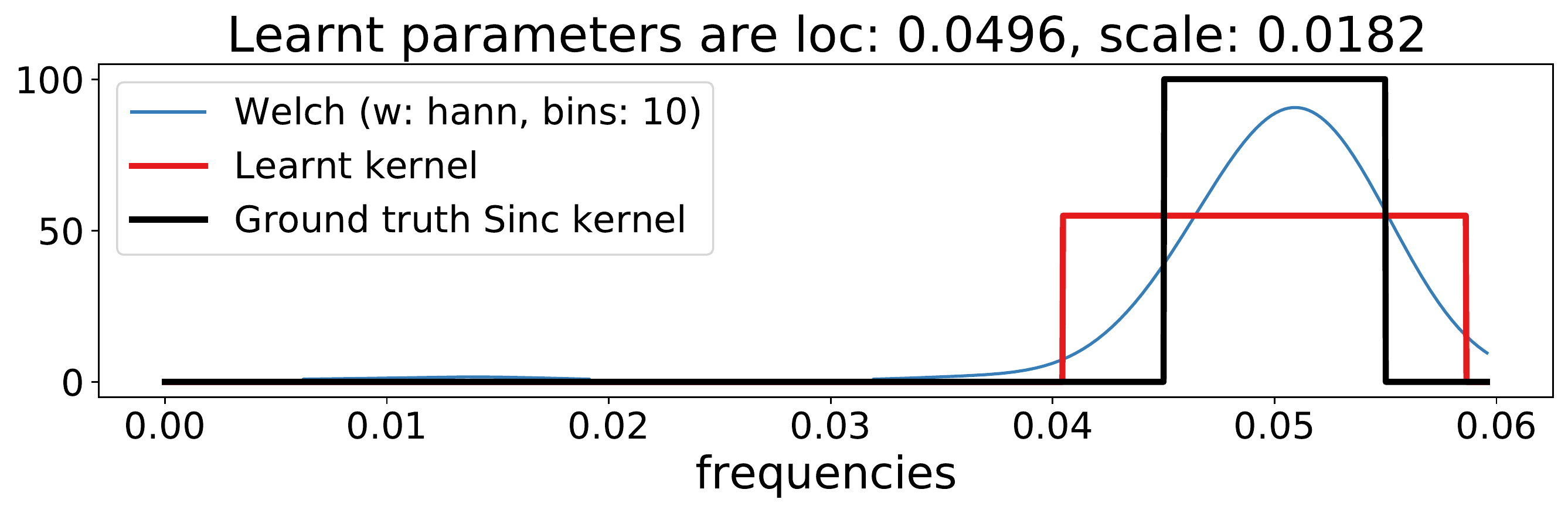}
  }

    \centering\mbox{ 
  \includegraphics[width=0.5\textwidth]{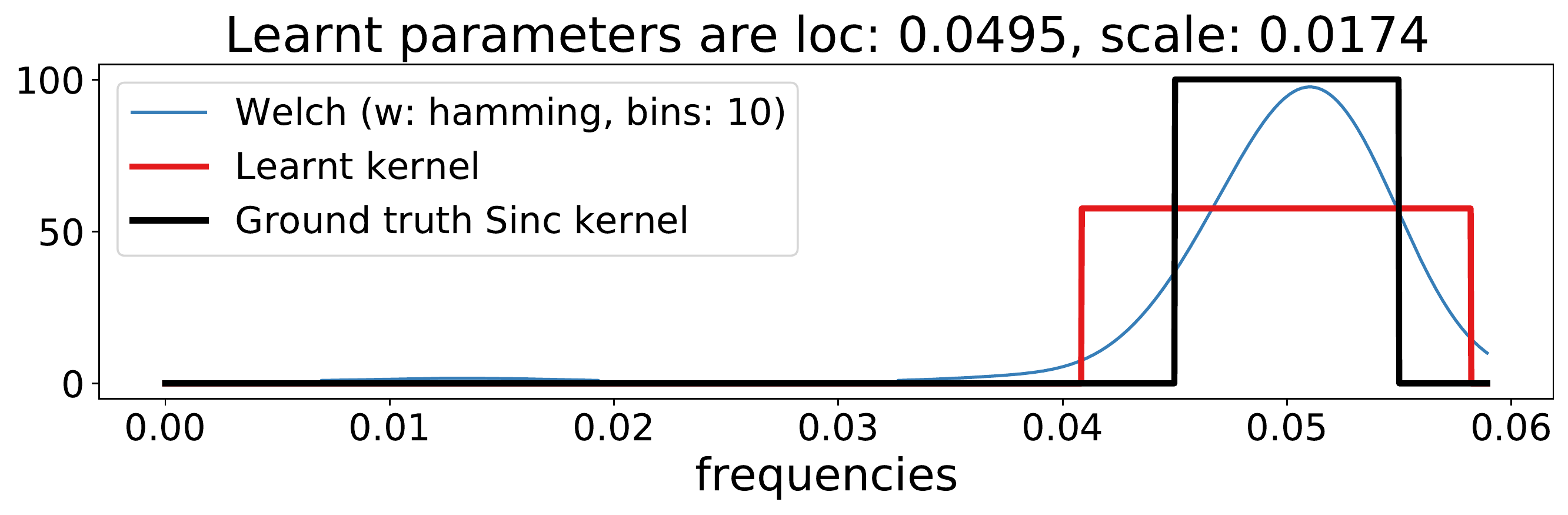}
  }
  \caption{Kernel: Sinc. Regime: unevenly sampled}
\end{figure}

\fi

\end{document}